\newcommand{\dregret}{\textup{D-Reg}}
\newcommand{\ddgap}{\textup{D-DGap}}
\newcommand{\KL}{\textup{KL}}
\newcommand{\term}[1]{(\theequation\text{#1})}
\newcommand{\termref}[2]{\text{Equation~(\hyperref[#1]{\ref{#1}#2})}}
\newcommand{\eqmath}[3][l]{\eqmakebox[#2][#1]{$\displaystyle\if#1l{}\fi#3$}}
\newtheorem{assumption}[theorem]{Assumption}
\crefname{assumption}{Assumption}{Assumptions}
\begin{document}

\title{A Modular Algorithm for Non-Stationary Online Convex-Concave Optimization}

\author{\name Qing-xin Meng\,\orcidlink{0000-0003-4014-7405} \email qingxin6174@gmail.com \\
       \addr College of Artificial Intelligence\\
       China University of Petroleum, Beijing\\
       Beijing, 102249, China
       \AND
       \name Xia Lei\,\orcidlink{0009-0007-6287-9793} \email leixia@cuc.edu.cn \\
       \addr State Key Laboratory of Media Convergence and Communication\\
       Communication University of China\\
       Beijing, 100024, China
       \AND
       \name Jian-wei Liu \email liujw@cup.edu.cn \\
       \addr College of Artificial Intelligence\\
       China University of Petroleum, Beijing\\
       Beijing, 102249, China}


\maketitle

\begin{abstract}
This paper investigates the problem of Online Convex-Concave Optimization, which extends Online Convex Optimization to two-player time-varying convex-concave games. The goal is to minimize the dynamic duality gap~(D-DGap), a critical performance measure that evaluates players' strategies against arbitrary comparator sequences. Existing algorithms fail to deliver optimal performance, particularly in stationary or predictable environments. To address this, we propose a novel modular algorithm with three core components: an Adaptive Module that dynamically adjusts to varying levels of non-stationarity, a Multi-Predictor Aggregator that identifies the best predictor among multiple candidates, and an Integration Module that effectively combines their strengths. Our algorithm achieves a minimax optimal D-DGap upper bound, up to a logarithmic factor, while also ensuring prediction error-driven D-DGap bounds. The modular design allows for the seamless replacement of components that regulate adaptability to dynamic environments, as well as the incorporation of components that integrate ``side knowledge'' from multiple predictors. Empirical results further demonstrate the effectiveness and adaptability of the proposed method. 
\end{abstract}

\begin{keywords}
non-stationary online learning, online convex-concave optimization, dynamic duality gap, modular algorithm, interdependent update
\end{keywords}

\section{Introduction}

Online Convex Optimization~\citep[OCO,][]{zinkevich2003online} provides a powerful framework for addressing dynamic challenges across a variety of real-world applications, including online learning~\citep{shwartz2012online}, resource allocation~\citep{chen2017online}, computational finance~\citep{GUO2021Adaptive}, and online ranking~\citep{Chaudhuri2017online}. It models repeated interactions between a player and the environment, where at each round $t$, the player selects $x_t$ from a convex set $X$, and the environment subsequently reveals a convex loss function $\ell_t$. The goal is to minimize dynamic regret, defined as the difference between the cumulative loss incurred by the player and that of an arbitrary comparator sequence:
\begin{equation*}
\dregret\,(u_{1:T})\coloneq\sum_{t=1}^T \ell_t\left(x_t\right)-\sum_{t=1}^T \ell_t\left(u_t\right), \qquad\forall u_t\in X. 
\end{equation*}
The minimax optimal D-Reg bound for OCO is known to be $O(\sqrt{(1 + P_T)T})$, where $P_T$ represents the path length of the comparator sequence \citep{zhang2018adaptive}. Achieving this bound typically relies on the meta-expert framework, which consists of a two-layer structure: the inner layer incorporates multiple experts, each operating a base algorithm with distinct learning rates, while the outer layer aggregates the experts' advice via weighted decision-making. \citet{zhang2018adaptive} introduced the ADER algorithm within this framework, which achieves the minimax optimal bound up to logarithmic factors.
Moreover, certain ADER-like algorithms, incorporating implicit updates \citep{Campolongo2021closer} or optimistic strategies \citep{Scroccaro2023Adaptive}, can further reduce the D-Reg bound to $O(1)$ in stationary environments or in non-stationary environments with perfect predictability. 

Online Convex-Concave Optimization~(OCCO) extends OCO by introducing two players interacting in a sequence of time-varying convex-concave games. At round $t$, the two players jointly select a strategy pair $(x_t, y_t)$ from a convex feasible set $X \times Y$, with the $x$-player minimizing and the $y$-player maximizing their respective payoffs, followed by the environment revealing a continuous convex-concave payoff function $f_t$. Both players act without prior knowledge of the current or future payoff functions. Targeting a broad spectrum of non-stationary levels, we introduce the \emph{dynamic duality gap}~(D-DGap) as the performance metric, comparing the players' strategies with an arbitrary comparator sequence in hindsight: 
\begin{equation}
\label{def:D-DGap}
\begin{aligned}
\ddgap\,(u_{1:T},v_{1:T})\coloneq\sum_{t=1}^T \Bigl(f_t\left(x_t, v_t\right)-f_t\left(u_t, y_t\right)\Bigr),\qquad \forall (u_t,v_t)\in X\times Y.
\end{aligned}
\end{equation}
Here, the D-DGap not only generalizes D-Reg from the OCO setting but also extends the classical duality gap from static convex-concave games by benchmarking performance against arbitrary comparator sequences $\{(u_t,v_t)\}_{t=1}^T$ instead of fixed worst-case comparators at each round. This flexibility allows D-DGap to capture various levels of non-stationarity metrics, from static individual regret to classical duality gap. 

The primary challenge of OCCO lies in maintaining a low D-DGap while adapting to dynamic environmental changes. To address this, we propose a modular algorithm composed of three key components: the Adaptive Module, the Multi-Predictor Aggregator, and the Integration Module. Each module plays a distinct role: 
\begin{itemize}
\item \emph{Adaptive Module:} Designed to handle varying levels of non-stationarity, this module ensures a minimax optimal D-DGap upper bound of $\widetilde{O}(\sqrt{(1+P_T)T})$. It accomplishes this by running a pair of ADER or ADER-like algorithms, which approximate the minimax optimal D-Reg. 
\item \emph{Multi-Predictor Aggregator:} This module improves decision-making by dynamically selecting the most accurate predictor. In stationary environments or non-stationary settings with perfect predictions, it guarantees a sharp $\widetilde{O}(1)$ D-DGap upper bound. This is achieved via the clipped Hedge algorithm.  
\item \emph{Integration Module:} This module unifies the Adaptive Module and the Multi-Predictor Aggregator, allowing the final strategy to adapt to a broad range of non-stationary levels while effectively tracking the best predictor. A distinctive feature of this module is its interdependent update mechanism, where the prediction-error expert and the meta-algorithm are coupled. 
\end{itemize}
Our modular design enables the interchangeable use of components that adjust to dynamic environments and the integration of modules that incorporate ``side knowledge'' from multiple predictors. 
\cref{fig:Structural} illustrates this architecture. 
Given $d$ available predictors, our algorithm guarantees:  
\begin{equation*}
\begin{aligned}
\ddgap\,(u_{1:T},v_{1:T})\leq\widetilde{O}\left(\min\left\{ V^1_T,\ \cdots,\ V^d_T,\ \sqrt{(1+P_T)T},\ \sqrt{(1+C_T)T}\right\}\right),
\end{aligned}
\end{equation*}
where $V^k_T$ quantifies the prediction error of the $k$-th predictor, $C_T$ provides an upper bound for $P_T$. 
This result not only approximates the minimax optimal D-DGap upper bound but also achieves bounds based on prediction error, with any potential improvements constrained to at most a logarithmic factor.

\begin{figure}[tb]
\centering
\resizebox{0.93\linewidth}{!}{
\footnotesize
\tikzstyle{block} = [draw, rectangle, fill=blue!4, rounded corners, minimum height=2em, minimum width=10em, align=center]
\tikzstyle{bigblock} = [draw, rectangle, fill=none, rounded corners, minimum height=2.5em, minimum width=10em, align=center]
\tikzstyle{sum} = [draw, circle, fill=none, minimum height=1em, minimum width=1em, align=center,inner sep=1.6pt]
\tikzstyle{output} = [draw=none, fill=none,minimum height=1em, minimum width=3em, align=center]
\tikzstyle{input} = [draw=none, fill=none,minimum height=1em, minimum width=1em, align=right]
\tikzstyle{point} = [coordinate]
\hspace*{-2.3em}
\begin{tikzpicture}[auto,>=latex',line/.style={-Stealth,thick}]
\node [block] (algorithm) {~Prediction-Error Expert~};
\node [sum, right=9em of algorithm] (sum) {$\sum$};
\draw [thick,line] (algorithm) -- node[midway, above] {$(\widehat{x}_t,\widehat{y}_t)$} (sum);
\node [block, below=2em of sum] (meta) {Meta-Algorithm};
\draw [thick,line] (meta) -- node[midway, left] {$(\boldsymbol{w}_t,\boldsymbol{\omega}_t)$} (sum);
\draw [densely dotted,thick] (algorithm) -- node[midway, below left =0em and -1em] {$\begin{aligned}\textup{interdependent}\\[-2pt]\textup{update}\end{aligned}$} (meta);
\node [output, right=14em of sum] (output) {Output};
\draw [line] (sum) -- node[midway, above] {$x_t = [\widehat{x}_{t}, \overline{x}_t]\boldsymbol{w}_t$}  node[midway, below] {$y_t = [\widehat{y}_{t}, \overline{y}_t]\boldsymbol{\omega}_t$} (output);
\node [bigblock, above =3em of algorithm] (adaptive) {Adaptive Module: \\~~~~run a pair of ADER or~~~~\\ADER-like algorithms};
\node [point, right=2.5em of adaptive] (point) {};
\draw [thick] (adaptive) -- (point);
\draw [thick,line] (point) -- node[midway, above left=1.8em and -1.2em] {$(\overline{x}_t,\overline{y}_t)$} (sum);
\node [bigblock, below =8em of algorithm] (aggregator) {~Multi-Predictor Aggregator~};
\draw [-Stealth] (aggregator) -- node[midway, below left=2em and 0em ] {weighted average predictor $h_t$} (algorithm);
\node [input, right=2.5em of aggregator] (input) {Predictors $\{h_t^1, h_t^2,\cdots,h_t^d\}$};
\draw [line] (input) -- (aggregator);
\draw [draw=black,fill=none, rounded corners] (-2.15,-1.9) rectangle ++(11.3,2.6);
\node [input, below left =-0.3em and 1em of algorithm] (label) {$\begin{aligned}\textup{Integration}\\[-2pt]\textup{Module:}\end{aligned}$};
\end{tikzpicture}
}
\caption{Structural Diagram of Our Modular Algorithm.}
\label{fig:Structural}
\end{figure}
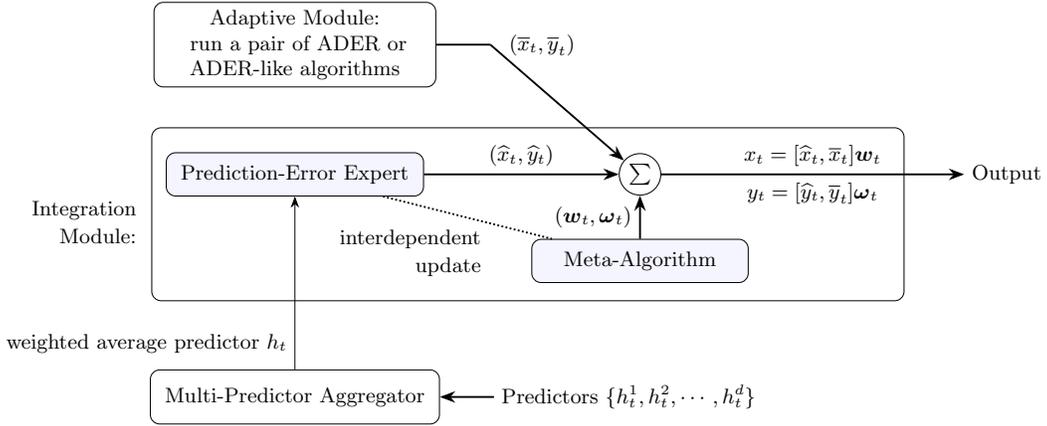

\cref{app:Experiments} provides experimental validation of our algorithm's effectiveness.

\textbf{Technical Challenges} 
The minimax optimal D-DGap bound is $O\bigl(\sqrt{(1+P_T)T}\bigr)$, where $P_T = \sum_{t=1}^T \left( \lVert u_t - u_{t-1} \rVert + \lVert v_t - v_{t-1} \rVert \right)$ is the path length of the comparator sequence (refer to Proposition~\ref{thm:lower-bound}). Approximating this bound requires each player to apply an ADER or ADER-like algorithm, which naturally ensures minimax optimality. However, in favorable scenarios such as stationary or predictable environments, we aim to further tighten the D-DGap beyond this minimax bound. Applying implicit or optimistic methods to achieve this goal introduces structural challenges. For instance, implementing ADER-like algorithms with optimistic implicit online mirror descent as the base algorithm requires using predictors $h_t(\,\cdot\,, y_t)$ for the $x$-player and $-h_t(x_t, \,\cdot\,)$ for the $y$-player. This creates a contradiction, as $(x_t, y_t)$ is computed based on the predictor $h_t$.  

To address this, we encapsulate the ADER pair into an Adaptive Module, treating it as one expert within the Integration Module, specifically designed to ensure adaptability to arbitrary comparator sequences. Additionally, we design another expert dedicated to generating a prediction error-based D-DGap. A meta-algorithm is then used to combine the strengths of both experts. Unlike traditional meta-expert frameworks, the integration module introduces an interdependent update mechanism to ensure coordinated updates between the prediction-error expert and the meta-algorithm.

\textbf{Related Work} 
D-Reg was first introduced by \citet{zinkevich2003online}, who demonstrated that greedy projection achieves a D-Reg upper bound of $O\big((1 + P_T)\sqrt{T}\big)$. To approximate the minimax optimal D-Reg of $O\big(\sqrt{(1 + P_T)T}\big)$, \citet{zhang2018adaptive} developed the ADER algorithm, which utilizes the meta-expert framework --- a two-layer structure employing multiple learning rates, as illustrated in MetaGrad~\citep{erven2016metagrad}. Since the introduction of ADER, the meta-expert framework has effectively addressed various levels of non-stationarity~\citep{lu2019adaptive, zhao2020dynamic, zhang2020online, zhang2021revisiting, zhao2021adaptivity, zhang2022simple, zhao2022efficient, lu2023nonstationary}. 
To further reduce D-Reg, \citet{Campolongo2021closer} implemented implicit updates, resulting in a D-Reg upper bound driven by the temporal variability of loss functions. Subsequently, \citet{Scroccaro2023Adaptive} refined this approach by establishing a predictor error-based D-Reg bound using optimistic implicit updates.

OCCO represents a time-varying extension of the minimax problem, which was first introduced by \cite{vNeumann1928Zur}. The seminal work of \cite{FREUND199979} connected the minimax problem to online learning, sparking interest in no-regret algorithms for static environments~\cite{Anagnostides2022near,DASKALAKIS2015327,daskalakis2021nearoptimal,nguyen2019exploiting,Syrgkanis2015fast}. Recent research has broadened this focus to time-varying games~\cite{anagnostides2023on,fiez2021online,Roy2019online}, with \cite{cardoso2018online} being the first to explicitly investigate OCCO and introduce the concept of saddle-point regret, later redefined as Nash equilibrium regret~\cite{cardoso2019competing}. \cite{zhang2022noregret} further refined the concept of dynamic Nash equilibrium regret and proposed a parameter-free algorithm that guarantees upper bounds for three metrics: static individual regret, duality gap, and dynamic Nash equilibrium regret. 
More recently, \cite{meng2025proximal} highlighted potential limitations in relying on dynamic Nash equilibrium regret as a performance metric.

This paper extends implicit updates, optimistic techniques, and meta-expert frameworks --- primarily applied in OCO --- to OCCO. Our algorithm introduces a modular design with an interdependent update mechanism. In contrast to \citet{zhang2022noregret}, which optimizes separate metrics without ensuring their tightness, our approach unifies these measures under D-DGap and provides a rigorous tightness guarantee.

\section{Preliminaries}

Let $\mathscr{X}$ and $\mathscr{Y}$ be finite-dimensional Euclidean spaces. 
The Fenchel coupling~\citep{Mertikopoulos2016Learning,2016arXiv160807310M} induced by a proper function $\varphi$ is defined as 
$B_{\varphi}\left(x, z\right)\coloneq \varphi\left(x\right) + \varphi^\star\left(z\right) - \left\langle z, x \right\rangle$, $\forall (x, z) \in \mathscr{X} \times \mathscr{X}^*$, 
where $\varphi^\star$ represents the convex conjugate of $\varphi$, given by
$\varphi^\star\left(z\right)\coloneq \sup_{x\in \mathscr{X}} \left\{\langle z, x \rangle - \varphi\left(x\right) \right\}$, 
and the bilinear map $\langle\,\cdot\,,\cdot\,\rangle\colon \mathscr{X}^*\times \mathscr{X}\rightarrow\mathbb{R}$ denotes the canonical dual pairing. Here, $\mathscr{X}^*$ is the dual space of $\mathscr{X}$. 
Fenchel coupling extends the concept of Bregman divergence to more complex primal-dual settings.  According to the Fenchel-Young inequality, we have $B_{\varphi}\left(x, z\right)\geq 0$, with equality holding if and only if $z$ is a subgradient of $\varphi$ at $x$. To simplify notation, we use $x^\varphi$ to denote one such subgradient of $\varphi$ at $x$. By directly applying the definition of Fenchel coupling, we obtain 
$B_{\varphi}\left(x, y^\varphi\right) + B_{\varphi}\left(y, z\right) - B_{\varphi}\left(x, z\right) = \left\langle z - y^\varphi, x - y \right\rangle$. 
%
A function $\varphi$ is called $\mu$-strongly convex if 
$B_{\varphi}\left(x, y^\varphi\right)\geq\frac{\mu}{2}\left\lVert x-y\right\rVert^2$, $\forall x, y\in\mathscr{X}$.

The standard simplex refers to the set of all non-negative vectors that sum to 1, defined as 
$\bigtriangleup_{d} \coloneq \{\boldsymbol{w} \in \mathbb{R}_+^d \mid \lVert \boldsymbol{w} \rVert_1 = 1\}$. 
The clipped version modifies this by restricting the elements of $\boldsymbol{w}$ to lie within a predefined range, resulting in 
$\bigtriangleup_d^\alpha \coloneq \{\boldsymbol{w} \in \mathbb{R}_+^d \mid \lVert \boldsymbol{w} \rVert_1 = 1,\ w^i \geq \alpha/d,\ \forall i = 1,2,\cdots,d\}$, 
where $\alpha$ represents the clipping coefficient. 
The Kullback-Leibler (KL) divergence can be viewed as a specific case of Fenchel coupling, induced by the negative entropy, a 1-strongly convex function. As a result, we have the inequality 
$\KL(\boldsymbol{w}, \boldsymbol{u}) \geq \frac{1}{2}\left\lVert \boldsymbol{w}-\boldsymbol{u} \right\rVert_1^2$, $\forall \boldsymbol{w}, \boldsymbol{u}\in\bigtriangleup_{d}$. 

We use big $O$ notation for asymptotic upper bounds and $\widetilde{O}$ to omit polylogarithmic terms.

\section{Main Results}

In this section, we first formalize the OCCO framework and outline assumptions. Subsequently, we analyze the Adaptive Module, the Integration Module, and the Multi-Predictor Aggregator in detail. Finally, we elucidate the logical structure of our algorithm and highlight its performance advantages. 

\subsection{Problem Formalization}

OCCO can be formalized as follows: At round $t$, 
\begin{itemize}[itemsep=0pt,topsep=4pt]
\item \emph{Actions:} $x$-player chooses $x_t\in X$ and $y$-player chooses $y_t\in Y$, where the feasible sets $0\in X\subset\mathscr{X}$ and $0\in Y\subset\mathscr{Y}$ are both compact and convex. 
\item \emph{Feedback:} The environment feeds back $f_t\colon X\times Y\rightarrow \mathbb{R}$, where $f_t$ is continuous and $f_t\left(\,\cdot\,,y\right)$ is convex in $X$ for every $y\in Y$ and $f_t\left(x, \,\cdot\,\right)$ is concave in $Y$ for every $x\in X$. 
\end{itemize}
The goal is to minimize D-DGap. 
Similar to previous studies in online learning, we introduce the following standard assumptions. 
\begin{assumption}
\label{ass:X-Y-bounded}
The diameter of $X$ is denoted as $D_X$, and the diameter of $\,Y$ is denoted as $D_Y$, that is, $\forall x, x'\in X$, $\forall y, y'\in Y$, the following inequalities hold: 
\begin{equation*}
\begin{aligned}
\lVert x-x'\rVert\leq D_X,\qquad
\lVert y-y'\rVert\leq D_Y.
\end{aligned}
\end{equation*}
\end{assumption}
\begin{assumption}
\label{ass:2-subgradient-bounded}
All payoff functions are bounded, and their subgradients are also bounded. Specifically, $\exists\,M$, $G_X$ and $G_Y$, such that $\forall x\in X$, $\forall y\in Y$ and $\forall t$, the following inequalities hold: 
\begin{equation*}
\begin{aligned}
\lvert f_t\left(x,y\right)\rvert\leq M,\qquad
\left\lVert \nabla_x f_t\left(x,y\right)\right\rVert\leq G_X,\qquad
\lVert \nabla_y (-f_t)\left(x,y\right)\rVert\leq G_Y.
\end{aligned}
\end{equation*}
\end{assumption}

\subsection{Adaptive Module}

Before introducing the Adaptive Module, we establish a lower bound for D-DGap, supported by the following proposition.

\begin{proposition}[D-DGap Lower Bound]
\label{thm:lower-bound}  
For any strategies adopted by the players, there exists a sequence of convex-concave payoff functions satisfying Assumption~\ref{ass:2-subgradient-bounded}, along with a comparator sequence whose path length is given by $P_T = \sum_{t=1}^T\big(\lVert u_t - u_{t-1}\rVert + \lVert v_t - v_{t-1}\rVert\big)$, such that $P_T \leq P$. Under these conditions, the resulting D-DGap is guaranteed to be at least $\mathit{\Omega}\big(\sqrt{(1+P)T}\big)$.
\end{proposition}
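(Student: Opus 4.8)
The plan is to reduce the D-DGap lower bound to the known dynamic-regret lower bound for OCO. Recall that for OCO there is a sequence of convex loss functions and a comparator sequence with path length at most $P$ forcing any player to suffer $\mathit{\Omega}\bigl(\sqrt{(1+P)T}\bigr)$ dynamic regret \citep{zhang2018adaptive}. I would embed such a hard OCO instance into an OCCO instance by choosing payoff functions that depend only on the $x$-coordinate, so that the convex-concave structure is preserved trivially (any function that is convex in $x$ and constant in $y$ is convex-concave), and the boundedness requirements of \cref{ass:2-subgradient-bounded} can be met by scaling the hard instance appropriately.

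Concretely, I would first recall the standard construction behind the OCO lower bound: take $X$ to be an interval (or a coordinate segment of $X$), and let $\ell_t(x) = \langle g_t, x\rangle$ with random sign gradients $g_t\in\{-G,+G\}$ chosen in blocks, so that a fixed comparator on each block of length $\sim T/(1+P)$ defeats the learner; the resulting comparator path length is $O(P)$ and the expected regret is $\mathit{\Omega}\bigl(\sqrt{(1+P)T}\bigr)$. Second, I would set $f_t(x,y) \coloneq \ell_t(x)$ for all $(x,y)\in X\times Y$; each such $f_t$ is continuous, convex in $x$, affine (hence concave) in $y$, and with the scaling of $G$ chosen so that $|f_t|\le M$ on the compact set and $\|\nabla_x f_t\|\le G_X$, $\|\nabla_y(-f_t)\| = 0 \le G_Y$, it satisfies \cref{ass:2-subgradient-bounded}. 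Third, I would pick the comparator sequence $(u_t,v_t)$ with $u_t$ equal to the OCO hard comparator and $v_t$ arbitrary fixed (say $v_t\equiv 0$), so its path length is $P_T = \sum_t\|u_t-u_{t-1}\| \le P$. Fourth, observe that with this choice
\begin{equation*}
\ddgap\,(u_{1:T},v_{1:T}) = \sum_{t=1}^T\bigl(f_t(x_t,v_t)-f_t(u_t,y_t)\bigr) = \sum_{t=1}^T\bigl(\ell_t(x_t)-\ell_t(u_t)\bigr) = \dregret\,(u_{1:T}),
\end{equation*}
so the D-DGap of the OCCO player (whose $x_t$ is determined before seeing $f_t$, exactly as in OCO) coincides with the dynamic regret of the induced OCO player. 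Applying the OCO lower bound then yields $\ddgap \ge \mathit{\Omega}\bigl(\sqrt{(1+P)T}\bigr)$, and a standard averaging/probabilistic argument turns the expectation bound into the existence of a deterministic bad sequence.

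The main obstacle is not conceptual but bookkeeping: I must make sure the reduction is faithful to the \emph{information structure} of OCCO — the $y$-player's choice $y_t$ genuinely does not influence the value $f_t(x_t,v_t)-f_t(u_t,y_t)$ because $f_t$ is constant in $y$, so no adversarial coupling between the two players is needed, and the lower bound for the $x$-side alone suffices. I also need to verify that the scaling constants can be chosen so that \emph{all three} bounds in \cref{ass:2-subgradient-bounded} hold simultaneously on the fixed compact sets $X,Y$ with the prescribed diameters, which is routine since linear functions on a bounded set have bounded value and bounded gradient. One minor subtlety: if one wishes the lower bound to hold for the literal constants $M,G_X,G_Y$ appearing in the assumption rather than just "some constants," the interval $X$ in the embedded instance should be taken as a segment realizing the diameter $D_X$, and the gradient magnitude set to $\min\{G_X, M/D_X\}$ — this only changes the hidden constant in the $\mathit{\Omega}$, not the rate.
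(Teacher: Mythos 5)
Your proposal is correct and follows essentially the same route as the paper: both reduce the OCCO lower bound to Theorem~2 of \citet{zhang2018adaptive} by choosing separable payoff functions so that the D-DGap collapses into OCO dynamic regret. The only (immaterial) difference is that the paper uses $f_t(x,y)=\alpha_t(x)-\beta_t(y)$ and splits the path-length budget $P$ between the two players, summing two OCO lower bounds, whereas you take the degenerate case $\beta_t\equiv 0$, $v_t\equiv 0$ and place the entire budget on the $x$-player; both yield the stated $\mathit{\Omega}\big(\sqrt{(1+P)T}\big)$ rate.
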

\begin{proof}\emph{(Proof Sketch of Proposition~\ref{thm:lower-bound})}
The D-DGap is composed of the sum of two individual D-Regs. According to Theorem~2 of \citet{zhang2018adaptive}, in adversarial environments, no online algorithm can bound the individual D-Regs below $\mathit{\Omega}\big(\sqrt{(1 + P^u)T}\big)$ and $\mathit{\Omega}\big(\sqrt{(1 + P^v)T}\big)$, respectively, where $P^u \geq \sum_{t=1}^T\lVert u_t - u_{t-1}\rVert$ and $P^v \geq \sum_{t=1}^T\lVert v_t - v_{t-1}\rVert$. This implies that the D-DGap lower bound cannot be less than the sum of these two bounds. 
\end{proof}

To adapt to varying levels of non-stationarity and approach the D-DGap lower bound, one can utilize a pair of ADER algorithms~\citep{zhang2018adaptive} or ADER-like algorithms that replace the base algorithm with implicit updates~\citep{Campolongo2021closer} or incorporate optimistic strategies~\citep{Scroccaro2023Adaptive}. These methods align with the meta-expert framework. The following proposition demonstrates the performance guarantee when decomposing the OCCO problem into two OCO problems and running two independent ADER or ADER-like algorithms. 

\begin{proposition}[Performance for the Adaptive Module]
\label{prop:Adaptive}
Consider running two independent ADER or ADER-like algorithms, each designed to approximate the minimax optimal D-Reg. In round $t$, one algorithm outputs $\overline{x}_t$ and receives a convex loss function $f_t(\,\cdot\,, y_t)$, while the other produces $\overline{y}_t$ and receives $-f_t(x_t,\,\cdot\,)$. Under Assumptions~\ref{ass:X-Y-bounded} and~\ref{ass:2-subgradient-bounded}, we have that $\forall(u_t, v_t) \in X \times Y$:
\begin{equation*}
\begin{aligned}
\sum_{t=1}^T \Bigl(f_t(x_t, v_t) - f_t(x_t, \overline{y}_t)\Bigr) 
+ \sum_{t=1}^T \Bigl(f_t(\overline{x}_t, y_t) - f_t(u_t, y_t)\Bigr) 
\leq \widetilde{O}\big(\sqrt{(1 + \min\{P_T, C_T\})T}\big),
\end{aligned}
\end{equation*}
where $C_T = \sum_{t=1}^T\big(\left\lVert x'_t - x'_{t-1}\right\rVert + \left\lVert y'_t - y'_{t-1}\right\rVert\big)$ serves as the effective upper threshold of $P_T$, $x'_t = \arg\min_{x \in X} f_t(x, y_t)$, and $y'_t = \arg\max_{y \in Y} f_t(x_t, y)$.
\end{proposition}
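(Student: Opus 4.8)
The plan is to recognize the left-hand side as the sum of two ordinary OCO dynamic regrets --- one for each player's ADER subroutine --- and then to invoke the ADER dynamic-regret guarantee twice, with two different comparator sequences, so as to obtain the $P_T$ branch and the $C_T$ branch of the bound separately. First I would rewrite the two sums. The $y$-player's subroutine faces the loss sequence $g_t(\,\cdot\,)\coloneq -f_t(x_t,\,\cdot\,)$, which is convex on $Y$ since $f_t(x_t,\,\cdot\,)$ is concave, and plays $\overline{y}_t$; hence its dynamic regret against any comparator sequence $v_{1:T}\in Y^T$ equals $\sum_{t=1}^T\bigl(g_t(\overline{y}_t)-g_t(v_t)\bigr)=\sum_{t=1}^T\bigl(f_t(x_t,v_t)-f_t(x_t,\overline{y}_t)\bigr)$, which is exactly the first sum. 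Symmetrically, the $x$-player's subroutine faces the convex loss $h_t(\,\cdot\,)\coloneq f_t(\,\cdot\,,y_t)$, plays $\overline{x}_t$, and its dynamic regret against $u_{1:T}\in X^T$ is the second sum. Assumption~\ref{ass:X-Y-bounded} supplies the bounded feasible sets and Assumption~\ref{ass:2-subgradient-bounded} supplies the (sub)gradient bounds $G_X,G_Y$ for $\nabla h_t=\nabla_x f_t(\,\cdot\,,y_t)$ and $\nabla g_t=\nabla_y(-f_t)(x_t,\,\cdot\,)$, so both subroutines operate in exactly the setting for which ADER --- and its implicit or optimistic variants --- is designed; the fact that each subroutine's losses may depend on the other player's iterate and on the rest of the algorithm is harmless, because the ADER bound is a worst-case bound over adversarial loss sequences.

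Next I would invoke the ADER dynamic-regret guarantee, which holds \emph{simultaneously} for all comparator sequences in the feasible set (its pool of $O(\log T)$ step sizes covers every path length up to the trivial maxima $TD_X$ and $TD_Y$), specializing it to two choices of comparators. Feeding $u_{1:T}$ to the $x$-subroutine and $v_{1:T}$ to the $y$-subroutine bounds the second sum by $\widetilde{O}\bigl(\sqrt{(1+P^u_T)T}\bigr)$ and the first by $\widetilde{O}\bigl(\sqrt{(1+P^v_T)T}\bigr)$, where $P^u_T=\sum_t\lVert u_t-u_{t-1}\rVert$ and $P^v_T=\sum_t\lVert v_t-v_{t-1}\rVert$ satisfy $P^u_T+P^v_T=P_T$. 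For the other branch I would instead feed each subroutine its per-round saddle response: $y'_{1:T}$ to the $y$-subroutine and $x'_{1:T}$ to the $x$-subroutine. Because $x'_t$ minimizes $f_t(\,\cdot\,,y_t)$ and $y'_t$ maximizes $f_t(x_t,\,\cdot\,)$, we have $f_t(x'_t,y_t)\le f_t(u_t,y_t)$ and $f_t(x_t,v_t)\le f_t(x_t,y'_t)$ for every $t$, so the second sum is $\le\sum_t\bigl(f_t(\overline{x}_t,y_t)-f_t(x'_t,y_t)\bigr)\le\widetilde{O}\bigl(\sqrt{(1+\sum_t\lVert x'_t-x'_{t-1}\rVert)T}\bigr)$ and the first sum is $\le\sum_t\bigl(f_t(x_t,y'_t)-f_t(x_t,\overline{y}_t)\bigr)\le\widetilde{O}\bigl(\sqrt{(1+\sum_t\lVert y'_t-y'_{t-1}\rVert)T}\bigr)$. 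In each branch, adding the two regret bounds and using $\sqrt{a}+\sqrt{b}\le\sqrt{2(a+b)}$ collapses the pair into $\widetilde{O}\bigl(\sqrt{(1+P_T)T}\bigr)$ and $\widetilde{O}\bigl(\sqrt{(1+C_T)T}\bigr)$ respectively, the latter because $C_T=\sum_t(\lVert x'_t-x'_{t-1}\rVert+\lVert y'_t-y'_{t-1}\rVert)$; taking the smaller branch gives the claim.

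The genuinely delicate point is the $C_T$ branch. There one has to be sure that the per-round responses $x'_t,y'_t$ are well defined --- which follows from compactness of $X,Y$ and continuity of $f_t$ --- that the ADER guarantee is truly uniform over comparator sequences so that $x'_{1:T}$ and $y'_{1:T}$ are admissible comparators, and, crucially, that replacing the true comparators $u_t,v_t$ by these per-round optima can only enlarge each sum, which is precisely what the inequalities $f_t(x'_t,y_t)\le f_t(u_t,y_t)$ and $f_t(x_t,v_t)\le f_t(x_t,y'_t)$ express. Everything else is the black-box ADER bound applied once per player, together with the elementary inequality $\sqrt{a}+\sqrt{b}\le\sqrt{2(a+b)}$.
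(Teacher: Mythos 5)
Your proposal is correct and follows essentially the same route as the paper's proof: decompose the left-hand side into the two players' ADER dynamic regrets, apply the ADER guarantee once with the given comparators $u_{1:T},v_{1:T}$ to get the $P_T$ branch, and once with the per-round best responses $x'_{1:T},y'_{1:T}$ (using $f_t(x'_t,y_t)\le f_t(u_t,y_t)$ and $f_t(x_t,v_t)\le f_t(x_t,y'_t)$) to get the $C_T$ branch. The only cosmetic difference is that you make the $\sqrt{a}+\sqrt{b}\le\sqrt{2(a+b)}$ combination step explicit, which the paper leaves implicit.
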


$P_T$ represents the path length of the comparator sequence, reflecting the assumed level of environmental non-stationarity. It ranges from $0$ to linear growth, allowing for various scenarios. In contrast, $C_T$ is a data-dependent measure that reflects the worst-case non-stationarity observed during the interactions between the players and the environment. It serves as an effective upper threshold for $P_T$, as shown by the inequality $f_t(x_t, v_t) - f_t(u_t, y_t) \leq f_t(x_t, y'_t) - f_t(x'_t, y_t)$. 
After $T$ rounds of the game, only those comparator sequences with path lengths satisfying $P_T \leq C_T$ are considered meaningful.  
Unlike single-player setups, where $C_T = \sum_{t=1}^T \lVert x^*_t - x^*_{t-1} \rVert$ (with $x_t^* = \arg\min_x \ell_t(x)$), which depends solely on the environment, in two-player settings, $C_T$ becomes algorithm-dependent. It captures the mutual influence of the strategies adopted by both players.

The two ADER or ADER-like algorithms outlined in Proposition~\ref{prop:Adaptive} operate independently, with each player's output influencing the other's loss function. This mutual dependence complicates further tightening of the D-DGap upper bound, particularly in favorable scenarios such as stationary or predictable environments. In the OCCO setting, two players can coordinate strategies to better adapt to environmental dynamics. Consequently, the outputs $\overline{x}_t$ and $\overline{y}_t$ from Proposition~\ref{prop:Adaptive} are not directly used as the final strategies. Instead, the method serves as an \emph{Adaptive Module}, capable of handling diverse levels of non-stationarity. In the next section, we explore how the two players can further collaborate to refine their strategies.

\subsection{Integration Module}

The objective of this section is to design an algorithm that not only 1) automatically adapts to arbitrary comparator sequences but also 2) guarantees a D-DGap upper bound based on prediction error.
To begin with, we first consider a simplified problem: how to achieve these two objectives separately. For the first objective, simply running a pair of ADER or ADER-like algorithms is sufficient. For the second objective, we need to explore the following updates:
\begin{equation}
\label{eq:optoppm}
\begin{aligned}
(x_{t},y_{t})&=\arg\min\nolimits_{x\in X}\max\nolimits_{y\in Y} \eta_t \gamma_t h_t(x,y)+\gamma_t B_{\phi}\big(x, \widetilde{x}_t^\phi\big)-\eta_t B_{\psi}\big(y, \widetilde{y}_t^\psi\big), \\
\widetilde{x}_{t+1}&=\arg\min\nolimits_{x\in X} \eta_t f_t(x,y_t)+B_{\phi}\big(x, \widetilde{x}_{t}^\phi\big), \\
\widetilde{y}_{t+1}&=\arg\max\nolimits_{y\in Y} \gamma_t f_t(x_t,y)-B_{\psi}\big(y, \widetilde{y}_{t}^\psi\big),
\end{aligned}
\end{equation}
where $h_t$ is an arbitrary convex-concave predictor, $\eta_t>0$ and $\gamma_t>0$ are learning rates. 
To facilitate our analysis, we assume that the regularizers $\phi$ and $\psi$ are both $1$-strongly convex and have Lipschitz-continuous gradients, and their Fenchel couplings satisfy Lipschitz continuity with respect to the first variable, i.e., $\exists L_\phi, L_\psi, L_{B_\phi}, L_{B_\psi} < +\infty$, $\forall \alpha, x, x' \in X$, $\forall \beta, y, y' \in Y$: 
\begin{equation*}
\begin{aligned}
\left\lVert \nabla\phi(x)-\nabla\phi(x')\right\rVert&\leq L_\phi\left\lVert x-x'\right\rVert,\quad&
\left\lvert B_\phi(x,\alpha^\phi)-B_\phi(x',\alpha^\phi)\right\rvert&\leq L_{B_\phi}\left\lVert x-x'\right\rVert, \\
\left\lVert \nabla\psi(y)-\nabla\psi(y')\right\rVert&\leq L_\psi\left\lVert y-y'\right\rVert,\quad&
\left\lvert B_\psi(y,\beta^\psi)-B_\psi(y',\beta^\psi)\right\rvert&\leq L_{B_\psi}\left\lVert y-y'\right\rVert.
\end{aligned}
\end{equation*}
These assumptions are consistent with previous literature~\citep{Campolongo2021closer,zhang2022noregret}. 

\cref{eq:optoppm} can be seen as an optimistic variant of the proximal point method or as the two-player optimistic counterpart of \citet{Campolongo2021closer}. The following lemma establishes its performance guarantee.
\begin{lemma}
\label{lem:OptOPPM}
Under Assumptions~\ref{ass:X-Y-bounded} and~\ref{ass:2-subgradient-bounded}, and let the predictor $h_t$ satisfy Assumption~\ref{ass:2-subgradient-bounded}. 
Suppose there exists $\lambda$ and $\mu$, such that $\lambda\geq\sum_{t=1}^T\left\lVert u_t - u_{t-1}\right\rVert$ and $\mu\geq\sum_{t=1}^T\left\lVert v_t - v_{t-1}\right\rVert$, then we may set learning rates as follows:
\begin{equation*}
\begin{aligned}
\eta_t&=\textstyle L_{B_\phi}(D_X+\lambda)\big/\bigl(\epsilon+\sum_{\tau=1}^{t-1}\nu_{\tau}^x\bigr), 
\qquad \gamma_t=L_{B_\psi}(D_Y+\mu)\big/\bigl(\epsilon+\sum_{\tau=1}^{t-1}\nu_{\tau}^y\bigr), \\
0\leq\nu_t^x&=f_t(x_t,y_t)-h_t(x_t,y_t)+h_t(\widetilde{x}_{t+1},y_t)-f_t(\widetilde{x}_{t+1}, y_t) -B_{\phi}\big(\widetilde{x}_{t+1}, x_t^\phi\big)/\eta_t,\\[-1pt]
0\leq\nu_t^y&=f_t(x_t,\widetilde{y}_{t+1})-h_t(x_t,\widetilde{y}_{t+1})+h_t(x_t,y_t)-f_t(x_t,y_t) -B_{\psi}\big(\widetilde{y}_{t+1}, y_t^\psi\big)/\gamma_t,
\end{aligned}
\end{equation*}
where $\epsilon > 0$ prevents initial learning rates from being infinite. 
As a result, \cref{eq:optoppm} achieves 
\begin{equation*}
\begin{aligned}
\ddgap\,(u_{1:T},v_{1:T})\leq O\left(\min\left\{\sum_{t=1}^{T}\rho\left(f_{t}, h_{t}\right),\, \sqrt{(1+\lambda+\mu)T}\right\}\right),
\end{aligned}
\end{equation*}
where $\rho(f_t,h_t)=\max_{x\in X,\,y\in Y}\left\lvert f_t(x,y)-h_t(x,y)\right\rvert$ measures the distance between $f_t$ and $h_t$, modeling the prediction error in round $t$. 
\end{lemma}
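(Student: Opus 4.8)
The plan is to read \eqref{eq:optoppm} as two coupled runs of optimistic implicit online mirror descent and bound each separately. Start from the (convexity-free) identity
\[
\ddgap\,(u_{1:T},v_{1:T})=\sum_{t=1}^{T}\bigl(f_t(x_t,v_t)-f_t(x_t,y_t)\bigr)+\sum_{t=1}^{T}\bigl(f_t(x_t,y_t)-f_t(u_t,y_t)\bigr).
\]
The second sum is exactly the dynamic regret, against the moving comparator $u_t$, of the ``$x$-block'' of \eqref{eq:optoppm}, i.e.\ the updates $x_t=\arg\min_{x\in X}\eta_t h_t(x,y_t)+B_\phi(x,\widetilde{x}_t^\phi)$ and $\widetilde{x}_{t+1}=\arg\min_{x\in X}\eta_t f_t(x,y_t)+B_\phi(x,\widetilde{x}_t^\phi)$, run with the (convex) loss $f_t(\,\cdot\,,y_t)$, the (convex) predictor $h_t(\,\cdot\,,y_t)$, learning rate $\eta_t$, and regularizer $\phi$; the first sum is the mirror image --- the dynamic regret against $v_t$ of the ``$y$-block'', which is the same scheme run with loss $-f_t(x_t,\,\cdot\,)$, predictor $-h_t(x_t,\,\cdot\,)$, learning rate $\gamma_t$, and regularizer $\psi$. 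By symmetry I would analyze the $x$-block and transcribe the conclusion for the $y$-block with $(\gamma_t,\psi,D_Y,G_Y,\mu)$ in place of $(\eta_t,\phi,D_X,G_X,\lambda)$.

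For one round of the $x$-block I would combine three standard ingredients: (i) the sharpened optimality inequality for a Bregman-regularized convex minimization, $G(z)\ge G(x_\star)+B_\phi(z,x_\star^\phi)$ for all $z\in X$, applied once to $G=\eta_t h_t(\,\cdot\,,y_t)+B_\phi(\,\cdot\,,\widetilde{x}_t^\phi)$ at $x_\star=x_t$ with $z=\widetilde{x}_{t+1}$, and once to $G=\eta_t f_t(\,\cdot\,,y_t)+B_\phi(\,\cdot\,,\widetilde{x}_t^\phi)$ at $x_\star=\widetilde{x}_{t+1}$ with $z=u_t$; (ii) the three-point identity for Fenchel couplings; and (iii) convexity of $h_t(\,\cdot\,,y_t)$ and $f_t(\,\cdot\,,y_t)$. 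Adding these, the terms $B_\phi(\widetilde{x}_{t+1},\widetilde{x}_t^\phi)$ cancel; after dropping the nonnegative $B_\phi(x_t,\widetilde{x}_t^\phi)$ and dividing by $\eta_t$ one lands on the telescoping-ready bound
\[
f_t(x_t,y_t)-f_t(u_t,y_t)\le \nu_t^x+\tfrac{1}{\eta_t}\bigl(B_\phi(u_t,\widetilde{x}_t^\phi)-B_\phi(u_t,\widetilde{x}_{t+1}^\phi)\bigr),
\]
with $\nu_t^x$ exactly the quantity in the statement. Adding the sharpened optimality inequality for $x_t$ to the plain one for $\widetilde{x}_{t+1}$ (or vice versa) shows $\nu_t^x\ge 0$; hence $\{\eta_t\}$ is non-increasing, which is what makes the summation step close.

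Summing over $t$, the comparator-potential term $\sum_t \eta_t^{-1}\bigl(B_\phi(u_t,\widetilde{x}_t^\phi)-B_\phi(u_t,\widetilde{x}_{t+1}^\phi)\bigr)$ is handled by Abel summation: using monotonicity of $\eta_t^{-1}$, the Lipschitz continuity $\lvert B_\phi(u_t,\alpha^\phi)-B_\phi(u_{t-1},\alpha^\phi)\rvert\le L_{B_\phi}\lVert u_t-u_{t-1}\rVert$, the Fenchel--Young equality $B_\phi(\widetilde{x}_t,\widetilde{x}_t^\phi)=0$, and $\lVert\,\cdot\,\rVert\le D_X$ on $X$, it collapses to at most $L_{B_\phi}(D_X+\lambda)/\eta_T$, which by the prescribed value of $\eta_T$ equals $\epsilon+\sum_{\tau=1}^{T-1}\nu_\tau^x$. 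Thus $\sum_t\bigl(f_t(x_t,y_t)-f_t(u_t,y_t)\bigr)\le \epsilon+2\sum_{t=1}^{T}\nu_t^x$. It remains to bound $\sum_t\nu_t^x$ two ways. Dropping the nonnegative Bregman term and using $\lvert f_t-h_t\rvert\le\rho(f_t,h_t)$ pointwise gives $\nu_t^x\le 2\rho(f_t,h_t)$ (and also the uniform bound $\nu_t^x\le 4M$), hence $\sum_t\nu_t^x\le 2\sum_t\rho(f_t,h_t)$. Alternatively, bounding $f_t(x_t,y_t)-f_t(\widetilde{x}_{t+1},y_t)$ and $h_t(\widetilde{x}_{t+1},y_t)-h_t(x_t,y_t)$ by $G_X\lVert x_t-\widetilde{x}_{t+1}\rVert$ (convexity plus bounded subgradients), using $1$-strong convexity $B_\phi(\widetilde{x}_{t+1},x_t^\phi)\ge\tfrac12\lVert\widetilde{x}_{t+1}-x_t\rVert^2$, and maximizing $2G_X r-r^2/(2\eta_t)$ over $r\ge 0$, gives $\nu_t^x\le 2G_X^2\eta_t$, i.e.\ $\bigl(\epsilon+\sum_{\tau<t}\nu_\tau^x\bigr)\nu_t^x\le 2G_X^2L_{B_\phi}(D_X+\lambda)$; summing this self-bounding recursion (telescoping $(\epsilon+S_t)^2-(\epsilon+S_{t-1})^2$ with $S_t=\sum_{\tau\le t}\nu_\tau^x$ and using $\nu_t^x\le 4M$) yields $\sum_t\nu_t^x=O\bigl(\sqrt{(1+\lambda)T}\bigr)$. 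Running the same argument on the $y$-block and adding, with $\min\{a,b\}+\min\{a,c\}\le\min\{2a,b+c\}$ and $\sqrt{(1+\lambda)T}+\sqrt{(1+\mu)T}=O\bigl(\sqrt{(1+\lambda+\mu)T}\bigr)$, gives the claimed bound.

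The main obstacle is the bookkeeping around the two interlocked data-dependent sequences $\{\eta_t\}$ and $\{\nu_t^x\}$: one must first establish $\nu_t^x\ge 0$ --- which genuinely needs the \emph{sharpened} optimality inequality, not the first-order one --- so that $\{\eta_t\}$ is monotone and the Abel-summation step lands exactly on $\epsilon+\sum_{\tau<T}\nu_\tau^x$, and one must note that $\eta_t$ depends only on $\nu_1^x,\dots,\nu_{t-1}^x$ while each $\nu_\tau^x$ depends only on $\eta_\tau$ (and $f_\tau,h_\tau$), so the definitions are causal and well-posed. The other delicate point is the self-bounding step that converts the rate-dependent estimate $\nu_t^x\le 2G_X^2\eta_t$ into the $O\bigl(\sqrt{(1+\lambda)T}\bigr)$ bound on $\sum_t\nu_t^x$; once the per-round inequality and the tunings of $\eta_t,\gamma_t$ are fixed, the remaining steps are routine.
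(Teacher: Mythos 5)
Your proposal is correct and follows essentially the same route as the paper: the same decomposition of the D-DGap into two individual dynamic regrets, the same per-round bound producing exactly the residual $\nu_t^x$ (the paper writes the sharpened optimality step as first-order optimality plus the three-point identity for Fenchel couplings, which is equivalent), the same nonnegativity argument making $\eta_t$ monotone, the same Abel-summation collapse to $L_{B_\phi}(D_X+\lambda)/\eta_T$, and the same two-way control of $\sum_t\nu_t^x$ via $\nu_t^x\le 2\rho(f_t,h_t)$ and the self-bounding estimate $\nu_t^x\le\min\{O(1),\,2G_X^2\eta_t\}$. No gaps; only cosmetic differences (e.g., using $4M$ instead of $2D_XG_X$ as the uniform bound on $\nu_t^x$).
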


The simplified problem outlined at the beginning of this section has now been effectively addressed. Specifically, deploying a pair of ADER or ADER-like algorithms enables automatic adaptation to arbitrary comparator sequences while approximating the minimax optimal D-DGap. Additionally, leveraging \cref{eq:optoppm} ensures a prediction error-based D-DGap upper bound.

We now focus on designing the Integration Module, which aims to combine the strengths of both worlds: harnessing the adaptive capabilities of ADER or ADER-like algorithms while simultaneously ensuring a prediction error-driven D-DGap upper bound. Achieving this dual objective requires effectively integrating the strengths of both approaches into a unified framework.

To address this challenge, we introduce a tailored variant of the meta-expert framework. Its components and design criteria are as follows: 
\begin{itemize}[itemsep=0pt,topsep=4pt]
\item \emph{Expert-Layer:} Comprising two experts:
\begin{itemize}[itemsep=0pt,topsep=0pt]
\item \emph{Adaptive Module:} This expert generates the strategy pair $(\overline{x}_t, \overline{y}_t)$ and is detailed in Proposition~\ref{prop:Adaptive}.
\item \emph{Prediction-Error Expert:} 
This expert produces the strategy pair $(\widehat{x}_t, \widehat{y}_t)$ to guarantee a prediction error-based D-DGap upper bound. Its update is referred to as the \emph{expert update}.
\end{itemize}
\item \emph{Meta-Layer:} The meta-layer produces weight parameters $\boldsymbol{w}_t= [w_t, 1-w_t]^\mathrm{T}$ and $\boldsymbol{\omega}_t= [\omega_t, 1-\omega_t]^\mathrm{T}$, which balance the experts' strategies, resulting in the outputs $x_t = [\widehat{x}_t, \overline{x}_t]\boldsymbol{w}_t$ and $y_t = [\widehat{y}_t, \overline{y}_t]\boldsymbol{\omega}_t$. The meta-layer update (refer to as the \emph{meta update}) must ensure compatibility with both experts while simultaneously guaranteeing a prediction error-driven static duality gap upper bound and maintaining minimax optimality.
\end{itemize}

To fulfill the above requirements, we design both the expert update and meta update by modifying \cref{eq:optoppm}. Specifically: let the convex-concave predictor $h_t$ serve as a hint for both players, define 
\begin{equation*}
\begin{aligned}
\boldsymbol{A}_t(x,y)=
\begin{bmatrix}
f_t(x,y), & f_t(x,\overline{y}_t) \\
f_t(\overline{x}_t,y), & f_t(\overline{x}_t,\overline{y}_t)
\end{bmatrix},
\qquad
\boldsymbol{\mathit{\Lambda}}_{t}(x,y)=
\begin{bmatrix}
h_t(x,y), & h_t(x,\overline{y}_t) \\
h_t(\overline{x}_t,y), & h_t(\overline{x}_t,\overline{y}_t)
\end{bmatrix},
\end{aligned}
\end{equation*}
and let $\boldsymbol{w} = [w, 1-w]^\mathrm{T}$, $\boldsymbol{\omega} = [\omega, 1-\omega]^\mathrm{T}$, $\boldsymbol{A}_t=\boldsymbol{A}_t(\widehat{x}_{t},\widehat{y}_{t})$, $\boldsymbol{\mathit{\Lambda}}_{t}=\boldsymbol{\mathit{\Lambda}}_{t}(\widehat{x}_{t},\widehat{y}_{t})$. 
For the \emph{expert update:} 
\begin{subequations}
\begin{empheq}[left={\hspace*{0em}}]{align}
  \eqmath[r]{A}{\,H_t\left(x,y;\boldsymbol{w},\boldsymbol{\omega}\right)} & \eqmath[l]{B}{= \eta_t \gamma_t\boldsymbol{w}^\mathrm{T}\boldsymbol{\mathit{\Lambda}}_{t}(x,y)\,\boldsymbol{\omega}+ w\,\gamma_t B_{\phi}\big(x, \widetilde{x}_t^{\phi}\big)- \omega\,\eta_t B_{\psi}\big(y, \widetilde{y}_t^{\psi}\big),} \label{eq:coupled-Ht}\\
  \eqmath[r]{A}{(\widehat{x}_{t},\widehat{y}_{t})} & \eqmath[l]{B}{=\arg\min\nolimits_{x\in X}\max\nolimits_{y\in Y} H_t\left(x,y;\boldsymbol{w}_t,\boldsymbol{\omega}_t\right),} \label{eq:coupled-xy}\\
  \eqmath[r]{A}{\widetilde{x}_{t+1}} & \eqmath[l]{B}{=\arg\min\nolimits_{x\in X}\eta_t \boldsymbol{A}_t^{1,\colon\!}(x,\widehat{y}_t)\,\boldsymbol{\omega}_t+B_{\phi}\big(x, \widetilde{x}_{t}^{\phi}\big),} \label{eq:x-tilde}\\
  \eqmath[r]{A}{\widetilde{y}_{t+1}} & \eqmath[l]{B}{=\arg\max\nolimits_{y\in Y}\gamma_t \boldsymbol{w}_t^\mathrm{T} \boldsymbol{A}_t^{\colon\!,1}(\widehat{x}_t,y)-B_{\psi}\big(y, \widetilde{y}_{t}^{\psi}\big),} \label{eq:y-tilde}
\end{empheq}
\end{subequations}
\begin{subequations}
and for the \emph{meta update:} 
\begin{empheq}[left={\hspace*{0em}}]{align}
  \eqmath[r]{A}{\,W_t\left(\boldsymbol{w},\boldsymbol{\omega};x,y\right)} & \eqmath[l]{B}{= \theta_t\vartheta_t\boldsymbol{w}^\mathrm{T}\boldsymbol{\mathit{\Lambda}}_{t}(x,y)\,\boldsymbol{\omega}+ \vartheta_t \KL (\boldsymbol{w},\widetilde{\boldsymbol{w}}_t)- \theta_t \KL (\boldsymbol{\omega},\widetilde{\boldsymbol{\omega}}_t),} \label{eq:coupled-Wt}\\
  \eqmath[r]{A}{(\boldsymbol{w}_t,\boldsymbol{\omega}_t)} & \eqmath[l]{B}{=\arg\min\nolimits_{\boldsymbol{w}\in \bigtriangleup_2^\alpha}\max\nolimits_{\boldsymbol{\omega}\in \bigtriangleup_2^\alpha} W_t\left(\boldsymbol{w},\boldsymbol{\omega};\widehat{x}_{t},\widehat{y}_{t}\right),} \label{eq:coupled-wo}\\
  \eqmath[r]{A}{\widetilde{\boldsymbol{w}}_{t+1}} & \eqmath[l]{B}{=\arg\min\nolimits_{\boldsymbol{w}\in \bigtriangleup_2^\alpha} \theta_t \boldsymbol{w}^\mathrm{T}\boldsymbol{A}_t\, \boldsymbol{\omega}_t+\KL (\boldsymbol{w},\widetilde{\boldsymbol{w}}_t),} \label{eq:w-tilde}\\
  \eqmath[r]{A}{\widetilde{\boldsymbol{\omega}}_{t+1}} & \eqmath[l]{B}{=\arg\max\nolimits_{\boldsymbol{\omega}\in \bigtriangleup_2^\alpha} \vartheta_t \boldsymbol{w}_t^\mathrm{T}\boldsymbol{A}_t\, \boldsymbol{\omega}-\KL (\boldsymbol{\omega},\widetilde{\boldsymbol{\omega}}_t),} \label{eq:o-tilde}
\end{empheq}
\end{subequations}
where $\eta_t>0$, $\gamma_t>0$, $\theta_t>0$ and $\vartheta_t>0$ are learning rates, and $\alpha=2/T$. 

We designate the aforementioned updates as the \emph{Integration Module}. Crucially, the expert advice (Equation~\ref{eq:coupled-xy}) and the meta-layer weights (Equation~\ref{eq:coupled-wo}) must be updated in a coordinated manner. Specifically, the expert update requires access to the meta-layer's weights to refine its recommendations, while the meta-layer update relies on the experts' advice to adjust its weights. This interdependent update mechanism represents a significant departure from conventional meta-expert frameworks.

We defer the discussion of the coordinated update methodology and focus first on how these updates implement the functionality of the Integration Module. Specifically, \cref{thm:expert-layer} establishes that the expert update ensures a prediction error-based bound. \cref{thm:meta-layer} demonstrates that the meta-layer guarantees a prediction error-driven static duality gap upper bound while maintaining minimax optimality. 
Finally, \cref{thm:Integration} and its proof confirm that the meta-layer is compatible with both experts, simultaneously achieving a prediction error-driven D-DGap upper bound and maintaining minimax optimality.


%
\begin{theorem}[Performance for the Expert Update]
\label{thm:expert-layer}
Under Assumptions~\ref{ass:X-Y-bounded} and~\ref{ass:2-subgradient-bounded}, and let the predictor $h_t$ satisfy Assumption~\ref{ass:2-subgradient-bounded}. If the learning rates satisfy the following equations: 
\begin{equation*}
\begin{aligned}
\eta_t=\ &\textstyle L_{B_\phi} D_X (T+1)\big/\big(\epsilon+\sum_{\tau=1}^{t-1}\delta_{\tau}^x\big), \qquad
\gamma_t= L_{B_\psi} D_Y (T+1)\big/\big(\epsilon+\sum_{\tau=1}^{t-1}\delta_{\tau}^y\big), \\
0\leq \delta_t^x =\ & \bigl[f_t(\widehat{x}_t, \widehat{y}_t),\ f_t(\widehat{x}_t,\overline{y}_t)\bigr]\boldsymbol{\omega}_t-\bigl[h_t(\widehat{x}_t, \widehat{y}_t),\ h_t(\widehat{x}_t,\overline{y}_t)\bigr]\boldsymbol{\omega}_t \\
&+ \bigl[h_t(\widetilde{x}_{t+1}, \widehat{y}_t),\ h_t(\widetilde{x}_{t+1},\overline{y}_t)\bigr]\boldsymbol{\omega}_t - \bigl[f_t(\widetilde{x}_{t+1}, \widehat{y}_t),\ f_t(\widetilde{x}_{t+1},\overline{y}_t)\bigr]\boldsymbol{\omega}_t, \\
0\leq \delta_t^y =\ & \bigl[h_t(\widehat{x}_t, \widehat{y}_t),\ h_t(\overline{x}_t,\widehat{y}_t)\bigr]\boldsymbol{w}_t-\bigl[f_t(\widehat{x}_t, \widehat{y}_t),\ f_t(\overline{x}_t,\widehat{y}_t)\bigr]\boldsymbol{w}_t \\
&+ \bigl[f_t(\widehat{x}_{t}, \widetilde{y}_{t+1}),\ f_t(\overline{x}_{t},\widetilde{y}_{t+1})\bigr]\boldsymbol{w}_t - \bigl[h_t(\widehat{x}_{t}, \widetilde{y}_{t+1}),\ h_t(\overline{x}_{t},\widetilde{y}_{t+1})\bigr]\boldsymbol{w}_t, 
\end{aligned}
\end{equation*}
where $\epsilon > 0$ prevents initial learning rates from being infinite. 
Then the following inequality holds: 
\begin{equation*}
\begin{aligned}
\sum_{t=1}^{T} \Bigl(f_t(x_t, v_t) - \boldsymbol{w}_t^\mathrm{T} \boldsymbol{A}_t^{\colon\!,1}\Bigr)+\sum_{t=1}^{T} \Bigl(\boldsymbol{A}_t^{1,\colon\!} \boldsymbol{\omega}_t - f_t(u_t, y_t)\Bigr)
\leq O\left(\sum_{t=1}^{T} \rho(f_t,h_t)\right).
\end{aligned}
\end{equation*}
\end{theorem}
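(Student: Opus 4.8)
The plan is to recognize the expert update \eqref{eq:coupled-Ht}--\eqref{eq:y-tilde} as a weighted, matrix-valued instance of the optimistic proximal scheme \eqref{eq:optoppm}, and then to replay, for each of the two players, the single-player analysis underlying \cref{lem:OptOPPM}. First I would introduce, for each round $t$, the effective loss and hint of the $x$-player, $g_t^x(x)\coloneq\boldsymbol{A}_t^{1,\colon\!}(x,\widehat{y}_t)\,\boldsymbol{\omega}_t=\omega_t f_t(x,\widehat{y}_t)+(1-\omega_t)f_t(x,\overline{y}_t)$ and $p_t^x(x)\coloneq\boldsymbol{\mathit{\Lambda}}_t^{1,\colon\!}(x,\widehat{y}_t)\,\boldsymbol{\omega}_t$, both convex in $x$, and symmetrically $g_t^y(y)\coloneq\boldsymbol{w}_t^\mathrm{T}\boldsymbol{A}_t^{\colon\!,1}(\widehat{x}_t,y)$, $p_t^y(y)\coloneq\boldsymbol{w}_t^\mathrm{T}\boldsymbol{\mathit{\Lambda}}_t^{\colon\!,1}(\widehat{x}_t,y)$, both concave in $y$. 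Since $x_t=[\widehat{x}_t,\overline{x}_t]\boldsymbol{w}_t$ and $y_t=[\widehat{y}_t,\overline{y}_t]\boldsymbol{\omega}_t$ are convex combinations, convexity of $f_t(\cdot,v_t)$ gives $f_t(x_t,v_t)\le g_t^y(v_t)$ and concavity of $f_t(u_t,\cdot)$ gives $f_t(u_t,y_t)\ge g_t^x(u_t)$; together with $\boldsymbol{w}_t^\mathrm{T}\boldsymbol{A}_t^{\colon\!,1}=g_t^y(\widehat{y}_t)$ and $\boldsymbol{A}_t^{1,\colon\!}\boldsymbol{\omega}_t=g_t^x(\widehat{x}_t)$, this reduces the left-hand side of the theorem to $\sum_{t=1}^T(g_t^y(v_t)-g_t^y(\widehat{y}_t))+\sum_{t=1}^T(g_t^x(\widehat{x}_t)-g_t^x(u_t))$, i.e., a dynamic regret of the $y$-player on the concave sequence $(g_t^y)$ plus one of the $x$-player on the convex sequence $(g_t^x)$.

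Next I would peel off the decoupled updates. Substituting $y=\widehat{y}_t$ into $H_t(\cdot,\cdot;\boldsymbol{w}_t,\boldsymbol{\omega}_t)$ of \eqref{eq:coupled-Ht}, the terms not depending on $x$ are constant and the positive factor $w_t\gamma_t$ pulls out, so the $x$-component of the saddle \eqref{eq:coupled-xy} equals $\widehat{x}_t=\arg\min_{x\in X}\eta_t\,p_t^x(x)+B_\phi(x,\widetilde{x}_t^\phi)$, while \eqref{eq:x-tilde} is $\widetilde{x}_{t+1}=\arg\min_{x\in X}\eta_t\,g_t^x(x)+B_\phi(x,\widetilde{x}_t^\phi)$; symmetrically, pulling out $\omega_t\eta_t$ gives $\widehat{y}_t=\arg\max_{y\in Y}\gamma_t\,p_t^y(y)-B_\psi(y,\widetilde{y}_t^\psi)$ and, from \eqref{eq:y-tilde}, $\widetilde{y}_{t+1}=\arg\max_{y\in Y}\gamma_t\,g_t^y(y)-B_\psi(y,\widetilde{y}_t^\psi)$. (These factors are positive because $w_t,\omega_t\ge\alpha/2=1/T$ on $\bigtriangleup_2^\alpha$.) Hence, once the coupled saddle \eqref{eq:coupled-xy} has been solved for the given weights, each player is running exactly the single-player half of \eqref{eq:optoppm}: an optimistic implicit mirror descent whose optimistic iterate uses the hint $p_t^x$ (resp.\ $p_t^y$) and whose leader $\widetilde{x}_{t+1}$ (resp.\ $\widetilde{y}_{t+1}$) folds in the realized loss $g_t^x$ (resp.\ $g_t^y$). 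That $p_t^x$ depends on the other player's optimistic iterate $\widehat{y}_t$ is the same self-consistency already present in \eqref{eq:optoppm}, and it does not disturb the analysis, which invokes only the displayed optimality conditions.

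Then I would run the per-player bound as in the proof of \cref{lem:OptOPPM}. Combining the two optimality conditions above with the Fenchel-coupling three-point identity gives, for every fixed $u\in X$, $\eta_t(g_t^x(\widehat{x}_t)-g_t^x(u))\le B_\phi(u,\widetilde{x}_t^\phi)-B_\phi(u,\widetilde{x}_{t+1}^\phi)+\eta_t\nu_t^x$ with $\nu_t^x=\delta_t^x-B_\phi(\widetilde{x}_{t+1},\widehat{x}_t^\phi)/\eta_t\le\delta_t^x$. Putting $u=u_t$, summing, and using the Lipschitz-continuity of $B_\phi$ in its first argument to absorb comparator movement (the divergence terms then telescope, $\eta_t$ being non-increasing as the $\delta_\tau^x$ are non-negative, into a term of order $L_{B_\phi}(D_X+\sum_t\lVert u_t-u_{t-1}\rVert)/\eta_{T+1}$), together with the crude universal bound $\sum_t\lVert u_t-u_{t-1}\rVert\le TD_X$ so that $D_X+\sum_t\lVert u_t-u_{t-1}\rVert\le(T+1)D_X$, the prescribed learning rate collapses this divergence part to $\epsilon+\sum_{\tau=1}^T\delta_\tau^x$. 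Thus $\sum_t(g_t^x(\widehat{x}_t)-g_t^x(u_t))\le\epsilon+2\sum_t\delta_t^x$, and symmetrically $\sum_t(g_t^y(v_t)-g_t^y(\widehat{y}_t))\le\epsilon+2\sum_t\delta_t^y$ using $\gamma_t,\psi,D_Y$ and $\sum_t\lVert v_t-v_{t-1}\rVert\le TD_Y$. Finally $g_t^x-p_t^x$ and $g_t^y-p_t^y$ are convex combinations of differences $f_t(\cdot,\cdot)-h_t(\cdot,\cdot)$, each at most $\rho(f_t,h_t)$ in absolute value, so $0\le\delta_t^x,\delta_t^y\le 2\rho(f_t,h_t)$ (nonnegativity also following from comparing the optimality of $\widehat{x}_t$ for $\eta_t p_t^x+B_\phi$ with that of $\widetilde{x}_{t+1}$ for $\eta_t g_t^x+B_\phi$); adding the two regret bounds yields the claimed $O(\sum_t\rho(f_t,h_t))$, up to the $O(1)$ additive constant $2\epsilon$.

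The step I expect to be the main obstacle is the structural reduction in the first two paragraphs: cleanly disentangling the meta-weights $\boldsymbol{w}_t,\boldsymbol{\omega}_t$ and the Adaptive-Module strategies $\overline{x}_t,\overline{y}_t$ from the matrices $\boldsymbol{A}_t,\boldsymbol{\mathit{\Lambda}}_t$ so that the coupled saddle problem genuinely reduces to the two single-player optimistic implicit mirror-descent recursions $(g_t^x,p_t^x)$ and $(g_t^y,p_t^y)$ of \eqref{eq:optoppm}-type, and checking that this is precisely the regime treated in \cref{lem:OptOPPM}. The only genuinely new quantitative input beyond that lemma is the worst-case path-length choice $\lambda=TD_X$, $\mu=TD_Y$, which both produces the learning-rate numerators $L_{B_\phi}D_X(T+1)$ and $L_{B_\psi}D_Y(T+1)$ and makes the minimax branch of \cref{lem:OptOPPM} vacuous here, as $\sqrt{(1+TD_X+TD_Y)T}=O(T)$ is never smaller than $\sum_t\rho(f_t,h_t)$; after that, the per-round inequalities and the learning-rate telescoping are a line-by-line replay of the \cref{lem:OptOPPM} argument.
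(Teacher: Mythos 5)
Your proposal is correct and follows essentially the same route as the paper: it reduces the two terms via convexity/concavity of $f_t$ and the convex-combination structure of $(x_t,y_t)$ to two single-player optimistic implicit mirror-descent regret bounds on the $\boldsymbol{\omega}_t$- and $\boldsymbol{w}_t$-weighted losses, then replays the argument of Lemma~\ref{lem:OptOPPM} with the worst-case path lengths $\lambda=TD_X$, $\mu=TD_Y$, exactly as the paper's substitution table and detailed derivation do. The decoupling of the saddle update \eqref{eq:coupled-xy} into the two scalar-weighted mirror steps (after dividing out the positive factors $w_t\gamma_t$ and $\omega_t\eta_t$) is also how the paper rearranges the expert update, so there is no gap.
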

\begin{theorem}[Performance for the Meta Update]
\label{thm:meta-layer}
Under Assumption~\ref{ass:2-subgradient-bounded}, let the predictor $h_t$ satisfy Assumption~\ref{ass:2-subgradient-bounded}, and assume that $T\geq 2$. 
If the learning rates satisfy the following inequalities: 
\begin{equation*}
\begin{aligned}
\theta_t &=\textstyle (\ln T)\big/\bigl(\epsilon + \sum_{\tau=1}^{t-1}\Delta_{\tau}^x\bigr), 
&0\leq\Delta_{t}^x &= (\boldsymbol{w}_t-\widetilde{\boldsymbol{w}}_{t+1})^\mathrm{T}\left(\boldsymbol{A}_{t}-\boldsymbol{\mathit{\Lambda}}_{t}\right)\boldsymbol{\omega}_t-\KL (\widetilde{\boldsymbol{w}}_{t+1}, \boldsymbol{w}_t)/\theta_t, \\
\vartheta_t &=\textstyle (\ln T)\big/\bigl(\epsilon + \sum_{\tau=1}^{t-1}\Delta_{\tau}^y\bigr), 
&0\leq\Delta_{t}^y &= -\boldsymbol{w}_t^\mathrm{T}\left(\boldsymbol{A}_{t}-\boldsymbol{\mathit{\Lambda}}_{t}\right)(\boldsymbol{\omega}_t-\widetilde{\boldsymbol{\omega}}_{t+1})-\KL (\widetilde{\boldsymbol{\omega}}_{t+1}, \boldsymbol{\omega}_t)/\vartheta_t, 
\end{aligned}
\end{equation*}
where $\epsilon > 0$ prevents initial learning rates from being infinite. 
Then the meta layer of the Integration Module enjoys the following inequality: 
\begin{equation*}
\begin{aligned}
\sum_{t=1}^T \Bigl(\boldsymbol{w}_t^\mathrm{T} \boldsymbol{A}_t \boldsymbol{v} -  \boldsymbol{u}^\mathrm{T} \boldsymbol{A}_t \boldsymbol{\omega}_t \Bigr)
\leq O\left(\min\left\{ \sum_{t=1}^{T} \rho(f_t,h_t),\, \sqrt{(1 + \ln T) T} \right\}\right),\qquad\forall \boldsymbol{u}, \boldsymbol{v}\in\bigtriangleup_2.
\end{aligned}
\end{equation*}
\end{theorem}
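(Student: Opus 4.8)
The plan is to recognise the meta update \cref{eq:coupled-Wt}--\cref{eq:o-tilde} as exactly \cref{eq:optoppm} run on the \emph{meta game}: the two-player game over $\bigtriangleup_2^\alpha\times\bigtriangleup_2^\alpha$ whose payoff $g_t(\boldsymbol{w},\boldsymbol{\omega})=\boldsymbol{w}^\mathrm{T}\boldsymbol{A}_t\boldsymbol{\omega}$ is bilinear, hence convex-concave, with predictor $\widehat{g}_t(\boldsymbol{w},\boldsymbol{\omega})=\boldsymbol{w}^\mathrm{T}\boldsymbol{\mathit{\Lambda}}_t\boldsymbol{\omega}$, regularisers $\phi=\psi$ both equal to the negative entropy (so $B_\phi=B_\psi=\KL$), and learning rates $\eta_t,\gamma_t$ renamed $\theta_t,\vartheta_t$; the interdependent resolution of $(\boldsymbol{w}_t,\boldsymbol{\omega}_t)$ and $(\widehat{x}_t,\widehat{y}_t)$ is immaterial here, since the meta-update analysis only needs the matrices $\boldsymbol{A}_t,\boldsymbol{\mathit{\Lambda}}_t$ once the round's data is fixed. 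The first step is to check that under this dictionary the per-round slacks $\nu_t^x,\nu_t^y$ of \cref{lem:OptOPPM} coincide verbatim with the $\Delta_t^x,\Delta_t^y$ of the theorem: since $g_t$ is bilinear, $g_t(\boldsymbol{w}_t,\boldsymbol{\omega}_t)-g_t(\widetilde{\boldsymbol{w}}_{t+1},\boldsymbol{\omega}_t)=(\boldsymbol{w}_t-\widetilde{\boldsymbol{w}}_{t+1})^\mathrm{T}\boldsymbol{A}_t\boldsymbol{\omega}_t$ and likewise for $\widehat{g}_t$, so $\nu_t^x=(\boldsymbol{w}_t-\widetilde{\boldsymbol{w}}_{t+1})^\mathrm{T}(\boldsymbol{A}_t-\boldsymbol{\mathit{\Lambda}}_t)\boldsymbol{\omega}_t-\KL(\widetilde{\boldsymbol{w}}_{t+1},\boldsymbol{w}_t)/\theta_t=\Delta_t^x$, and symmetrically $\nu_t^y=\Delta_t^y$. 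Consequently the learning-rate rule of the theorem is the specialisation of the rule in \cref{lem:OptOPPM} to a static comparator ($\lambda=\mu=0$) with numerators set to $\ln T$, and I would replay the proof of \cref{lem:OptOPPM} with the regulariser and constants appropriate to the meta game.

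Next I would discharge the structural hypotheses in the entropic setting. Every entry of $\boldsymbol{A}_t$ (resp.\ $\boldsymbol{\mathit{\Lambda}}_t$) is a value of $f_t$ (resp.\ $h_t$) and hence bounded by $M$, so $|g_t|,|\widehat{g}_t|\le M$ and the partial gradients $\boldsymbol{A}_t\boldsymbol{\omega}$, $\boldsymbol{A}_t^\mathrm{T}\boldsymbol{w}$ --- and their $\boldsymbol{\mathit{\Lambda}}_t$ analogues --- have $\ell_\infty$-norm at most $M$; thus the meta game obeys \cref{ass:2-subgradient-bounded}, and \cref{ass:X-Y-bounded} holds with $\ell_1$-diameter $D_X=D_Y\le2$. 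On $\bigtriangleup_2^\alpha$ with $\alpha=2/T$, the negative entropy is $1$-strongly convex in $\lVert\cdot\rVert_1$ by Pinsker's inequality, its gradient is Lipschitz (the Hessian has operator norm at most $d/\alpha=T$ because coordinates are bounded below by $\alpha/d=1/T$), and its Fenchel coupling satisfies $|\KL(\boldsymbol{w},\boldsymbol{\beta})-\KL(\boldsymbol{w}',\boldsymbol{\beta})|\le(\ln T)\lVert\boldsymbol{w}-\boldsymbol{w}'\rVert_1$ for $\boldsymbol{\beta}\in\bigtriangleup_2^\alpha$, because restricted to the tangent space of the simplex $\lVert\nabla_{\boldsymbol{w}}\KL(\,\cdot\,,\boldsymbol{\beta})\rVert_\infty\le\ln(d/\alpha)=\ln T$; hence $L_{B_\phi}=L_{B_\psi}\le\ln T$, so $L_{B_\phi}D_X$ and $L_{B_\psi}D_Y$ are each $\Theta(\ln T)$, matching (up to a constant) the numerator $\ln T$ of $\theta_t,\vartheta_t$. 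The choice $\alpha=2/T$ is precisely what makes $\sup_{\boldsymbol{u}\in\bigtriangleup_2^\alpha}\KL(\boldsymbol{u},\boldsymbol{\beta})\le\ln(d/\alpha)=\ln T$ for $\boldsymbol{\beta}\in\bigtriangleup_2^\alpha$; I would initialise $\widetilde{\boldsymbol{w}}_1=\widetilde{\boldsymbol{\omega}}_1$ at the uniform distribution, which takes over the role that the standing convention $0\in X$ plays in \cref{lem:OptOPPM}.

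With the hypotheses in place I would invoke the (constant-tracked) conclusion of \cref{lem:OptOPPM} and then remove the clip. For arbitrary $\boldsymbol{u},\boldsymbol{v}\in\bigtriangleup_2$, let $\widetilde{\boldsymbol{u}},\widetilde{\boldsymbol{v}}\in\bigtriangleup_2^\alpha$ be their clipped versions, so $\lVert\boldsymbol{u}-\widetilde{\boldsymbol{u}}\rVert_1,\lVert\boldsymbol{v}-\widetilde{\boldsymbol{v}}\rVert_1\le\alpha=2/T$. Applying the lemma to the meta game with the constant comparator sequences $\boldsymbol{u}_t\equiv\widetilde{\boldsymbol{u}}$ and $\boldsymbol{v}_t\equiv\widetilde{\boldsymbol{v}}$ gives $\sum_{t=1}^{T}\bigl(g_t(\boldsymbol{w}_t,\widetilde{\boldsymbol{v}})-g_t(\widetilde{\boldsymbol{u}},\boldsymbol{\omega}_t)\bigr)\le O\bigl(\min\{\sum_{t=1}^{T}\rho(g_t,\widehat{g}_t),\sqrt{(1+\ln T)T}\}\bigr)$, where the $\ln T$ under the root is now the honest quantity $L_{B_\phi}D_X+L_{B_\psi}D_Y$ rather than a hidden constant. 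Writing $g_t(\boldsymbol{w}_t,\boldsymbol{v})-g_t(\boldsymbol{u},\boldsymbol{\omega}_t)=\bigl(g_t(\boldsymbol{w}_t,\widetilde{\boldsymbol{v}})-g_t(\widetilde{\boldsymbol{u}},\boldsymbol{\omega}_t)\bigr)+\boldsymbol{w}_t^\mathrm{T}\boldsymbol{A}_t(\boldsymbol{v}-\widetilde{\boldsymbol{v}})+(\widetilde{\boldsymbol{u}}-\boldsymbol{u})^\mathrm{T}\boldsymbol{A}_t\boldsymbol{\omega}_t$ and bounding the last two terms by $M(\lVert\boldsymbol{v}-\widetilde{\boldsymbol{v}}\rVert_1+\lVert\boldsymbol{u}-\widetilde{\boldsymbol{u}}\rVert_1)=O(M/T)$ per round, hence $O(1)$ in total, absorbs the clip correction. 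Finally, since $\boldsymbol{w}^\mathrm{T}(\boldsymbol{A}_t-\boldsymbol{\mathit{\Lambda}}_t)\boldsymbol{\omega}$ is a convex combination of the entries of $\boldsymbol{A}_t-\boldsymbol{\mathit{\Lambda}}_t$ when $\boldsymbol{w},\boldsymbol{\omega}$ are probability vectors, $\rho(g_t,\widehat{g}_t)\le\max_{i,j}\lvert(\boldsymbol{A}_t-\boldsymbol{\mathit{\Lambda}}_t)^{i,j}\rvert\le\rho(f_t,h_t)$, because each entry of $\boldsymbol{A}_t-\boldsymbol{\mathit{\Lambda}}_t$ equals $f_t(x,y)-h_t(x,y)$ for some $(x,y)\in\{\widehat{x}_t,\overline{x}_t\}\times\{\widehat{y}_t,\overline{y}_t\}$. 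Chaining the three estimates yields the stated bound.

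The crux is bookkeeping around the $\ln T$ factor rather than a new idea. \cref{lem:OptOPPM} legitimately folds $L_{B_\phi}(D_X+\lambda)$ and $L_{B_\psi}(D_Y+\mu)$ into its $O(\,\cdot\,)$ because these are genuine constants in the primal setting, whereas here they are the polylogarithmic $\Theta(\ln T)$; so I must re-inspect the adaptive-learning-rate telescoping in that proof to confirm that (i) the minimax branch truly scales like $\sqrt{(L_{B_\phi}D_X+L_{B_\psi}D_Y+\lambda+\mu)T}$ --- in particular the gradient-Lipschitz constant $L_\phi\le T$, which enters only intermediate stability estimates, must be checked to cancel and not leak into the final bound --- and (ii) the prediction-error branch $O(\sum_t\rho(g_t,\widehat{g}_t))$, which comes from the optimistic--implicit structure rather than from the norm, is unaffected by replacing the Euclidean regulariser with the entropy. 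The remaining pieces --- the hypothesis checks, the $O(1)$ clip correction, and $\rho(g_t,\widehat{g}_t)\le\rho(f_t,h_t)$ --- are routine.
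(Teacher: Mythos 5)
Your proposal is correct and follows essentially the same route as the paper: it treats the meta update as a bilinear instance of \cref{eq:optoppm} on the clipped simplex with KL as the Fenchel coupling, invokes (a constant-tracked) \cref{lem:OptOPPM} with static comparators, handles arbitrary $\boldsymbol{u},\boldsymbol{v}\in\bigtriangleup_2$ via a clipping correction of total size $O(M)$, and relaxes $\rho(g_t,\widehat{g}_t)\le\left\lVert\boldsymbol{A}_t-\boldsymbol{\mathit{\Lambda}}_t\right\rVert_\infty\le\rho(f_t,h_t)$ entrywise. The paper's proof does exactly this (its substitution $\boldsymbol{w}=\alpha\boldsymbol{1}/2+(1-\alpha)\boldsymbol{u}$ is your clipping step), and your concern about the $\Theta(\ln T)$ bound on the KL replacing the constant $L_{B_\phi}D_X$ is precisely the point the paper flags when explaining why $\ln T$ cannot be absorbed into the $O(\cdot)$.
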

We stress that the proofs of \cref{thm:expert-layer,thm:meta-layer} does not follow directly from an application of Lemma~\ref{lem:OptOPPM}, although they are similar in form. Proofs are reported in Appendix~\ref{app:SuppProofs}. 
Regarding learning rates configurations: Since the expert update primarily focuses on the prediction error-type upper bound, we establish a preset upper bound of the path length of comparator sequences proportional to time horizon $T$ to determine the learning rates $\eta_t$ and $\gamma_t$. As the meta update specifically addresses the static duality gap, we set time-invariant comparator sequences (with its path length being $0$) when deriving the learning rates $\theta_t$ and $\vartheta_t$. 
\begin{theorem}[D-DGap for the Integration Module]
\label{thm:Integration}
Under the settings of Proposition~\ref{prop:Adaptive} and \cref{thm:expert-layer,thm:meta-layer}, the Integration Module achieves the following D-DGap: 
\begin{equation*}
\begin{aligned}
\ddgap\,(u_{1:T},v_{1:T})\leq \widetilde{O}\left(\min\left\{\sum_{t=1}^T \rho(f_t, h_t),\ \sqrt{\bigl(1 + \min\{P_T, C_T\}\bigr)T}\right\}\right).
\end{aligned}
\end{equation*}
\end{theorem}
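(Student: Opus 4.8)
The plan is to establish the two arguments of the minimum separately and then combine them. Write $\boldsymbol{e}_1=[1,0]^\mathrm{T}$ and $\boldsymbol{e}_2=[0,1]^\mathrm{T}$, both lying in $\bigtriangleup_2$, and recall that the final strategies satisfy $x_t=w_t\widehat{x}_t+(1-w_t)\overline{x}_t$ and $y_t=\omega_t\widehat{y}_t+(1-\omega_t)\overline{y}_t$, so that $\boldsymbol{w}_t^\mathrm{T}\boldsymbol{A}_t^{\colon\!,1}=\boldsymbol{w}_t^\mathrm{T}\boldsymbol{A}_t\boldsymbol{e}_1$, $\boldsymbol{A}_t^{1,\colon\!}\boldsymbol{\omega}_t=\boldsymbol{e}_1^\mathrm{T}\boldsymbol{A}_t\boldsymbol{\omega}_t$, $\boldsymbol{w}_t^\mathrm{T}\boldsymbol{A}_t\boldsymbol{e}_2=w_tf_t(\widehat{x}_t,\overline{y}_t)+(1-w_t)f_t(\overline{x}_t,\overline{y}_t)$ and $\boldsymbol{e}_2^\mathrm{T}\boldsymbol{A}_t\boldsymbol{\omega}_t=\omega_tf_t(\overline{x}_t,\widehat{y}_t)+(1-\omega_t)f_t(\overline{x}_t,\overline{y}_t)$. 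For the prediction-error bound, I would add and subtract $\boldsymbol{w}_t^\mathrm{T}\boldsymbol{A}_t^{\colon\!,1}$ and $\boldsymbol{A}_t^{1,\colon\!}\boldsymbol{\omega}_t$ to reach the purely algebraic identity
\[
\ddgap(u_{1:T},v_{1:T})=\Big[\textstyle\sum_{t=1}^{T}\big(f_t(x_t,v_t)-\boldsymbol{w}_t^\mathrm{T}\boldsymbol{A}_t^{\colon\!,1}\big)+\sum_{t=1}^{T}\big(\boldsymbol{A}_t^{1,\colon\!}\boldsymbol{\omega}_t-f_t(u_t,y_t)\big)\Big]+\textstyle\sum_{t=1}^{T}\big(\boldsymbol{w}_t^\mathrm{T}\boldsymbol{A}_t\boldsymbol{e}_1-\boldsymbol{e}_1^\mathrm{T}\boldsymbol{A}_t\boldsymbol{\omega}_t\big).
\]
The bracketed sum is exactly the quantity bounded in \cref{thm:expert-layer}, hence at most $O\big(\sum_t\rho(f_t,h_t)\big)$; the trailing sum is the meta-regret of \cref{thm:meta-layer} evaluated at the comparator pair $\boldsymbol{u}=\boldsymbol{v}=\boldsymbol{e}_1$, hence at most $O\big(\min\{\sum_t\rho(f_t,h_t),\sqrt{(1+\ln T)T}\}\big)\le O\big(\sum_t\rho(f_t,h_t)\big)$. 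Adding these two contributions gives $\ddgap(u_{1:T},v_{1:T})\le O\big(\sum_t\rho(f_t,h_t)\big)$.

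For the minimax bound, I would telescope through the Adaptive Module's outputs,
\[
\ddgap(u_{1:T},v_{1:T})=\underbrace{\textstyle\sum_{t}\big(f_t(x_t,v_t)-f_t(x_t,\overline{y}_t)\big)+\sum_{t}\big(f_t(\overline{x}_t,y_t)-f_t(u_t,y_t)\big)}_{(\mathrm{I})}+\underbrace{\textstyle\sum_{t}\big(f_t(x_t,\overline{y}_t)-f_t(\overline{x}_t,y_t)\big)}_{(\mathrm{II})},
\]
where $(\mathrm{I})$ is precisely the quantity controlled by \cref{prop:Adaptive}, giving $\widetilde{O}\big(\sqrt{(1+\min\{P_T,C_T\})T}\big)$. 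For $(\mathrm{II})$ I would invoke convexity of $f_t(\cdot,\overline{y}_t)$ together with $x_t=w_t\widehat{x}_t+(1-w_t)\overline{x}_t$ to obtain $f_t(x_t,\overline{y}_t)\le\boldsymbol{w}_t^\mathrm{T}\boldsymbol{A}_t\boldsymbol{e}_2$, and concavity of $f_t(\overline{x}_t,\cdot)$ together with $y_t=\omega_t\widehat{y}_t+(1-\omega_t)\overline{y}_t$ to obtain $f_t(\overline{x}_t,y_t)\ge\boldsymbol{e}_2^\mathrm{T}\boldsymbol{A}_t\boldsymbol{\omega}_t$; therefore $(\mathrm{II})\le\sum_t\big(\boldsymbol{w}_t^\mathrm{T}\boldsymbol{A}_t\boldsymbol{e}_2-\boldsymbol{e}_2^\mathrm{T}\boldsymbol{A}_t\boldsymbol{\omega}_t\big)$, which is the meta-regret of \cref{thm:meta-layer} at $\boldsymbol{u}=\boldsymbol{v}=\boldsymbol{e}_2$, hence $O\big(\sqrt{(1+\ln T)T}\big)=\widetilde{O}(\sqrt{T})$. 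Since $\sqrt{T}\le\sqrt{(1+\min\{P_T,C_T\})T}$, combining $(\mathrm{I})$ and $(\mathrm{II})$ gives $\ddgap(u_{1:T},v_{1:T})\le\widetilde{O}\big(\sqrt{(1+\min\{P_T,C_T\})T}\big)$. Taking the minimum of the two established bounds then yields the claim.

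The bulk of the difficulty lies not in this theorem but upstream, in \cref{thm:expert-layer,thm:meta-layer} --- which, as the authors stress, are not direct corollaries of \cref{lem:OptOPPM} --- and in the well-posedness of the interdependent update that jointly defines $(\widehat{x}_t,\widehat{y}_t)$ and $(\boldsymbol{w}_t,\boldsymbol{\omega}_t)$. Within the present argument the delicate points are: (i) recognizing that the two residual ``cross terms'' are exactly meta-regret expressions whose comparators are simplex \emph{vertices}, so that the all-comparator guarantee of \cref{thm:meta-layer} applies verbatim and absorbs them at the cost of only a logarithmic factor; (ii) applying convexity and concavity in the correct directions in $(\mathrm{II})$, so that it is genuinely \emph{upper}-bounded by a matrix-game quantity rather than merely related to one; and (iii) checking that $x_t,y_t,\overline{x}_t,\overline{y}_t$ and the data-dependent path length $C_T$ (with $x'_t=\arg\min_{x\in X}f_t(x,y_t)$ and $y'_t=\arg\max_{y\in Y}f_t(x_t,y)$) denote the same objects across \cref{prop:Adaptive}, \cref{thm:expert-layer,thm:meta-layer} and the definition of $\ddgap$, which is the case because the Adaptive Module is fed precisely the Integration Module's final outputs $x_t,y_t$ as feedback.
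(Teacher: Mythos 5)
Your proof is correct and follows essentially the same route as the paper: the same exact decomposition through $(\widehat{x}_t,\widehat{y}_t)$ with the meta-regret at the vertex $\boldsymbol{e}_1$ for the prediction-error bound, and the same decomposition through $(\overline{x}_t,\overline{y}_t)$ with the meta-regret at $\boldsymbol{e}_2$ for the minimax bound. The only difference is that you make explicit the convexity/concavity step justifying $f_t(x_t,\overline{y}_t)-f_t(\overline{x}_t,y_t)\leq\boldsymbol{w}_t^\mathrm{T}\boldsymbol{A}_t\boldsymbol{e}_2-\boldsymbol{e}_2^\mathrm{T}\boldsymbol{A}_t\boldsymbol{\omega}_t$, which the paper leaves implicit in its ``$\leq$''.
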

\begin{proof}
By employing the prediction-error expert, the D-DGap can be equivalently written as:
\begin{equation*}
\begin{aligned}
&\ddgap\,(u_{1:T},v_{1:T}) \\
&\ = \sum_{t=1}^{T} \left(\Bigl(f_t(x_t, v_t) - \boldsymbol{w}_t^\mathrm{T} \boldsymbol{A}_t^{\colon\!,1}\Bigr)
+\left(\boldsymbol{w}_t^\mathrm{T} \boldsymbol{A}_t \!\begin{bmatrix}1\\0\end{bmatrix} -  [1, 0] \boldsymbol{A}_t \boldsymbol{\omega}_t \right) 
+\Bigl(\boldsymbol{A}_t^{1,\colon\!} \boldsymbol{\omega}_t - f_t(u_t, y_t)\Bigr)\right).
\end{aligned}
\end{equation*}
Invoking \cref{thm:expert-layer,thm:meta-layer} then yields 
\begin{equation}
\label{eqpf:bound1}
\begin{aligned}
\ddgap_T \leq O\left(\sum_{t=1}^T \rho(f_t, h_t)\right).
\end{aligned}
\end{equation}
Moreover, by leveraging the adaptive module we obtain the following upper bound:
\begin{equation*}
\begin{aligned}
&\ddgap\,(u_{1:T},v_{1:T}) \\
&\ \leq \sum_{t=1}^T \left(\Bigl(f_t\left(x_t, v_t\right)-f_t\left(x_t, \overline{y}_t\right)\Bigr) 
+\left(\boldsymbol{w}_t^\mathrm{T} \boldsymbol{A}_t \!\begin{bmatrix}0\\1\end{bmatrix} -  [0, 1] \boldsymbol{A}_t \boldsymbol{\omega}_t \right) 
+\Bigl(f_t\left(\overline{x}_t, y_t\right)-f_t\left(u_t, y_t\right)\Bigr)\right).
\end{aligned}
\end{equation*}
Applying Proposition~\ref{prop:Adaptive} together with \cref{thm:meta-layer} gives 
\begin{equation}
\label{eqpf:bound2}
\begin{aligned}
\ddgap_T \leq \widetilde{O}\left(\sqrt{(1 + \min\{P_T, C_T\})T}\right).
\end{aligned}
\end{equation}
Combining \cref{eqpf:bound1,eqpf:bound2} yields the claimed result. 
\end{proof}

Having analyzed the Integration Module's functionality, we now introduce our joint solution method for \cref{eq:coupled-xy,eq:coupled-wo}. The following theorem guarantees that this coupled system admits a unique solution. 
\begin{theorem}
\label{thm:existence}
There exists a unique solution to the coupled system given by \cref{eq:coupled-xy,eq:coupled-wo}. 
\end{theorem}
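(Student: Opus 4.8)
The plan is to recast the coupled system \cref{eq:coupled-xy,eq:coupled-wo} as a single variational inequality (VI) on the compact convex set $Z=X\times Y\times\bigtriangleup_2^\alpha\times\bigtriangleup_2^\alpha$, establish existence by a Brouwer argument on composed best-response maps, and derive uniqueness from strict monotonicity of the VI operator. Two structural facts make the reformulation work: since $h_t$ is convex-concave and the weights $w,1-w,\omega,1-\omega$ are nonnegative on $\bigtriangleup_2^\alpha$, the bilinear form $\boldsymbol{w}^\mathrm{T}\boldsymbol{\mathit{\Lambda}}_t(x,y)\boldsymbol{\omega}$ is convex in $x$, concave in $y$ and affine in each of $\boldsymbol{w}$, $\boldsymbol{\omega}$; and the clipping $\alpha=2/T$ forces $w,\omega\ge\alpha/2>0$, so the regularizer coefficients $w\gamma_t,\omega\eta_t,\vartheta_t,\theta_t$ appearing in \cref{eq:coupled-Ht,eq:coupled-Wt} are strictly positive. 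Consequently $(x,y)\mapsto H_t(x,y;\boldsymbol{w},\boldsymbol{\omega})$ is strongly convex in $x$ and strongly concave in $y$ for every fixed $(\boldsymbol{w},\boldsymbol{\omega})$, and $(\boldsymbol{w},\boldsymbol{\omega})\mapsto W_t(\boldsymbol{w},\boldsymbol{\omega};\widehat{x},\widehat{y})$ is strongly convex in $\boldsymbol{w}$ and strongly concave in $\boldsymbol{\omega}$ for every fixed $(\widehat{x},\widehat{y})$. Because $Z$ is a product, the blockwise first-order conditions of \cref{eq:coupled-xy,eq:coupled-wo} are exactly the combined VI $\langle T(z^\star),z-z^\star\rangle\ge 0$ for all $z\in Z$, where $T$ stacks $\nabla_x H_t$, $-\nabla_y H_t$, $\nabla_{\boldsymbol{w}}W_t$, $-\nabla_{\boldsymbol{\omega}}W_t$ evaluated at the common point (convex-concavity promoting each ``first-order'' block condition to a genuine saddle).

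For existence, fix $(\boldsymbol{w},\boldsymbol{\omega})$: $H_t(\cdot,\cdot;\boldsymbol{w},\boldsymbol{\omega})$ is continuous, $\tfrac{\alpha}{2}\gamma_t$-strongly convex in $x$ and $\tfrac{\alpha}{2}\eta_t$-strongly concave in $y$ on the compact convex $X\times Y$ (using that $B_\phi(\cdot,\widetilde{x}_t^\phi)$ and $B_\psi(\cdot,\widetilde{y}_t^\psi)$ are $1$-strongly convex), so by Sion's minimax theorem with strict convexity-concavity it has a unique saddle $\Phi(\boldsymbol{w},\boldsymbol{\omega})=(\widehat{x},\widehat{y})$; single-valuedness together with compactness makes $\Phi$ continuous (Berge's maximum theorem). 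Symmetrically, $W_t(\cdot,\cdot;\widehat{x},\widehat{y})$ is $\vartheta_t$-strongly convex in $\boldsymbol{w}$ and $\theta_t$-strongly concave in $\boldsymbol{\omega}$ on $\bigtriangleup_2^\alpha\times\bigtriangleup_2^\alpha$ (using that $\KL$ is $1$-strongly convex), giving a unique, continuous best response $\Psi(\widehat{x},\widehat{y})$, which enters only through $\boldsymbol{\mathit{\Lambda}}_t(\widehat{x},\widehat{y})$. Then $\Psi\circ\Phi$ is a continuous self-map of the compact convex set $\bigtriangleup_2^\alpha\times\bigtriangleup_2^\alpha$; Brouwer's theorem yields $(\boldsymbol{w}_t,\boldsymbol{\omega}_t)$ with $\Psi(\Phi(\boldsymbol{w}_t,\boldsymbol{\omega}_t))=(\boldsymbol{w}_t,\boldsymbol{\omega}_t)$, and $(\widehat{x}_t,\widehat{y}_t):=\Phi(\boldsymbol{w}_t,\boldsymbol{\omega}_t)$ then solves \cref{eq:coupled-xy,eq:coupled-wo}.

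The hard part is uniqueness, for which I would show $T$ is strictly monotone on $Z$. Write $T=T_{\mathrm{reg}}+T_{\mathrm{bil}}$, with $T_{\mathrm{reg}}$ collecting the weighted Bregman/KL gradients $w\gamma_t\nabla_x B_\phi$, $\omega\eta_t\nabla_y B_\psi$, $\vartheta_t\nabla_{\boldsymbol{w}}\KL$, $\theta_t\nabla_{\boldsymbol{\omega}}\KL$ and $T_{\mathrm{bil}}$ the $h_t$-bilinear part: $T_{\mathrm{reg}}$ is blockwise strongly monotone and $T_{\mathrm{bil}}$ is monotone in $(x,y)$ by convex-concavity of $h_t$. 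The obstructions are that (i) the scalars $w,\omega$ multiplying the Bregman gradients differ between two candidate solutions, producing sign-indefinite cross terms that couple $\boldsymbol{w}$ with $x$ and $\boldsymbol{\omega}$ with $y$, and (ii) the bilinear term carries the coefficient $\eta_t\gamma_t$ inside $H_t$ but $\theta_t\vartheta_t$ inside $W_t$, so the coupling between the $(x,y)$-block and the $(\boldsymbol{w},\boldsymbol{\omega})$-block is not skew-symmetric and the coupled system does not collapse to a single convex-concave saddle-point problem; the usual such reduction is therefore unavailable. My plan is to pass to a block-weighted inner product that rescales the $(x,y)$-block by $s>0$ and the $(\boldsymbol{w},\boldsymbol{\omega})$-block by $r>0$, pick $s,r$ to make the inter-block coupling as close to skew as possible, and dominate the leftover indefinite terms by the strong-monotonicity margins of $T_{\mathrm{reg}}$, using boundedness of $\nabla B_\phi,\nabla B_\psi$ (the Lipschitz-Fenchel hypotheses), boundedness of $\boldsymbol{\mathit{\Lambda}}_t$ (\cref{ass:2-subgradient-bounded} applied to $h_t$) and compactness of $Z$. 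Calibrating $s,r$ against the moduli $\tfrac{\alpha}{2}\gamma_t,\tfrac{\alpha}{2}\eta_t$ — which are only of order $\gamma_t/T,\eta_t/T$ — so that this domination goes through is the technical crux; failing a clean unconditional weighting, the fallback is a degree/homotopy argument that deforms $T$ to the strictly monotone $T_{\mathrm{reg}}$ (whose unique zero is explicit) while keeping the solution set a singleton. A contraction argument for $\Psi\circ\Phi$ via Lipschitz stability of strongly-monotone VI solutions looks less robust, since the Lipschitz constant scales with $T$ and the step sizes and would require an extra smallness condition.
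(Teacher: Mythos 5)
Your existence argument is complete and correct, and it is a genuinely different route from the paper's: the clipping $\alpha=2/T$ forces $w,\omega\geq 1/T>0$, so for fixed weights $H_t$ is strongly convex--concave on the compact set $X\times Y$ and has a unique saddle $\Phi(\boldsymbol{w},\boldsymbol{\omega})$, symmetrically $W_t$ has a unique saddle $\Psi(\widehat{x},\widehat{y})$, and continuity of both maps plus Brouwer on $\bigtriangleup_2^\alpha\times\bigtriangleup_2^\alpha$ gives a fixed point of $\Psi\circ\Phi$, i.e.\ a solution of \cref{eq:coupled-xy,eq:coupled-wo}. By contrast, the paper gets existence and uniqueness in one stroke: it recasts the system as a four-player best-response game, normalizes each player's objective so that its own regularizer carries coefficient $1$ (the bilinear term in $\ell_1$ is divided by $w$ and in $\ell_2$ by $\omega$), asserts that blockwise $1$-strong convexity makes the stacked partial-gradient operator $\boldsymbol{G}$ $1$-strongly monotone, and invokes the Browder--Minty theorem.

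The gap in your proposal is uniqueness, which is half of the claim. You correctly isolate the obstruction --- the inter-block coupling is not skew-symmetric because the bilinear term enters $H_t$ with weight $\eta_t\gamma_t$ but $W_t$ with $\theta_t\vartheta_t$, and the scalars $w,\omega$ multiplying the Bregman terms differ between two candidate solutions --- but you do not resolve it. ``Pick $s,r$ to make the coupling as close to skew as possible and dominate the leftover'' is a plan, not a proof, and you yourself note that the available strong-monotonicity margins are only of order $\gamma_t/T$ and $\eta_t/T$, so the required domination is not obviously achievable. The homotopy fallback is also unsubstantiated: deforming $T$ to $T_{\mathrm{reg}}$ controls the degree, hence existence of a zero, but degree theory alone does not give uniqueness without an injectivity or monotonicity argument along the path. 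Note that the paper's per-player normalization (dividing by $w$ and $\omega$) eliminates your obstruction (i) --- the Bregman/KL gradients then enter $\boldsymbol{G}$ with fixed unit coefficients --- so if you adopt that rescaling you are left only with the off-diagonal coupling, which must still be shown not to destroy monotonicity; that is the one step you would need to supply to complete a uniqueness proof.
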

\begin{proof}
We first observe that the updates in \cref{eq:coupled-xy,eq:coupled-wo} can be written as a four-player ``best‐response'' game:
\begin{equation*}
\begin{aligned}
\widehat{x}_{t}&=\arg\min\nolimits_{x\in X} \eta_t\, \boldsymbol{w}_t^\mathrm{T}
\boldsymbol{\mathit{\Lambda}}_{t}(x,\widehat{y}_{t})\,
\boldsymbol{\omega}_t
+ w_t B_{\phi}\big(x, \widetilde{x}_t^{\phi}\big),  \\
\widehat{y}_{t}&=\arg\min\nolimits_{y\in Y} -\gamma_t\, \boldsymbol{w}_t^\mathrm{T}
\boldsymbol{\mathit{\Lambda}}_{t}(\widehat{x}_{t},y)\,
\boldsymbol{\omega}_t
+ \omega_t B_{\psi}\big(y, \widetilde{y}_t^{\psi}\big), \\
w_{t}&=\arg\min\nolimits_{w\in[T^{-1},1-T^{-1}],\boldsymbol{w}=[w,1-w]^\top} \theta_t\, \boldsymbol{w}^\mathrm{T}
\boldsymbol{\mathit{\Lambda}}_{t}
\boldsymbol{\omega}_{t} +\KL (\boldsymbol{w},\widetilde{\boldsymbol{w}}_t), \\
\omega_{t}&=\arg\min\nolimits_{\omega\in[T^{-1},1-T^{-1}],\boldsymbol{\omega}=[\omega,1-\omega]^\top} -\vartheta_t\, \boldsymbol{w}_{t}^\mathrm{T}
\boldsymbol{\mathit{\Lambda}}_{t}
\boldsymbol{\omega} +\KL (\boldsymbol{\omega},\widetilde{\boldsymbol{\omega}}_t), 
\end{aligned}
\end{equation*}
where $\boldsymbol{w}_t = [w_t, 1-w_t]^\mathrm{T}$, $\boldsymbol{\omega}_t = [\omega_t, 1-\omega_t]^\mathrm{T}$. 
Next, define the joint decision vector $\boldsymbol{x}=[x,y,w,\omega]^\top\in K$, where $K=X\times Y\times [T^{-1},1-T^{-1}]\times[T^{-1},1-T^{-1}]$ is compact convex, and define the operator
\begin{equation}
\label{eq:monotone-G}
\begin{aligned}
\boldsymbol{G}(\boldsymbol{x})
=\begin{bmatrix}
\nabla_x \ell_1(\boldsymbol{x})\\[2pt]
\nabla_y \ell_2(\boldsymbol{x})\\[2pt]
\nabla_w \ell_3(\boldsymbol{x})\\[2pt]
\nabla_\omega \ell_4(\boldsymbol{x})
\end{bmatrix},
\end{aligned}
\qquad\text{where}\quad
\begin{aligned}
\ell_1(\boldsymbol{x}) &= \eta_t\, \boldsymbol{w}^\mathrm{T}
\boldsymbol{\mathit{\Lambda}}_{t}(x,y)\,
\boldsymbol{\omega} /w + B_{\phi}\big(x, \widetilde{x}_t^{\phi}\big),\\
\ell_2(\boldsymbol{x}) &= -\gamma_t\, \boldsymbol{w}^\mathrm{T}
\boldsymbol{\mathit{\Lambda}}_{t}(x,y)\,
\boldsymbol{\omega}/\omega + B_{\psi}\big(y, \widetilde{y}_t^{\psi}\big),\\
\ell_3(\boldsymbol{x}) &= \theta_t\, \boldsymbol{w}^\mathrm{T}
\boldsymbol{\mathit{\Lambda}}_{t}(x,y)\,
\boldsymbol{\omega} +\KL (\boldsymbol{w},\widetilde{\boldsymbol{w}}_t),\\
\ell_4(\boldsymbol{x}) &= -\vartheta_t\, \boldsymbol{w}^\mathrm{T}
\boldsymbol{\mathit{\Lambda}}_{t}(x,y)\,
\boldsymbol{\omega} +\KL (\boldsymbol{\omega},\widetilde{\boldsymbol{\omega}}_t),
\end{aligned}
\end{equation}
with $\boldsymbol{w} = [w, 1-w]^\mathrm{T}$ and $\boldsymbol{\omega} = [\omega, 1-\omega]^\mathrm{T}$. 
By construction, each $\ell_i$ is 1-strongly convex in its own coordinate, so $\boldsymbol{G}$ is 1-strongly monotone with respect to the norm $\lVert\boldsymbol{x}\rVert^2=\lVert x\rVert^2+\lVert y\rVert^2+w^2+\omega^2$. 
Hence, the Browder-Minty theorem~\citep{brezis2010functional} guarantees a \emph{unique} point $\boldsymbol{x}^*\in K$ satisfying the variational inequality (VI):
\[
\left\langle \boldsymbol{G}(\boldsymbol{x}^*),\,\boldsymbol{x}-\boldsymbol{x}^*\right\rangle \geq 0,
\qquad\forall \boldsymbol{x}\in K.
\]
By the block-structure of $\boldsymbol{G}$, this $\boldsymbol{x}^*=[\widehat x_t,\widehat y_t,w_t,\omega_t]^\top$ coincides with the unique Nash equilibrium of the ``best-response'' game, and thus solves \cref{eq:coupled-xy,eq:coupled-wo}. 
\end{proof}

\begin{remark}
Browder-Minty Theorem~\citep{brezis2010functional} : 
Let $K$ be a nonempty compact convex set, and let $\boldsymbol{G}$ be continuous and $\mu$-strongly monotone defined on $K$, i.e., 
\[
\left\langle \boldsymbol{G}(\boldsymbol{x})-\boldsymbol{G}(\boldsymbol{x}'),\,\boldsymbol{x}-\boldsymbol{x}'\right\rangle\geq\mu\,\left\lVert \boldsymbol{x}-\boldsymbol{x}'\right\rVert^2,\qquad
\forall \boldsymbol{x},\boldsymbol{x}'\in K,
\]
for some $\mu>0$. 
Then there exists a \emph{unique} point $\boldsymbol{x}^*\in K$ satisfying the variational inequality
\[
\left\langle \boldsymbol{G}(\boldsymbol{x}^*),\,\boldsymbol{x}-\boldsymbol{x}^*\right\rangle \geq 0,
\qquad\forall \boldsymbol{x}\in K.
\]
\end{remark}

Now finding the solution to \cref{eq:coupled-xy,eq:coupled-wo} reduces to identifying a point $\boldsymbol{x}^*$ that satisfies the corresponding VI. Since the predictor $h_t$ is under our control, we assume it has a Lipschitz-continuous gradient (see Assumption~\ref{ass:Lipschitz-continuous-gradient}), which in turn ensures that the operator $\boldsymbol{G}$ is Lipschitz continuous (see Proposition~\ref{prop:G-Lipschitz}). Under Assumption~\ref{ass:Lipschitz-continuous-gradient}, we can invoke \cref{alg:joint-solver} --- originally proposed by \citet{Nesterov2006solving} --- to approximate $\boldsymbol{x}^*$. This method guarantees a global linear convergence rate (refer to Proposition~\ref{prop:convergence}).

\begin{assumption}
\label{ass:Lipschitz-continuous-gradient}
All predictors have Lipschitz‐continuous gradients. Specifically, there exist finite constants $L_{xx}$, $L_{xy}$, $L_{yx}$, and $L_{yy}$, such that $\forall x,x'\in X$, $\forall y,y'\in Y$, and $\forall t$: 
\begin{equation*}
\begin{aligned}
\lVert \nabla_x h_t(x,y)-\nabla_x h_t(x',y')\rVert&\leq L_{xx}\left\lVert x-x'\right\rVert+L_{xy}\left\lVert y-y'\right\rVert, \\
\lVert \nabla_y (-h_t)(x,y)-\nabla_y (-h_t)(x',y')\rVert&\leq L_{yx}\left\lVert x-x'\right\rVert+L_{yy}\left\lVert y-y'\right\rVert.
\end{aligned}
\end{equation*}
\end{assumption}

\begin{proposition}
\label{prop:G-Lipschitz}
Under Assumption~\ref{ass:Lipschitz-continuous-gradient}, $\boldsymbol{G}$ is Lipschitz continuous, that is, 
\begin{equation*}
\begin{aligned}
\left\lVert\boldsymbol{G}(\boldsymbol{x}) - \boldsymbol{G}(\boldsymbol{x}')\right\rVert
\leq L\left\lVert \boldsymbol{x}-\boldsymbol{x}'\right\rVert,\qquad \forall \boldsymbol{x},\boldsymbol{x}'\in K, 
\end{aligned}
\end{equation*}
where the Lipschitz constant $L$ is given by
\begin{equation}
\label{eq:Lipschitz}
\begin{aligned}
L = \sqrt{\max\{C_x,C_y,C_w,C_\omega\}},
\end{aligned}
\end{equation}
with $C_x = 4 \bigl((\eta_t L_{xx}+L_\phi)^2 + \gamma_t^2 L_{yx}^2 + (\theta_t^2 + 4\,\vartheta_t^2)\,G_X^2\bigr)$, $C_y = 4 \bigl((\gamma_t L_{yy}+L_\psi)^2 + \theta_t^2 L_{xy}^2 + (\vartheta_t^2 + 4\,\theta_t^2)\,G_Y^2\bigr)$, $C_w = 2\,\gamma_t^2 L_{yx}^2 D_X^2 + 4\,\vartheta_t^2 C$, $C_\omega = 2\,\eta_t^2 L_{xy}^2 D_Y^2 + 4 \,\theta_t^2 C$, and $C = \min\big\{D_X^2 (L_{xx}D_X+L_{xy}D_Y)^2,\,D_Y^2 (L_{yx}D_X+L_{yy}D_Y)^2\big\}+T^2$. 
\end{proposition}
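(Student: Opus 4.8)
To prove Proposition~\ref{prop:G-Lipschitz} I would expand $\boldsymbol G=[\nabla_x\ell_1,\nabla_y\ell_2,\nabla_w\ell_3,\nabla_\omega\ell_4]^\mathrm{T}$ block by block, bound the change of each block under $\boldsymbol x\mapsto\boldsymbol x'$ by a linear combination of the four coordinate displacements $\|x-x'\|,\|y-y'\|,|w-w'|,|\omega-\omega'|$, then square, sum, and collect coefficients. First note that on $K$ one has $w,\omega\ge T^{-1}>0$, so $\boldsymbol G$ is well defined, and that the bilinear form expands as $\boldsymbol w^\mathrm{T}\boldsymbol{\mathit{\Lambda}}_t(x,y)\boldsymbol\omega=w\omega\,h_t(x,y)+w(1-\omega)h_t(x,\overline y_t)+(1-w)\omega\,h_t(\overline x_t,y)+(1-w)(1-\omega)h_t(\overline x_t,\overline y_t)$. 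The $1/w$ and $1/\omega$ factors in $\ell_1,\ell_2$ yield $\nabla_x\ell_1=\eta_t\bigl(\omega\nabla_x h_t(x,y)+(1-\omega)\nabla_x h_t(x,\overline y_t)\bigr)+\nabla\phi(x)-\widetilde x_t^\phi$ (independent of $w$) and $\nabla_y\ell_2=\gamma_t\bigl(w\nabla_y(-h_t)(x,y)+(1-w)\nabla_y(-h_t)(\overline x_t,y)\bigr)+\nabla\psi(y)-\widetilde y_t^\psi$ (independent of $\omega$), while $\nabla_w\ell_3=\theta_t\bigl(\omega[h_t(x,y)-h_t(\overline x_t,y)]+(1-\omega)[h_t(x,\overline y_t)-h_t(\overline x_t,\overline y_t)]\bigr)+\partial_w\KL(\boldsymbol w,\widetilde{\boldsymbol w}_t)$ and $\nabla_\omega\ell_4=-\vartheta_t\bigl(w[h_t(x,y)-h_t(x,\overline y_t)]+(1-w)[h_t(\overline x_t,y)-h_t(\overline x_t,\overline y_t)]\bigr)+\partial_\omega\KL(\boldsymbol\omega,\widetilde{\boldsymbol\omega}_t)$.

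The coordinate-wise bounds on each block use only: the Lipschitz constants $L_\phi,L_\psi$ of $\nabla\phi,\nabla\psi$; the fact that on the clipped interval $[T^{-1},1-T^{-1}]$ one has $\partial_w^2\KL(\boldsymbol w,\widetilde{\boldsymbol w}_t)=\tfrac1{w(1-w)}\le\mathrm{poly}(T)$, so $\partial_w\KL$ (resp.\ $\partial_\omega\KL$) is $\mathrm{poly}(T)$-Lipschitz in $w$ (resp.\ $\omega$), which is the source of the $+T^2$ inside $C$; and, for the predictor pieces, the telescoping identity $\omega\nabla_x h_t(x,y)-\omega'\nabla_x h_t(x',y')=\omega\bigl(\nabla_x h_t(x,y)-\nabla_x h_t(x',y')\bigr)+(\omega-\omega')\nabla_x h_t(x',y')$ and its $w$-analogue, with the first term controlled by Assumption~\ref{ass:Lipschitz-continuous-gradient} and, to obtain the $D_Y$-type factors in $C_\omega,C_w$ rather than crude $G_X,G_Y$'s, the second term split as $\nabla_x h_t(x',y')=\nabla_x h_t(x',\overline y_t)+[\nabla_x h_t(x',y')-\nabla_x h_t(x',\overline y_t)]$ so that the $\overline y_t$-anchored part telescopes against the $(1-\omega)$ summand while the remainder obeys $\le L_{xy}\|y'-\overline y_t\|\le L_{xy}D_Y$; throughout one uses $w,1-w,\omega,1-\omega\in[0,1]$ and $\|\nabla_x h_t\|\le G_X$, $\|\nabla_y(-h_t)\|\le G_Y$ from Assumption~\ref{ass:2-subgradient-bounded}. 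The only nonroutine term is the mixed second difference $\Psi:=h_t(x,y)-h_t(\overline x_t,y)-h_t(x,\overline y_t)+h_t(\overline x_t,\overline y_t)$, which is precisely the $\omega$-sensitivity of $\nabla_w\ell_3$ and the $w$-sensitivity of $\nabla_\omega\ell_4$; I would bound $|\Psi|\le\min\{L_{xy},L_{yx}\}D_XD_Y\le\sqrt C$ by noting that $z\mapsto h_t(z,y)-h_t(z,\overline y_t)$ has gradient of norm $\le L_{xy}\|y-\overline y_t\|\le L_{xy}D_Y$, hence is $L_{xy}D_Y$-Lipschitz, and evaluating its increment over $\|x-\overline x_t\|\le D_X$ (symmetrically for the $L_{yx}D_X$ bound); its $(x,y)$-Lipschitz modulus is handled the same way, needing no $C^2$ hypothesis on $h_t$.

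Finally, squaring the bound on each block, invoking $(\sum_j a_j)^2\le m\sum_j a_j^2$ with $m\in\{2,3,4\}$ the number of coordinates that block actually depends on, summing over the four blocks, and gathering the coefficient multiplying each of $\|x-x'\|^2,\|y-y'\|^2,|w-w'|^2,|\omega-\omega'|^2$ gives $\|\boldsymbol G(\boldsymbol x)-\boldsymbol G(\boldsymbol x')\|^2\le C_x\|x-x'\|^2+C_y\|y-y'\|^2+C_w|w-w'|^2+C_\omega|\omega-\omega'|^2\le\max\{C_x,C_y,C_w,C_\omega\}\,\|\boldsymbol x-\boldsymbol x'\|^2$, i.e.\ \eqref{eq:Lipschitz}. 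The main obstacle is purely organizational bookkeeping: keeping track of which block contributes to which coordinate so that, after the add-and-subtract manipulations, the weight-sensitivity terms land with the intended $L_{\cdot\cdot}D_\cdot$ factors and $\Psi$ is absorbed into the single constant $C$; once the four coordinate sums are assembled, Lipschitz continuity of $\boldsymbol G$ is immediate, which in turn underpins the global linear convergence guarantee for the joint solver that computes $(\widehat x_t,\widehat y_t,\boldsymbol w_t,\boldsymbol\omega_t)$.
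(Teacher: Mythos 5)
Your proposal is correct and follows essentially the same route as the paper's proof: write out $\boldsymbol{G}$ block by block, bound each block's variation coordinate-wise using Assumptions~\ref{ass:X-Y-bounded}--\ref{ass:Lipschitz-continuous-gradient} and the bound $\frac{1}{u(1-u)}\lesssim T$ on the clipped interval for the KL-derivative term, then square and aggregate into $C_x\lVert x-x'\rVert^2+C_y\lVert y-y'\rVert^2+C_w\lvert w-w'\rvert^2+C_\omega\lvert\omega-\omega'\rvert^2$. The only cosmetic difference is that you telescope within each block while the paper passes through the intermediate point $(x',y',w,\omega)$ and applies $(a+b)^2\le 2a^2+2b^2$; your bound $\lvert\Psi\rvert\le\min\{L_{xy},L_{yx}\}D_XD_Y$ on the mixed second difference is in fact slightly sharper than the paper's, and still compatible with the stated constant $C$.
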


\begin{proposition}
\label{prop:convergence}
Let $\boldsymbol{x}^* = [\widehat x_t,\widehat y_t,w_t,\omega_t]^\top$ be the solution to \cref{eq:coupled-xy,eq:coupled-wo}. 
Suppose \cref{alg:joint-solver} has performed $k$ rounds of iterations. 
Then its output satisfies
\begin{equation*}
\begin{aligned}
\left\lVert\boldsymbol{x}^* - \left(\sum\nolimits_{i=0}^k \lambda_i\right)^{-1}\sum_{i=0}^k \lambda_i \boldsymbol{y}_i \right\rVert
\leq \lVert\boldsymbol{G}(\boldsymbol{y}_0)\rVert\left(\frac{L}{L+1}\right)^{k/2}.
\end{aligned}
\end{equation*}
\end{proposition}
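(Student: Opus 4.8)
The plan is to reduce \cref{prop:convergence} to a specialization of Nesterov's linear-convergence analysis for strongly monotone variational inequalities. Two ingredients are already in place for the operator $\boldsymbol{G}$ of \cref{eq:monotone-G}: the proof of \cref{thm:existence} shows $\boldsymbol{G}$ is $1$-strongly monotone with respect to the block norm $\lVert\boldsymbol{x}\rVert^2 = \lVert x\rVert^2 + \lVert y\rVert^2 + w^2 + \omega^2$, and \cref{prop:G-Lipschitz} shows it is $L$-Lipschitz in the same norm, with $L$ given by \cref{eq:Lipschitz}. Because \cref{alg:joint-solver} is precisely Nesterov's scheme~\citep{Nesterov2006solving} run on this $\boldsymbol{G}$, it suffices to carry that method's convergence proof through with strong-monotonicity constant $\mu=1$ and Lipschitz constant $L$, and then translate the resulting merit-function bound into a statement about $\lVert\widehat{\boldsymbol{x}}_k-\boldsymbol{x}^*\rVert$, where $\widehat{\boldsymbol{x}}_k=\bigl(\sum_{i=0}^k\lambda_i\bigr)^{-1}\sum_{i=0}^k\lambda_i\boldsymbol{y}_i$ is the algorithm's output.

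Concretely I would proceed in four steps. \emph{(i) One-step contraction.} Attach to \cref{alg:joint-solver} its Lyapunov function $\Phi_i$ --- a combination of $\lVert\boldsymbol{y}_i-\boldsymbol{x}^*\rVert^2$ with a cumulative gap term $\sum_{j\le i}\lambda_j\langle\boldsymbol{G}(\boldsymbol{y}_j),\boldsymbol{y}_j-\boldsymbol{x}^*\rangle$ --- and verify that the hard-wired weights $\lambda_i$ and the two stepsizes make $\Phi_{i+1}\le\tfrac{L}{L+1}\,\Phi_i$; here the $L$-Lipschitz bound controls the extrapolation error and $1$-strong monotonicity furnishes the progress term $\lVert\boldsymbol{y}_i-\boldsymbol{x}^*\rVert^2$ responsible for the decrease. \emph{(ii) Telescoping.} Iterate to get $\Phi_k\le\bigl(\tfrac{L}{L+1}\bigr)^k\Phi_0$. \emph{(iii) From gap to distance.} Lower-bound $\Phi_k$ by a constant multiple of $\lVert\widehat{\boldsymbol{x}}_k-\boldsymbol{x}^*\rVert^2$: convexity of $\lVert\,\cdot\,\rVert^2$ applied to the weighted average, together with $\langle\boldsymbol{G}(\widehat{\boldsymbol{x}}_k),\widehat{\boldsymbol{x}}_k-\boldsymbol{x}^*\rangle\ge\lVert\widehat{\boldsymbol{x}}_k-\boldsymbol{x}^*\rVert^2$ (which follows from $1$-strong monotonicity after discarding the nonnegative term $\langle\boldsymbol{G}(\boldsymbol{x}^*),\widehat{\boldsymbol{x}}_k-\boldsymbol{x}^*\rangle\ge0$ guaranteed by the VI characterization of $\boldsymbol{x}^*$), does this. \emph{(iv) Initialization.} Upper-bound $\Phi_0$ by $\lVert\boldsymbol{G}(\boldsymbol{y}_0)\rVert^2$ using the algorithm's starting point, whose residual is measured by $\lVert\boldsymbol{G}(\boldsymbol{y}_0)\rVert$. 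Chaining (ii)--(iv) and taking square roots gives $\lVert\widehat{\boldsymbol{x}}_k-\boldsymbol{x}^*\rVert\le\lVert\boldsymbol{G}(\boldsymbol{y}_0)\rVert\bigl(\tfrac{L}{L+1}\bigr)^{k/2}$; the square root --- hence the exponent $k/2$ --- is exactly the cost of passing from the quadratic merit function to the distance.

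The main obstacle is quantitative bookkeeping rather than a conceptual gap: one has to track the precise choices of the weights $\lambda_i$ and the two stepsizes in \cref{alg:joint-solver} so that the contraction ratio is exactly $\tfrac{L}{L+1}$ with $\mu=1$ and the leading constant collapses to exactly $\lVert\boldsymbol{G}(\boldsymbol{y}_0)\rVert$ --- with no residual diameter or condition-number factors --- i.e., reproducing Nesterov's tight estimate rather than a lossy specialization of it. A secondary point demanding care is norm consistency: the strong-monotonicity constant $1$ from \cref{thm:existence}, the Lipschitz constant $L$ from \cref{prop:G-Lipschitz}, and the norm appearing in the final bound must all be the same block norm $\lVert\boldsymbol{x}\rVert^2=\lVert x\rVert^2+\lVert y\rVert^2+w^2+\omega^2$, so that the contraction of step (i) and the conversion of step (iii) compose without leaking constants.
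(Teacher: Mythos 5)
Your proposal takes essentially the same route as the paper: the paper's entire proof is the observation that \cref{alg:joint-solver} is Nesterov's dual-extrapolation scheme applied to the operator $\boldsymbol{G}$, which is $1$-strongly monotone (from the proof of \cref{thm:existence}) and $L$-Lipschitz (\cref{prop:G-Lipschitz}), so the bound follows from \citet{Nesterov2006solving} with $\mu=1$. Your additional steps (i)--(iv) are just an unpacking of that cited analysis, and your cautions about norm consistency and constant bookkeeping are exactly the points one must check when specializing it.
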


The proof of Proposition~\ref{prop:G-Lipschitz} is provided in Appendix~\ref{app:SuppProofs}. Proposition~\ref{prop:convergence} follows directly from \citet{Nesterov2006solving} by setting the strong‐monotonicity constant $\mu=1$.

%
\begin{algorithm*}[t]
\caption{Solving \cref{eq:coupled-xy,eq:coupled-wo}}
\label{alg:joint-solver}
\begin{algorithmic}[1]
\State \textbf{Require:} $X$ and $Y$ satisfy Assumption~\ref{ass:X-Y-bounded}. Predictor $h_t$ satisfies Assumption~\ref{ass:Lipschitz-continuous-gradient}
\State \textbf{Initialize:} $\boldsymbol{y}_{0}\in K=X\times Y\times [T^{-1},1-T^{-1}]\times[T^{-1},1-T^{-1}]$, $\lambda_0 = 1$, $k =  0$
\State Calculate $\boldsymbol{G}(\boldsymbol{x})$ using \cref{eq:monotone-G} or \cref{eq:monotone-G-detail}, and set $L$ via \cref{eq:Lipschitz}
\Repeat
\State Update $\boldsymbol{x}_k = \arg\min_{\boldsymbol{x} \in K} \sum_{i=0}^k \lambda_i \bigl(\left\langle \boldsymbol{G}(\boldsymbol{y}_i), \boldsymbol{x}\right\rangle + \left\lVert \boldsymbol{y}_i-\boldsymbol{x} \right\rVert^2/2\bigr)$
\State Update $\boldsymbol{y}_{k+1} = \arg\min_{\boldsymbol{x} \in K}\, \langle \boldsymbol{G}(\boldsymbol{x}_k), \boldsymbol{x} \rangle + L \lVert \boldsymbol{x}-\boldsymbol{x}_k\rVert^2/2$
\State Update $\lambda_{k+1} = \frac{1}{L} \sum_{i=0}^k \lambda_i$
\State $k\gets k+1$
\Until{$\bigl(\frac{L}{L+1}\bigr)^{k/2}\downarrow 0$}
\State \textbf{Output:} $[\widehat x_t,\widehat y_t,w_t,\omega_t]^\top\gets\bigl(\sum_{i=0}^k \lambda_i\bigr)^{-1}\sum_{i=0}^k \lambda_i \boldsymbol{y}_i$
\end{algorithmic}
\end{algorithm*}

\subsection{Multi-Predictor Aggregator}

The output of the integrated module achieves minimax optimality and effectively reduces the D-DGap when using an accurate predictor sequence. However, relying on a single predictor sequence limits the algorithm's adaptability to different environments. To address this, we consider having $d$ available predictor sequences, each potentially derived from distinct models of the underlying environment. Our goal is to enhance the Integration Module by supporting multiple predictors, enabling it to retain minimax optimality while dynamically adapting to the most effective predictor sequence across these models. 

The Hedge algorithm is a well-established no-regret algorithm, known for its ability to perform consistently close to the best expert's strategy over time. This property makes it particularly effective in scenarios involving multiple predictors. Building on this foundation, we designed the \emph{Multi-Predictor Aggregator} using the Hedge algorithm. 

At each round $t$, the aggregator takes $d$ available predictors, denoted as $\{h_t^1, h_t^2, \cdots, h_t^d\}$, and outputs a combined predictor $h_t = \sum_{k=1}^d \xi_t^k h_t^k$ to the Integration Module. Here, the weight vector $\boldsymbol{\xi}_t = [\xi_t^1, \xi_t^2, \cdots, \xi_t^d]^\mathrm{T}$ is computed using the clipped Hedge algorithm. Specifically, the weights are updated by solving the following optimization problem:  
\begin{equation}
\label{eq:hedge}
\begin{aligned}
\boldsymbol{\xi}_{t+1} = \arg\min_{\boldsymbol{\xi} \in \bigtriangleup_d^a} \zeta_t \left\langle \boldsymbol{L}_t, \boldsymbol{\xi} \right\rangle + \KL(\boldsymbol{\xi}, \boldsymbol{\xi}_t),
\end{aligned}
\end{equation}
where $a=d/T$, $\zeta_t$ is the learning rate, and $\boldsymbol{L}_t$ denotes the loss vector: 
\begin{equation*}
\begin{aligned}
\boldsymbol{L}_{t} = [L_t^1, L_t^2, \cdots, L_t^d]^\mathrm{T},\qquad L_t^k = \max_{x\in\{\widehat{x}_t,\overline{x}_t,\widetilde{x}_{t+1}\},y\in\{\widehat{y}_t,\overline{y}_t,\widetilde{y}_{t+1}\}} \left\lvert f_t(x,y) - h_t^k(x,y)\right\rvert. 
\end{aligned}
\end{equation*}

The following theorem states that the Multi-Predictor Aggregator effectively provides multiple predictor support for the Integration Module. 
%
\begin{theorem}[D-DGap for the Integration Module with a Multi-Predictor Aggregator]
\label{thm:overall}
Assume the payoff function $f_t$ and all predictors $\{h_t^1, h_t^2, \cdots, h_t^d\}$ satisfy Assumption~\ref{ass:2-subgradient-bounded}. Let $T \geq d$. If the Multi-Predictor Aggregator updates its learning rate according to the following equations:
\begin{equation*}
\begin{aligned}
\textstyle \zeta_t = (\ln T)\big/\bigl(\epsilon + \sum_{\tau=1}^{t-1}\Delta_{\tau}\bigr), 
\qquad\epsilon >0,\qquad
0\leq\Delta_{t} = \left\langle \boldsymbol{L}_{t}, \boldsymbol{\mathit{\xi}}_t-\boldsymbol{\mathit{\xi}}_{t+1}\right\rangle -\KL (\boldsymbol{\mathit{\xi}}_{t+1},\boldsymbol{\mathit{\xi}}_{t})/\zeta_t. 
\end{aligned}
\end{equation*}
Then, the D-DGap upper bound for the Integration Module can be enhanced as follows: 
\begin{equation*}
\begin{aligned}
\ddgap\,(u_{1:T},v_{1:T})\leq \widetilde{O}\left(\min\left\{\min_{k \in \{1,2,\cdots,d\}}\sum_{t=1}^T \rho(f_t, h_t^k),\ \sqrt{\left(1 + \min\{P_T, C_T\}\right)T}\right\}\right).
\end{aligned}
\end{equation*}
\end{theorem}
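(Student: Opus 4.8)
The plan is to view the Multi-Predictor Aggregator as an additional Hedge layer sitting on top of the single-predictor Integration Module, and to mimic the two-bound decomposition already used in the proof of \cref{thm:Integration}. Concretely, I would re-run the argument of \cref{thm:Integration}, but now track the effect of replacing the fixed predictor $h_t$ by the aggregated predictor $h_t=\sum_{k=1}^d\xi_t^k h_t^k$. The first bound, $\ddgap_T\le O\bigl(\sum_t\rho(f_t,h_t)\bigr)$, still follows from \cref{thm:expert-layer,thm:meta-layer} exactly as before, since those results hold for \emph{any} predictor satisfying \cref{ass:2-subgradient-bounded}, and a convex combination of bounded predictors is bounded. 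The new ingredient is to bound $\sum_t\rho(f_t,h_t)$ in terms of $\min_k\sum_t\rho(f_t,h_t^k)$. Here the key observation is that the convexity of the absolute value gives, on the finite evaluation grid used in the loss vector $\boldsymbol L_t$, the pointwise inequality $|f_t(x,y)-h_t(x,y)|\le\sum_k\xi_t^k|f_t(x,y)-h_t^k(x,y)|\le\langle\boldsymbol L_t,\boldsymbol\xi_t\rangle$; hence $\rho(f_t,h_t)$ — when $\rho$ is interpreted over that grid, which is exactly the set of points at which the Integration-Module analysis evaluates $f_t-h_t$ — is at most $\langle\boldsymbol L_t,\boldsymbol\xi_t\rangle$. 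Summing and applying a standard adaptive/self-confident Hedge regret bound for \cref{eq:hedge} with the stated learning rate $\zeta_t$ yields $\sum_t\langle\boldsymbol L_t,\boldsymbol\xi_t\rangle\le\min_{\boldsymbol u\in\triangle_d}\sum_t\langle\boldsymbol L_t,\boldsymbol u\rangle+\widetilde O\bigl(\sqrt{\cdot}\bigr)$ up to the clipping correction, and choosing $\boldsymbol u=e_k$ gives $\min_k\sum_t L_t^k=\min_k\sum_t\rho(f_t,h_t^k)$ plus lower-order terms.

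Next I would handle the clipping. Because we optimize over $\triangle_d^a$ with $a=d/T$ rather than the full simplex, the Hedge comparator cannot be a pure vertex $e_k$; instead I would compare against $(1-a)e_k+ (a/d)\mathbf 1$, incurring an extra additive error of at most $a\cdot T\cdot\max_t\lVert\boldsymbol L_t\rVert_\infty = O(1)$ since $L_t^k\le 2M$ by \cref{ass:2-subgradient-bounded} and $a=d/T$. The clipping also guarantees $\xi_t^k\ge a/d=1/T>0$ for all $t$, which keeps $\KL(\boldsymbol\xi_{t+1},\boldsymbol\xi_t)$ finite and makes the stated nonnegativity $\Delta_t\ge0$ and the log-factor bookkeeping go through — this is the same device used for the meta-layer clipping $\triangle_2^\alpha$ in \cref{thm:meta-layer}. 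The precise form of the adaptive Hedge bound I would cite/derive is the now-standard one: with $\zeta_t=(\ln T)/(\epsilon+\sum_{\tau<t}\Delta_\tau)$ and $\Delta_t=\langle\boldsymbol L_t,\boldsymbol\xi_t-\boldsymbol\xi_{t+1}\rangle-\KL(\boldsymbol\xi_{t+1},\boldsymbol\xi_t)/\zeta_t\ge0$, telescoping $\KL$ terms against the linearized losses gives a regret of order $\sqrt{(\ln T)\sum_t(\cdot)}$ which, after solving the resulting self-bounding inequality, is absorbed into the $\widetilde O$.

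Finally I would combine: the second bound $\ddgap_T\le\widetilde O\bigl(\sqrt{(1+\min\{P_T,C_T\})T}\bigr)$ is inherited verbatim from \cref{thm:Integration} (the adaptive-module branch never touches the predictor), and the first bound has now been upgraded to $\ddgap_T\le\widetilde O\bigl(\min_k\sum_t\rho(f_t,h_t^k)\bigr)$; taking the minimum of the two gives the claimed inequality. I expect the main obstacle to be the bookkeeping around $\rho$: the quantity $\rho(f_t,h_t)=\max_{x\in X,y\in Y}|f_t(x,y)-h_t(x,y)|$ as literally defined is a supremum over all of $X\times Y$, whereas the pointwise Hedge argument only controls $f_t-h_t$ on the finite grid $\{\widehat x_t,\overline x_t,\widetilde x_{t+1}\}\times\{\widehat y_t,\overline y_t,\widetilde y_{t+1}\}$ appearing in $\boldsymbol L_t$. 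The resolution — which I would make explicit — is that the proofs of \cref{thm:expert-layer,thm:meta-layer} (and \cref{lem:OptOPPM}) only ever invoke $f_t-h_t$ at those grid points, so the effective prediction-error quantity driving the first D-DGap bound is $\langle\boldsymbol L_t,\boldsymbol\xi_t\rangle$ rather than the global $\rho$; one then re-interprets $\rho(f_t,h_t^k)$ in the theorem statement over the same grid (or simply notes grid-$\rho\le$ global-$\rho$, which only weakens the already-favorable bound). Everything else is routine: boundedness of the aggregated predictor, nonnegativity of $\Delta_t$, and the standard self-confident-tuning lemma.
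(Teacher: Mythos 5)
Your proposal is correct and follows essentially the same route as the paper: bound the prediction-error terms $\delta_t^x$, $\delta_t^y$, and $\lVert\boldsymbol{A}_t-\boldsymbol{\mathit{\Lambda}}_t\rVert_\infty$ by $\langle\boldsymbol{L}_t,\boldsymbol{\xi}_t\rangle$ via convexity of the absolute value on the finite evaluation grid, then apply the self-confident clipped-Hedge regret bound (the paper's Lemma~\ref{lem:Campolongo-Hedge}) to compare against the best single predictor, and inherit the minimax branch from \cref{thm:Integration} unchanged. Your explicit handling of the clipping comparator and of the grid-versus-global-$\rho$ subtlety matches what the paper does implicitly.
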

%
%
The clipped Hedge equivalent to the following update:
\begin{equation*}
\begin{aligned}
\boldsymbol{\xi}_{t+1}=\arg\min_{\boldsymbol{\xi} \in \bigtriangleup_d^a}\left\langle\ln \frac{\boldsymbol{\xi}}{\boldsymbol{\xi}_{t}\cdot\exp(-\zeta_{t}\boldsymbol{L}_t)},\ \boldsymbol{\xi}\right\rangle,
\end{aligned}
\end{equation*}
Thus, an efficient solution is attainable by minor adjustments to the algorithm depicted in Figure~3 of \citet{herbster2001tracking}. 

\subsection{Structure and Advantages}

In the previous sections, we analyzed the Adaptive Module, Integration Module, and Multi-Predictor Aggregator individually. To clarify how these modules work together to form the overall algorithm, we present a structural detail~(see \cref{fig:Diagram}) and accompanying pseudocode~(see \cref{alg:pseudocode}).
\begin{figure}[tb]
\centering
\resizebox{1.0\linewidth}{!}{
\centering\footnotesize
\tikzstyle{block} = [draw, rectangle, fill=blue!4, rounded corners, minimum height=2em, minimum width=10em, align=center]
\tikzstyle{bigblock} = [draw, rectangle, fill=blue!4, rounded corners, minimum height=3em, minimum width=15em, align=center]
\tikzstyle{sum} = [draw, circle, fill=none, minimum height=1em, minimum width=1em, align=center,inner sep=1.6pt]
\tikzstyle{feedback} = [draw=none, fill=none,minimum height=1em, minimum width=6em, align=center]
\tikzstyle{output} = [draw,rectangle,rounded corners, fill=red!4,minimum height=4em, minimum width=6em, align=center]
\tikzstyle{input} = [draw=none, fill=none,minimum height=1em, minimum width=1em, align=right]
\tikzstyle{point} = [coordinate]
\hspace*{-2.3em}
\begin{tikzpicture}[auto,>=latex',line/.style={-Stealth,thick}]
\node [bigblock] (algorithm) {\cref{eq:coupled-xy,eq:x-tilde,eq:y-tilde,eq:coupled-wo,eq:w-tilde,eq:o-tilde}, with \\\cref{eq:coupled-xy,eq:coupled-wo} solved via \cref{alg:joint-solver}};
\node [block, above left =2.7em and -9.3em of algorithm] (aderx) {ADER or ADER-like algorithm};
\node [block, above right=2.7em and -9.3em of algorithm] (adery) {ADER or ADER-like algorithm};
\node [point, above=1.7em of algorithm] (aderxy) {};
\draw (aderx) |- node[midway, left ] {$\overline{x}_t$} (aderxy);
\draw (adery) |- node[midway, right] {$\overline{y}_t$} (aderxy);
\draw [line] (aderxy) -- node[midway, right] {$(\overline{x}_t,\overline{y}_t)$} (algorithm);
\node [block, below=2em of algorithm] (predictor) {\cref{eq:hedge}};
\draw [-Stealth] (predictor) -- node[midway, right] {$h_t = \sum_{k=1}^d \xi_t^k h_t^k$} (algorithm);
\node [input, right=2.5em of predictor] (input) {Predictors $\{h_t^1, h_t^2,\cdots,h_t^d\}$};
\draw [line] (input) -- (predictor);

\node [sum, right=9em of algorithm] (sum) {$\sum$};
\node [point, left=1.8em of sum] (fakesum) {};
\draw [thick] ($(algorithm.0) + (0,0.7em)$)  -- node[midway, above] {$(\widehat{x}_t,\widehat{y}_t)$, $(\overline{x}_t,\overline{y}_t)$} ($(fakesum.90) + (0,0.7em)$);
\draw [line] ($(fakesum.180) + (0,0.7em)$) -- (sum);
\draw [thick] ($(algorithm.0) + (0,-0.7em)$)  --  node[midway, below] {$(\boldsymbol{w}_t,\boldsymbol{\omega}_t)$} ($(fakesum.270) + (0,-0.7em)$);
\draw [line] ($(fakesum.180) + (0,-0.7em)$) -- (sum);
\node [output, right=7.5em of sum] (output) {Environment};
\draw [line] (sum) -- node[midway, above] {$x_t = [\widehat{x}_{t}, \overline{x}_t]\boldsymbol{w}_t$}  node[midway, below] {$y_t = [\widehat{y}_{t}, \overline{y}_t]\boldsymbol{\omega}_t$} (output);
\node [point, above=1.8em of aderx] (feed0) {};
\node [point, above=1.8em of adery] (feed1) {};
\draw [densely dotted,thick] (output) |- node[midway, below left] {Feedback} (feed0);
\draw [densely dotted,line] (feed0) -- node[midway, left] {$f_t(\,\cdot\,,y_t)$} (aderx);
\draw [densely dotted,line] (feed1) -- node[midway, right] {$-f_t(x_t,\,\cdot\,)$} (adery);
\node [point, below=1.8em of predictor] (feed3) {};
\draw [densely dotted,thick] (output) |- node[midway, above left] {Feedback} (feed3);
\draw [densely dotted,line] (feed3) -- node[midway, right] {$\boldsymbol{L}_t$} (predictor);
\node [point, left=3em of algorithm] (feed2) {};
\draw [densely dotted,thick] (feed3) -| (feed2);
\draw [densely dotted,line] (feed2) -- node[midway, below] {$f_t$} (algorithm);
\node [point, left=4.3em of algorithm] (IM) {};
\node [input, left=1em of IM] (im) {$\begin{aligned}\textup{Integration}\\[-2pt]\textup{Module}\end{aligned}\Bigg\{$};
\node [point] (AM) at (aderx -| IM) {};
\node [input, left=1em of AM] (am) {$\begin{aligned}\textup{Adaptive}\\[-2pt]\textup{Module}\end{aligned}\Bigg\{$};
\node [point] (MP) at (predictor -| IM) {};
\node [input, left=1em of MP] (mp) {$\begin{aligned}\textup{Multi-Predictor}\\[-2pt]\textup{Aggregator}\end{aligned}\Bigg\{$};
\end{tikzpicture}
}
\caption{Structural Detail of Our Modular Algorithm.}
\label{fig:Diagram}
\end{figure}
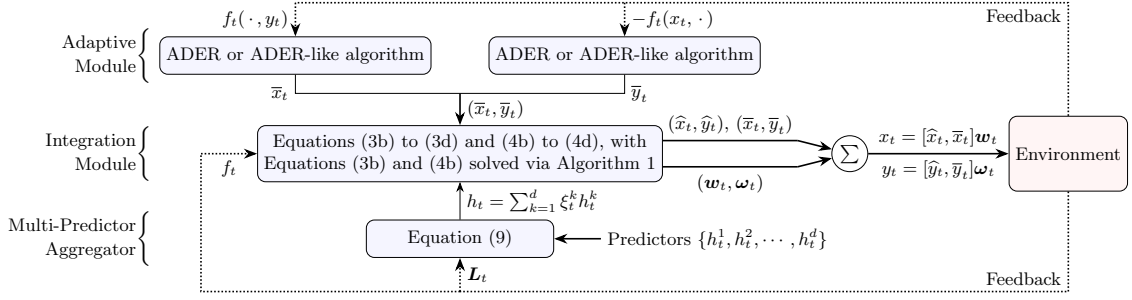
\begin{algorithm*}[t]
\caption{Pseudocode for Our Modular Algorithm}
\label{alg:pseudocode}
\begin{algorithmic}[1]
\State \textbf{Require:} $X$ and $Y$ satisfy Assumption~\ref{ass:X-Y-bounded}. All payoff functions satisfy Assumption~\ref{ass:2-subgradient-bounded}.  All predictors satisfy Assumptions~\ref{ass:2-subgradient-bounded} and~\ref{ass:Lipschitz-continuous-gradient}
\State \textbf{Initialize:} $\widetilde{x}_{1}$, $\widetilde{y}_{1}$, $\widetilde{\boldsymbol{w}}_{1}$, $\widetilde{\boldsymbol{\omega}}_{1}$, $\boldsymbol{\xi}_{1}$ and $(\overline{x}_1, \overline{y}_1)$
\For{$t\leftarrow1$ \textbf{to} $T$}
\State Receive $d$ predictors $h_t^1, h_t^2, \cdots, h_t^d$ and compute $h_t = \sum_{k=1}^d \xi_t^k h_t^k$

\State Obtain $(\widehat{x}_{t},\widehat{y}_{t})$ and $(\boldsymbol{w}_t,\boldsymbol{\omega}_t)$ via \cref{alg:joint-solver}

\State Output $x_t = [\widehat{x}_{t}, \overline{x}_t]\boldsymbol{w}_t$, $y_t = [\widehat{y}_{t}, \overline{y}_t]\boldsymbol{\omega}_t$, and then observe $f_t$

\State Update $\widetilde{x}_{t+1}$, $\widetilde{y}_{t+1}$, $\widetilde{\boldsymbol{w}}_{t+1}$ and $\widetilde{\boldsymbol{\omega}}_{t+1}$ using \cref{eq:x-tilde,eq:y-tilde,eq:w-tilde,eq:o-tilde}

\State Update $\boldsymbol{\xi}_{t+1}$ according to \cref{eq:hedge}

\State Update $(\overline{x}_{t+1}, \overline{y}_{t+1})$ by running two ADER or ADER-like algorithms
\EndFor
\end{algorithmic}
\end{algorithm*}

\cref{thm:overall} provides the D-DGap upper bound guarantee for the entire algorithm, which can be rearranged as follows:
\begin{equation}
\label{eq:ddgap-bound}\setlength{\belowdisplayskip}{6.5pt}
\begin{aligned}
\ddgap\,(u_{1:T},v_{1:T})\leq\widetilde{O}\Bigl(\min\Bigl\{ \underbrace{\min\bigl\{V^1_T,\ \cdots,\ V^d_T\bigr\}}_{\term{a}},\ \underbrace{\sqrt{\left(1 + \min\{P_T, C_T\}\right)T}}_{\term{b}}\Bigr\}\Bigr),
\end{aligned}
\end{equation}
where $V^k_T=\sum_{t=1}^T \rho(f_t, h_t^k)$ represents the cumulative prediction error of the $k$-th predictor.

The Adaptive Module ensures a minimax-optimal bound, as given by \termref{eq:ddgap-bound}{b}, allowing the algorithm to adapt to varying levels of non-stationarity. The Multi-Predictor Aggregator provides the bound in \termref{eq:ddgap-bound}{a}, ensuring that if any one predictor models the environment well, the algorithm achieves a sharp $\widetilde{O}(1)$ D-DGap. This acts as an automatic selection mechanism for the best predictor. 

The Integration Module combines both components, ensuring adaptability to dynamic environments while effectively tracking the best predictor. It guarantees near-optimal performance across different settings, with any further improvement limited to at most a logarithmic factor. The modular design allows for easy replacement of components that regulate adaptivity and the integration of ``side knowledge'' from other predictors. 

Unlike the Multi-Predictor Aggregator, which applies to both OCCO and OCO, the Integration Module's interdependent update mechanism is specific to OCCO. This is because decomposing the D-DGap in OCCO requires a more intricate approach, as demonstrated in the proof of \cref{thm:Integration}. In contrast, in OCO, D-Reg can be directly decomposed into the meta-layer regret and the individual regret of any expert. Thus, for OCO, it suffices to add an extra expert in ADER to obtain a prediction error-based upper bound while replacing the meta-layer algorithm with optimistic clipped Hedge.

\section{Experiments}
\label{app:Experiments}

This section experimentally validates the effectiveness of our algorithm, comparing it against the algorithm proposed by \cite{zhang2022noregret} and a pair of ADERs as benchmarks.

We consider a specific instance of the OCCO problem, where the feasible domain is defined as $X \times Y = [-1, 1]^2$, and the environment provides the following convex-concave payoff function at round $t$:
\begin{equation}
\begin{aligned}
f_t\left(x,y\right)=\frac{1}{2}\left(x-x_t^*\right)^2-\frac{1}{2}\left(y-y_t^*\right)^2+\left(x-x_t^*\right)\left(y-y_t^*\right), 
\end{aligned}
\end{equation}
where $(x_t^*, y_t^*) \in X \times Y$ denotes the saddle point of $f_t$. 
This setup satisfies Assumptions~\ref{ass:X-Y-bounded} and~\ref{ass:2-subgradient-bounded}. The evolution of the saddle point $(x_t^*, y_t^*)$ reflects specific environmental characteristics. We identify four distinct cases, as outlined in \cref{tab:rules}: 
\begin{itemize}[itemsep=0pt,topsep=4pt]
\item Case~\hyperlink{ruleI}{I} indicates a gradually stationary environment, with the movement of the saddle point diminishing over time.
\item Case~\hyperlink{ruleII}{II} and~\hyperlink{ruleIII}{III} represent approximate periodic environments. In Case~\hyperlink{ruleII}{II}, the saddle point cycles among three branches, while in Case~\hyperlink{ruleIII}{III}, it cycles among seven, with its position in each branch chosen randomly.
\item Case~\hyperlink{ruleIV}{IV} depicts an adversarial environment where the saddle point cannot be effectively approximated. In this case, upon selecting a strategy pair $(x_t, y_t)$, the environment generates the saddle point $(x_t^*, y_t^*)$ by rotating the strategy pair by a random angle $\varphi \sim N(\pi, 1)$ and then projecting it onto the circle of radius $1/2$.
\end{itemize}
\begin{table}[t]
\centering\setlength{\tabcolsep}{0.3em}
\caption{Four Environment Settings. In this table, the saddle point $(x_t^*, y_t^*)$ is expressed in the complex form $p_t^*=x_t^* + i y_t^*$, where $i$ is the imaginary unit, satisfying $i^2 = -1$. $z_1(t) = \ln(1+t)$, $z_2(t) = \ln\ln(\mathrm{e} + t)$. As $t$ increases, the growth rates of both $z_1$ and $z_2$ gradually decelerate. $\varepsilon\sim U(0, 1)$ is random variable that follows a uniform distribution on the interval $[0,1]$, and $\varphi \sim N(\pi, 1)$ is a random angle that follows a Gaussian distribution with mean $\pi$. }
\label{tab:rules}
\begin{tabular}{ccccc}
\toprule
Case & \hypertarget{ruleI}{I} & \hypertarget{ruleII}{II} & \hypertarget{ruleIII}{III} & \hypertarget{ruleIV}{IV} \\
\midrule
$x_t^*+i y_t^*$ & $\frac{1}{3}z_2(t)\mathrm{e}^{i z_1(t)}$ & $\frac{1}{3}z_2(t)\mathrm{e}^{i\frac{2\pi}{3} t+i z_2(t)}$ & $\frac{1}{2}\mathrm{e}^{\frac{1}{7}(\varepsilon+i2\pi t)}$
& $\frac{1}{2}\mathrm{e}^{i\left(\varphi+\arg(x_t+i y_t)\right)}$ \\[2pt]
Trajectories & 
\begin{tabular}{c}
\includegraphics[width=0.18\textwidth]{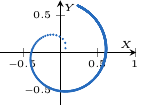}
\end{tabular} &
\begin{tabular}{c}
\includegraphics[width=0.18\textwidth]{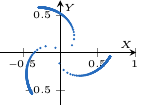}
\end{tabular} &
\begin{tabular}{c}
\includegraphics[width=0.18\textwidth]{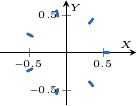}
\end{tabular} &
\begin{tabular}{c}
\includegraphics[width=0.18\textwidth]{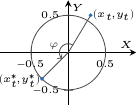}
\end{tabular} \\[-2pt]
Property & $\rho(f_t,f_{t-1})\rightarrow 0$ & $\rho(f_t,f_{t-3})\rightarrow 0$ & $\left\lvert p_t^*-p_{t-7}^*\right\rvert\leq\frac{1-\mathrm{e}^{\sfrac{1}{7}}}{2}$ & Adversarial \\
\bottomrule
\end{tabular}
\end{table}

To capture a range of non-stationarity levels, we select three comparator sequences representing different dynamics, from fully stationary to highly non-stationary settings, as detailed in \cref{tab:levels}. 
%
\begin{table}[t]
\centering\setlength{\tabcolsep}{0.9em}
\caption{Three Levels on Comparator Sequence Non-Stationarity.  In this table, $x'_t = \arg\min_{x \in X} f_t(x, y_t)$, and $y'_t = \arg\max_{y \in Y} f_t(x_t, y)$.}
\label{tab:levels}
\begin{tabular}{cccc}
\toprule
Level & \hypertarget{leveli}{i} & \hypertarget{levelii}{ii} & \hypertarget{leveliii}{iii} \\
\midrule
Comparator & 
$(u_t,v_t) \equiv (0,0)$ & 
$(u_t,v_t) = (x_t^*,y_t^*)/\ln(1+t)$ & 
$(u_t,v_t) = (x'_t,y'_t)$ \\
\bottomrule
\end{tabular}
\end{table}

We instantiate our algorithm as follows: Let $\phi(x)=x^2/2$ and $\psi(y)=y^2/2$. Both $B_\phi$ and $B_\psi$ are bounded, 1-strongly convex, and exhibit Lipschitz continuity with respect to their first variables. For the Multi-Predictor Aggregator, we configure four predictors: $h_t^1 = f_{t-1}$, $h_t^2 = f_{t-3}$, $h_t^3 = f_{t-7}$, and $h_t^4 = f_{t-8}$, all of which satisfy Assumption~\ref{ass:Lipschitz-continuous-gradient}. This setup enables our algorithm to achieve a sharp D-DGap bound of $\widetilde{O}(1)$ in stationary environments or periodic scenarios with cycles of 2, 3, 4, 7, or 8. In the Integration Module, we employ Successive Reduction of Search Space for joint updates, maintaining computational costs within acceptable limits. We also apply the doubling trick~\cite{schapire1995gambling} to eliminate the algorithms' dependence on the time horizon $T$. 

\begin{figure}[t]
\centering
\includegraphics[width=\textwidth]{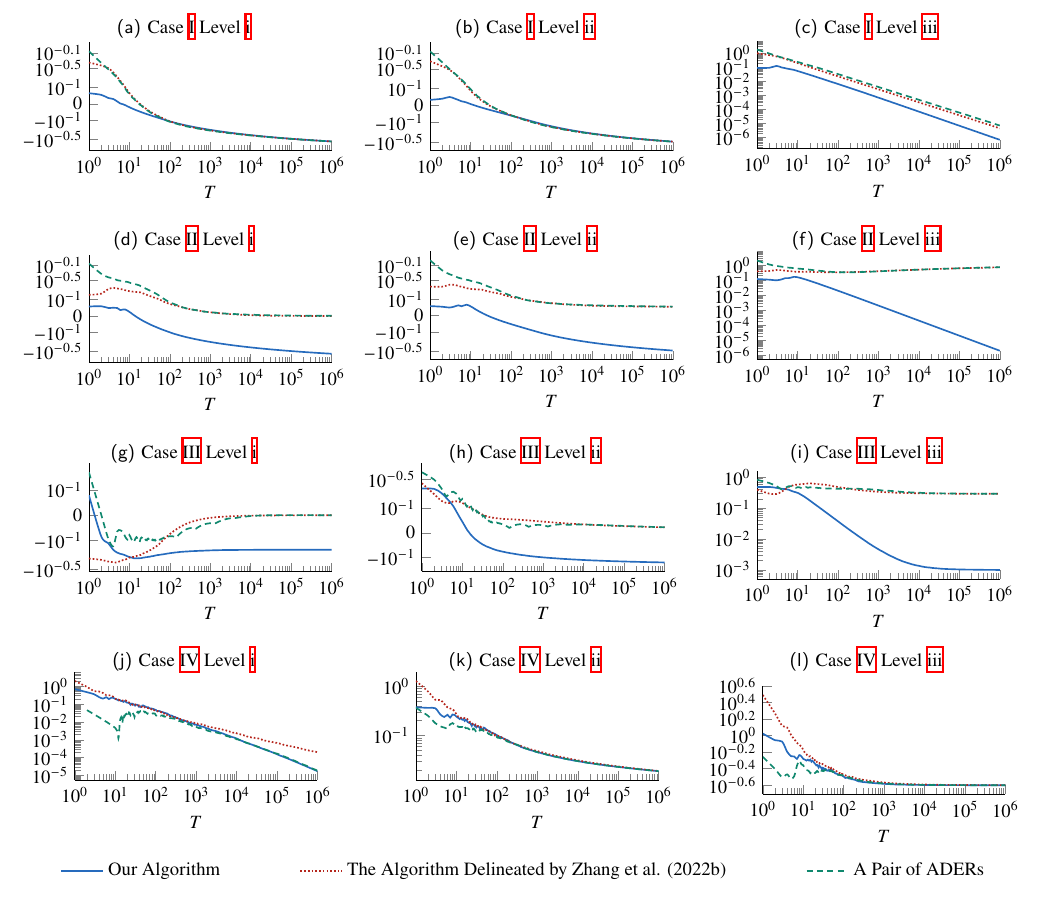}
\vspace*{-0.5em}
\caption{Time-Averaged D-DGaps of Algorithms}
\label{fig:results}
\end{figure}

We conduct $10^6$ rounds for each case and record the time-averaged D-DGap. The results in \cref{fig:results} align with theoretical expectations. 
In Case~\hyperlink{ruleI}{I} Level iii (refer to Figure~2c), our algorithm demonstrates better performance, as it progressively approaches $\widetilde{O}(1)$ D-DGap, while the other two algorithms converge towards $\widetilde{O}(\sqrt{T})$ D-DGaps. 
In Cases~\hyperlink{ruleII}{II} and~\hyperlink{ruleIII}{III}, our algorithm consistently outperforms both \cite{zhang2022noregret} and ADER algorithms. Notably, in Figure~2f, our algorithm successfully converges, whereas the other two fail to do so. 
In Case~\hyperlink{ruleIV}{IV}, all algorithms perform comparably. Both our algorithm and ADERs guarantee minimax optimality, while the algorithm in \cite{zhang2022noregret}, despite lacking tight bounds, shows empirical success due to the meta-expert framework.

\section{Conclusion}

This paper is the first to study the dynamic duality gap~(D-DGap) in Online Convex-Concave Optimization (OCCO). Our modular algorithmic structure adapts seamlessly to varying levels of non-stationarity and leverages the most accurate predictors, while the Integration Module, inspired by the meta-expert framework, ensures optimal performance across diverse environments.

A natural next step is to tackle the two-player, time-varying game, where the $x$–player observes only $f_t(\,\cdot\,,y_t)$ and the $y$–player only sees $-f_t(x_t,\,\cdot\,)$. This partial-observation model is weaker than our full-information setting, in which both players have access to the entire payoff function $f_t$. It raises two key challenges: (1) preserving our minimax-optimal D-DGap guarantee under one-sided feedback, and (2) Further tightening the D-DGap through more aggressive adaptation to each player's history. We plan to develop new algorithms that address these challenges while maintaining strong theoretical guarantees.


\acks{This work was supported by the National Natural Science Foundation of China~(Grant No. 62406299).}

\section*{CRediT author statement}
\begin{itemize}
\item Qing-xin Meng: Conceptualization; Methodology; Software; Formal analysis; Investigation;
Writing -- Original Draft; Visualization.
\item Xia Lei: Conceptualization; Validation; Writing -- Review \& Editing; Resources; Funding acquisition. 
\item Jian-wei Liu: Supervision; Project administration.
\end{itemize}



\appendix

\section{Supplementary Proofs}
\label{app:SuppProofs}
\subsection{Proof of Proposition~\ref{thm:lower-bound}}
\begin{proof}Let $\mathscr{F}$ denote all convex-concave functions satisfying Assumption~\ref{ass:2-subgradient-bounded}, and let $\mathscr{L}_X(G)=\big\{\ell\text{ is convex }\big|\,\sup_{x\in X}\left\lVert\partial\ell(x)\right\rVert\leq G\big\}$. 
The key to the proof is to convert OCCO into a pair of OCO problems: 
\begin{equation*}
\begin{aligned}
\sup_{f_1,\cdots,f_T\in\mathscr{F}}&\left(\sup_{P_T \leq P}\Bigl(f_t\left(x_t,v_t\right)- f_t\left(u_t,y_t\right)\Bigr)\right) \\
\geq\ & \sup_{f_t(x,y)=\alpha_t(x)-\beta_t(y)\in\mathscr{F},\,t\in\{1,\cdots,T\}}\left(\max_{P_T^u \leq p,\,P_T^v \leq P-p} \sum_{t=1}^T \Big(f_t\left(x_t,v_t\right)- f_t\left(u_t,y_t\right)\Big)\right) \\
=\ & \sup_{\alpha_1,\cdots,\alpha_T\in\mathscr{L}_X(G_X)}\left(\max_{P_T^u \leq p}\sum_{t=1}^T \Big(\alpha_t\left(x_t\right)- \alpha_t\left(u_t\right)\Big)\right) \\
&+\sup_{\beta_1,\cdots,\beta_T\in\mathscr{L}_Y(G_Y)}\left(\max_{P_T^v \leq P-p}\sum_{t=1}^T \Big(\beta_t\left(y_t\right)- \beta_t\left(v_t\right)\Big)\right)\\
\geq\ & \mathit{\Omega}\left(\sqrt{(1+p)T}\right)
+ \mathit{\Omega}\left(\sqrt{(1+P-p)T}\right),\qquad\qquad\forall 0\leq p\leq P,
\end{aligned}
\end{equation*}
where the first ``$\geq$'' follows from the specific structure of $f_t$, given by $f_t(x,y) = \alpha_t(x) - \beta_t(y)$. Here, both $\alpha_t$ and $\beta_t$ are convex functions, $P_T^u=\sum_{t=1}^T\lVert u_t-u_{t-1}\rVert$ and $P_T^v=\sum_{t=1}^T\lVert v_t-v_{t-1}\rVert$. The second ``$\geq$'' is derived from Theorem~2 in \citet{zhang2018adaptive}, which establishes a lower bound on regret for OCO. Combining these two lower bounds yields the desired result. 
\end{proof}

\subsection{Proof of Proposition~\ref{prop:Adaptive}}
\begin{proof}
Independently applying two ADER or ADER-like algorithms results in the following bounds:
\begin{equation*}
\begin{aligned}
\sum_{t=1}^T \Bigl(f_t(\overline{x}_t, y_t)-f_t(u_t, y_t)\Bigr)&\leq \widetilde{O}\left(\sqrt{\left(1+P_T^u\right)T}\right), \\
\sum_{t=1}^T \Bigl(f_t(x_t,v_t)-f_t(x_t, \overline{y}_t)\Bigr)&\leq \widetilde{O}\left(\sqrt{\left(1+P_T^v\right)T}\right), 
\end{aligned}
\end{equation*}
where $P_T^u = \sum\nolimits_{t=1}^T\lVert u_t-u_{t-1}\rVert$ and $P_T^v = \sum\nolimits_{t=1}^T\lVert v_t-v_{t-1}\rVert$. 
For specially chosen comparators $x'_t=\arg\min_{x\in X}f_t\left(x, y_t\right)$ and $y'_t=\arg\max_{y\in Y} f_t\left(x_t, y\right)$, we also have:
\begin{equation*}
\begin{aligned}
\sum_{t=1}^T \Bigl(f_t(\overline{x}_t, y_t)-f_t(u_t, y_t)\Bigr)&\leq\sum_{t=1}^T \Bigl(f_t(\overline{x}_t, y_t)-f_t(x'_t, y_t)\Bigr)\leq \widetilde{O}\left(\sqrt{\left(1+C_T^x\right)T}\right), \\
\sum_{t=1}^T \Bigl(f_t(x_t,v_t)-f_t(x_t, \overline{y}_t)\Bigr)&\leq\sum_{t=1}^T \Bigl(f_t(x_t,y'_t)-f_t(x_t, \overline{y}_t)\Bigr)\leq \widetilde{O}\left(\sqrt{\left(1+C_T^y\right)T}\right), 
\end{aligned}
\end{equation*}
where $C_T^x = \sum\nolimits_{t=1}^T\lVert x'_t-x'_{t-1}\rVert$ and $C_T^y = \sum\nolimits_{t=1}^T\lVert y'_t-y'_{t-1}\rVert$. \\
The desired result follows by combining these inequalities. 
\end{proof}

\subsection{Proof of Lemma~\ref{lem:OptOPPM}}
\begin{proof}
The first-order optimality condition of \cref{eq:optoppm} implies that 
\begin{equation*}
\begin{aligned}
&\exists\nabla_x h_t\big(x_t, y_t\big),&&\forall x'\in X,&&
\textstyle\bigl\langle \eta_t \nabla_x h_t\big(x_t, y_t\big) + x_t^{\phi}-\widetilde{x}_t^{\phi}, x_t-x'\bigr\rangle \leq 0, \\
&\exists\nabla_y (-h_t)\big(x_t, y_t\big),&&\forall y'\in Y,&&
\textstyle\bigl\langle \gamma_t \nabla_y (-h_t)\big(x_t, y_t\big) + y_t^{\psi}-\widetilde{y}_t^{\psi}, y_t-y'\bigr\rangle \leq 0, \\
&\exists\nabla_x f_t(\widetilde{x}_{t+1}, y_t),&&\forall x'\in X,&&
\bigl\langle \eta_t \nabla_x f_t(\widetilde{x}_{t+1}, y_t) + \widetilde{x}_{t+1}^{\phi}-\widetilde{x}_t^{\phi}, \widetilde{x}_{t+1}-x'\bigr\rangle \leq 0, \\
&\exists\nabla_y (-f_t)(x_{t},\widetilde{y}_{t+1}),&&\forall y'\in Y,&&
\textstyle\big\langle \gamma_t \nabla_y (-f_t)(x_{t},\widetilde{y}_{t+1})+\widetilde{y}_{t+1}^\psi-\widetilde{y}_{t}^\psi, \widetilde{y}_{t+1}-y'\big\rangle\leq 0.
\end{aligned}
\end{equation*}
The D-DGap is composed of the sum of two individual D-Regs:
\begin{equation*}
\begin{aligned}
\ddgap\,(u_{1:T},v_{1:T})&=\sum_{t=1}^T \Bigl(f_t\left(x_t, v_t\right)-f_t\left(u_t, y_t\right)\Bigr) \\
&=\sum_{t=1}^T \Bigl(f_t\left(x_t, v_t\right)-f_t\left(x_t, y_t\right)\Bigr) + \sum_{t=1}^T \Bigl(f_t\left(x_t, y_t\right)-f_t\left(u_t, y_t\right)\Bigr) \\
&=\dregret\,(v_{1:T})+\dregret\,(u_{1:T}).
\end{aligned}
\end{equation*}
Let's take the $x$-player as an example.
We first perform identity transformation on the instantaneous individual regret: 
\begin{equation}
\label{eqpf:identical-trans}
\begin{aligned}
f_t(x_t, y_t)-f_t(u_t, y_t)=\ &
\underbrace{f_t(x_t,y_t)-h_t(x_t,y_t)
+h_t(\widetilde{x}_{t+1},y_t)-f_t(\widetilde{x}_{t+1}, y_t)}_{\term{a}} \\
&+\underbrace{h_t(x_t,y_t)-h_t(\widetilde{x}_{t+1},y_t)
+f_t(\widetilde{x}_{t+1}, y_t)-f_t(u_t, y_t)}_{\term{b}}.
\end{aligned}
\end{equation}
By using convexity and first-order optimality conditions, we get
\begin{equation}
\label{eq:1-order-x-bound}
\begin{aligned}
\termref{eqpf:identical-trans}{b}
\leq\ &\big\langle \nabla_x h_t\big(x_t, y_t\big), x_t-\widetilde{x}_{t+1}\big\rangle + \big\langle\nabla_x f_t(\widetilde{x}_{t+1}, y_t), \widetilde{x}_{t+1}-u_t\big\rangle \\
\leq\ &\big\langle \widetilde{x}_{t}^\phi-x_{t}^\phi,x_t-\widetilde{x}_{t+1}\big\rangle/ \eta_t  + \big\langle \widetilde{x}_{t}^\phi-\widetilde{x}_{t+1}^\phi,\widetilde{x}_{t+1}-u_t\big\rangle / \eta_t\\
=\ &\big[B_{\phi}\big(\widetilde{x}_{t+1}, \widetilde{x}_t^\phi\big)-B_{\phi}\big(\widetilde{x}_{t+1}, x_t^\phi\big)-B_{\phi}\big(x_t, \widetilde{x}_t^\phi\big)\big]/ \eta_t \\
&+\underbrace{\big[B_{\phi}\big(u_t, \widetilde{x}_{t}^\phi\big)-B_{\phi}\big(u_t, \widetilde{x}_{t+1}^\phi\big)\big] / \eta_t}_{\eqcolon\Phi_t}-B_{\phi}\big(\widetilde{x}_{t+1}, \widetilde{x}_{t}^\phi\big) / \eta_t.
\end{aligned}
\end{equation}
Let $\nu_t^x=\termref{eqpf:identical-trans}{a}-B_{\phi}\big(\widetilde{x}_{t+1}, x_t^\phi\big)/\eta_t$, so we have that $\nu_t^x\geq 0$. To verify this, it suffices to combine the following two inequalities:
\begin{equation*}
\begin{aligned}
f_t(x_t,y_t)+B_{\phi}\big(x_t, \widetilde{x}_{t}^\phi\big)/\eta_t
&\geq f_t(\widetilde{x}_{t+1},y_t)+B_{\phi}\big(\widetilde{x}_{t+1}, \widetilde{x}_{t}^\phi\big)/\eta_t, \\
-h_t(x_t,y_t)+h_t(\widetilde{x}_{t+1},y_t)
&\geq-\big[B_{\phi}\big(\widetilde{x}_{t+1}, \widetilde{x}_t^\phi\big)-B_{\phi}\big(\widetilde{x}_{t+1}, x_t^\phi\big)-B_{\phi}\big(x_t, \widetilde{x}_t^\phi\big)\big]/ \eta_t.
\end{aligned}
\end{equation*}
The first inequality takes advantage of the optimality condition, and the second inequality is part of \cref{eq:1-order-x-bound}. 
Now we know that $\eta_t$ is non-increasing over time, $B_\phi$ is $L_{B_\phi}$-Lipschitz w.r.t. the first variable, and $L_{B_\phi} D_X$ is the supremum of $B_\phi$. Thus, 
\begin{equation*}
\begin{aligned}
\sum_{t=1}^{T}\Phi_t 
&\leq\frac{B_{\phi}\big(u_{0}, \widetilde{x}_{1}^\phi\big)}{\eta_0} +\sum_{t=1}^{T}\frac{1}{\eta_t}\left(B_{\phi}\big(u_t, \widetilde{x}_{t}^\phi\big)-B_{\phi}\big(u_{t-1}, \widetilde{x}_{t}^\phi\big)\right) +\sum_{t=1}^{T}\left(\frac{1}{\eta_{t}}-\frac{1}{\eta_{t-1}}\right)B_{\phi}\big(u_{t-1}, \widetilde{x}_{t}^\phi\big) \\
&\leq\frac{L_{B_\phi} D_X}{\eta_T}+\sum_{t=1}^{T}\frac{L_{B_\phi}}{\eta_t}\left\lVert u_t-u_{t-1}\right\rVert,
\end{aligned}
\end{equation*}
Note that \cref{eq:1-order-x-bound} can be relaxed as $f_t(x_t, y_t)-f_t(u_t, y_t)\leq\Phi_t+\nu_t^x$, summing over time yields
\begin{equation*}
\begin{aligned}
\dregret\,(u_{1:T})
\leq \frac{L_{B_\phi} D_X}{\eta_T}+\sum_{t=1}^{T}\frac{L_{B_\phi}}{\eta_t}\left\lVert u_t-u_{t-1}\right\rVert + \sum_{t=1}^{T} \nu_t^x.
\end{aligned}
\end{equation*}
Likewise, 
\begin{equation*}
\begin{aligned}
\dregret\,(v_{1:T})
\leq \frac{L_{B_\psi} D_Y}{\gamma_T}+\sum_{t=1}^{T}\frac{L_{B_\psi}}{\gamma_t}\left\lVert v_t-v_{t-1}\right\rVert + \sum_{t=1}^{T} \nu_t^y.
\end{aligned}
\end{equation*}
where $\nu_t^y=f_t(x_t,\widetilde{y}_{t+1})-h_t(x_t,\widetilde{y}_{t+1})+h_t(x_t,y_t)-f_t(x_t,y_t)-B_{\psi}\big(\widetilde{y}_{t+1}, y_t^\psi\big)/\gamma_t \geq 0$. \\
Let's go back to the focus on the $x$-player. 
The prescribed learning rate guarantees that
\begin{equation*}
\begin{aligned}
\dregret\,(u_{1:T})
\leq\epsilon+2\sum_{t=1}^{T}\nu_t^x.
\end{aligned}
\end{equation*}
On the one hand, $\nu_t^x\leq 2\rho(f_t,h_t)$ causes
\begin{equation}
\label{eqpf:OptIOMDA-x-bound-part1}
\begin{aligned}
\dregret\,(u_{1:T})\leq\epsilon+4\sum_{t=1}^{T}\rho(f_t,h_t).
\end{aligned}
\end{equation}
On the other hand, notice that 
\begin{equation*}
\begin{aligned}
\nu_t^x
&\leq \big\langle \nabla_x f_t(x_t, y_t)-\nabla_x h_t(\widetilde{x}_{t+1},y_t), x_t-\widetilde{x}_{t+1}\big\rangle-B_{\phi}\big(\widetilde{x}_{t+1}, x_t^\phi\big)/\eta_t \\
&\leq 2G_X\lVert x_t-\widetilde{x}_{t+1}\rVert-B_{\phi}\big(\widetilde{x}_{t+1}, x_t^\phi\big)/\eta_t 
\leq \min\big\{2 D_X G_X, 2\eta_t G_X^2\big\},
\end{aligned}
\end{equation*}
which implies that 
\begin{equation*}
\begin{aligned}
\left(\sum_{t=1}^{T}\nu_t^x\right)^2
&=\sum_{t=1}^{T}\left(\nu_t^x\right)^2+2\sum_{t=1}^{T}\nu_t^x\sum_{\tau=1}^{t-1}\nu_\tau^x 
=\sum_{t=1}^{T}\left(\nu_t^x\right)^2+2\sum_{t=1}^{T}\nu_t^x\bigg(\frac{L_{B_\phi}(D_X+\lambda)}{\eta_t}-\epsilon\bigg) \\
&\leq \sum_{t=1}^{T}4G_X^2 D_X^2+\sum_{t=1}^{T}4G_X^2L_{B_\phi}(D_X+\lambda). 
\end{aligned}
\end{equation*}
This results in the following regret bound:
\begin{equation}
\label{eqpf:OptIOMDA-x-bound-part2}
\begin{aligned}
\dregret\,(u_{1:T})\leq\epsilon+4G_X\sqrt{\left(D_X^2+L_{B_\phi} D_X+L_{B_\phi} \lambda\right)T}.
\end{aligned}
\end{equation}
Combining \cref{eqpf:OptIOMDA-x-bound-part1,eqpf:OptIOMDA-x-bound-part2} yields
\begin{equation*}
\begin{aligned}
\dregret\,(u_{1:T})\leq\epsilon+4\min\left\{\sum_{t=1}^{T}\rho(f_t,h_t), G_X\sqrt{\left(D_X^2+L_{B_\phi} D_X+L_{B_\phi} \lambda\right)T}\right\}.
\end{aligned}
\end{equation*}
Likewise, the individual regret of Player~2 satisfies
\begin{equation*}
\begin{aligned}
\dregret\,(v_{1:T})\leq\epsilon+4\min\left\{\sum_{t=1}^{T}\rho(f_t,h_t), G_Y\sqrt{\left(D_Y^2+L_{B_\psi} D_Y+L_{B_\psi} \mu\right)T}\right\}. 
\end{aligned}
\end{equation*}
Integrating the two individual regrets into D-DGap yields the desired result.
\end{proof}

\subsection{Proof of \cref{thm:expert-layer}}
\begin{proof}
The expert update can be rearranged as follows:
\begin{equation}
\label{eq:expert-layer}
\begin{aligned}
(\widehat{x}_{t},\widehat{y}_{t})&=\arg\min_{x\in X}\max_{y\in Y} \boldsymbol{w}_t^\mathrm{T}
\begin{bmatrix}
h_t(x,y), & h_t(x,\overline{y}_t) \\
h_t(\overline{x}_t,y), & h_t(\overline{x}_t,\overline{y}_t)
\end{bmatrix}
\boldsymbol{\omega}_t
+ \frac{w_t}{\eta_t}B_{\phi}\big(x, \widetilde{x}_t^{\phi}\big)
- \frac{\omega_t}{\gamma_t}B_{\psi}\big(y, \widetilde{y}_t^{\psi}\big), \\
\widetilde{x}_{t+1}&=\arg\min\nolimits_{x\in X}\eta_t \bigl[f_t(x,\widehat{y}_t),\ f_t(x,\overline{y}_t)\bigr]\boldsymbol{\omega}_t+B_{\phi}\big(x, \widetilde{x}_{t}^{\phi}\big), \\
\widetilde{y}_{t+1}&=\arg\max\nolimits_{y\in Y}\gamma_t \bigl[f_t(\widehat{x}_t,y),\ f_t(\overline{x}_t, y)\bigr]\boldsymbol{w}_t-B_{\psi}\big(y, \widetilde{y}_{t}^{\psi}\big).
\end{aligned}
\end{equation}
The first-order optimality condition of \cref{eq:expert-layer} implies that 
\begin{equation*}
\begin{aligned}
\bigl\langle \eta_t\bigl[\nabla_x h_t(\widehat{x}_{t},\widehat{y}_{t}),\ \nabla_x h_t(\widehat{x}_{t},\overline{y}_t)\bigl]\boldsymbol{\omega}_t + \widehat{x}_{t}^\phi - \widetilde{x}_t^\phi,\ \widehat{x}_{t} - x' \bigr\rangle \leq 0,\qquad&\forall x'\in X, \\
\bigl\langle \gamma_t\bigl[\nabla_y (-h_t)(\widehat{x}_{t},\widehat{y}_{t}),\ \nabla_y (-h_t)(\overline{x}_{t},\widehat{y}_t)\bigl]\boldsymbol{w}_t + \widehat{y}_{t}^\psi - \widetilde{y}_t^\psi,\ \widehat{y}_{t} - y' \bigr\rangle \leq 0,\qquad&\forall y'\in Y, \\
\bigl\langle \eta_t \bigl[\nabla_x f_t(\widetilde{x}_{t+1}, \widehat{y}_t),\ \nabla_x f_t(\widetilde{x}_{t+1},\overline{y}_t)\bigl]\boldsymbol{\omega}_t + \widetilde{x}_{t+1}^\phi - \widetilde{x}_t^\phi,\ \widetilde{x}_{t+1} - x' \bigr\rangle \leq 0, \qquad&\forall x'\in X, \\
\bigl\langle \gamma_t \bigl[\nabla_y (-f_t)(\widehat{x}_t,\widetilde{y}_{t+1}),\ \nabla_y f_t(\overline{x}_t,\widetilde{y}_{t+1})\bigl]\boldsymbol{w}_t + \widetilde{y}_{t+1}^\psi - \widetilde{y}_t^\psi,\ \widetilde{y}_{t+1} - y' \bigr\rangle \leq 0, \qquad&\forall y'\in Y.
\end{aligned}
\end{equation*}
The proof of this theorem can be established by suitably adapting the proof of Lemma~\ref{lem:OptOPPM}, incorporating the following substitutions while accounting for the predefined upper bound on the comparator sequence path length, which scales linearly with time $T$. The substitution rules are as follows:
\begin{equation*}
\setlength{\tabcolsep}{0.3em}
\begin{tabular}{rcl}
Variables in Proof of Lemma~\ref{lem:OptOPPM}& &Variables in This Proof \\[2pt]
$(x_t,y_t)$ & $\longrightarrow$ & $(\widehat{x}_{t},\widehat{y}_{t})$ \\[3pt]
$f_t(\,\cdot\,,y_t)$ and $f_t(x_t,\,\cdot\,)$ & $\longrightarrow$ & $\bigl[f_t(\,\cdot\,,\widehat{y}_t),\ f_t(\,\cdot\,,\overline{y}_t)\bigr]\boldsymbol{\omega}_t$ and $\bigl[f_t(\widehat{x}_t,\,\cdot\,),\ f_t(\overline{x}_t, \,\cdot\,)\bigr]\boldsymbol{w}_t$ \\[4pt]
$h_t(\,\cdot\,,y_t)$ and $h_t(x_t,\,\cdot\,)$ & $\longrightarrow$ & $\bigl[h_t(\,\cdot\,,\widehat{y}_t),\ h_t(\,\cdot\,,\overline{y}_t)\bigr]\boldsymbol{\omega}_t$ and $\bigl[h_t(\widehat{x}_t,\,\cdot\,),\ h_t(\overline{x}_t, \,\cdot\,)\bigr]\boldsymbol{w}_t$ 
\end{tabular}
\end{equation*}
For completeness, we provide a detailed proof below. 

Let's derive the upper bound for the right-hand side of the metric. Note that 
\begin{equation*}
\begin{aligned}
\boldsymbol{A}_t^{1,\colon\!} \boldsymbol{\omega}_t - f_t(u_t, y_t)
\leq\ &\bigl[f_t(\widehat{x}_t, \widehat{y}_t),\ f_t(\widehat{x}_t,\overline{y}_t)\bigr]\boldsymbol{\omega}_t-\bigl[h_t(\widehat{x}_t, \widehat{y}_t),\ h_t(\widehat{x}_t,\overline{y}_t)\bigr]\boldsymbol{\omega}_t \\
&+ \bigl[h_t(\widehat{x}_t, \widehat{y}_t),\ h_t(\widehat{x}_t,\overline{y}_t)\bigr]\boldsymbol{\omega}_t - \bigl[h_t(\widetilde{x}_{t+1}, \widehat{y}_t),\ h_t(\widetilde{x}_{t+1},\overline{y}_t)\bigr]\boldsymbol{\omega}_t \\
&+ \bigl[h_t(\widetilde{x}_{t+1}, \widehat{y}_t),\ h_t(\widetilde{x}_{t+1},\overline{y}_t)\bigr]\boldsymbol{\omega}_t - \bigl[f_t(\widetilde{x}_{t+1}, \widehat{y}_t),\ f_t(\widetilde{x}_{t+1},\overline{y}_t)\bigr]\boldsymbol{\omega}_t \\
&+ \bigl[f_t(\widetilde{x}_{t+1}, \widehat{y}_t),\ f_t(\widetilde{x}_{t+1},\overline{y}_t)\bigr]\boldsymbol{\omega}_t - \bigl[f_t(u_t, \widehat{y}_t),\ f_t(u_t,\overline{y}_t)\bigr]\boldsymbol{\omega}_t.
\end{aligned}
\end{equation*}
By using convexity and first-order optimality conditions, we obtain
\begin{subequations}
\begin{align}
\begin{aligned}
\bigl[h_t(\widehat{x}_t, \widehat{y}_t),\ h_t(\widehat{x}_t,\overline{y}_t)\bigr]\boldsymbol{\omega}_t
&- \bigl[h_t(\widetilde{x}_{t+1}, \widehat{y}_t),\ h_t(\widetilde{x}_{t+1},\overline{y}_t)\bigr]\boldsymbol{\omega}_t \\
&\leq \bigl\langle\bigl[\nabla_x h_t(\widehat{x}_{t},\widehat{y}_{t}),\ \nabla_x h_t(\widehat{x}_{t},\overline{y}_t)\bigl]\boldsymbol{\omega}_t,\ \widehat{x}_{t} - \widetilde{x}_{t+1} \bigr\rangle \\
&\leq \big\langle \widetilde{x}_t^\phi - \widehat{x}_{t}^\phi,\ \widehat{x}_{t} - \widetilde{x}_{t+1} \big\rangle / \eta_t \\
&=\bigl(B_{\phi}\big(\widetilde{x}_{t+1}, \widetilde{x}_t^\phi\big) - B_{\phi}\big(\widetilde{x}_{t+1}, \widehat{x}_t^\phi\big) - B_{\phi}\big(\widehat{x}, \widetilde{x}_t^\phi\big)\bigr) / \eta_t, \,
\end{aligned} \label{eq:1st-opt-h}\\
\begin{aligned}
\bigl[f_t(\widetilde{x}_{t+1}, \widehat{y}_t),\ f_t(\widetilde{x}_{t+1},\overline{y}_t)\bigr]\boldsymbol{\omega}_t &- \bigl[f_t(u_t, \widehat{y}_t),\ f_t(u_t,\overline{y}_t)\bigr]\boldsymbol{\omega}_t \\
& \leq \big\langle\bigl[\nabla_x f_t(\widetilde{x}_{t+1}, \widehat{y}_t),\ \nabla_x f_t(\widetilde{x}_{t+1},\overline{y}_t)\bigl]\boldsymbol{\omega}_t,\ \widetilde{x}_{t+1} - u_t\big\rangle \\
& \leq
\big\langle \widetilde{x}_{t}^\phi - \widetilde{x}_{t+1}^\phi,\ \widetilde{x}_{t+1} - u_t\big\rangle / \eta_t \\
&=\bigl(B_{\phi}\big(u_t, \widetilde{x}_{t}^\phi\big) - B_{\phi}\big(u_t, \widetilde{x}_{t+1}^\phi\big) - B_{\phi}\big(\widetilde{x}_{t+1}, \widetilde{x}_{t}^\phi\big)\bigr) / \eta_t. 
\end{aligned} \label{eq:1st-opt-f}
\end{align}
\end{subequations}
Now we have that 
\begin{equation}
\label{eq:x-metric}
\begin{aligned}
\boldsymbol{A}_t^{1,\colon\!} \boldsymbol{\omega}_t - f_t(u_t, y_t)
\leq\ &\bigl[f_t(\widehat{x}_t, \widehat{y}_t),\ f_t(\widehat{x}_t,\overline{y}_t)\bigr]\boldsymbol{\omega}_t-\bigl[h_t(\widehat{x}_t, \widehat{y}_t),\ h_t(\widehat{x}_t,\overline{y}_t)\bigr]\boldsymbol{\omega}_t \\
&+ \bigl[h_t(\widetilde{x}_{t+1}, \widehat{y}_t),\ h_t(\widetilde{x}_{t+1},\overline{y}_t)\bigr]\boldsymbol{\omega}_t - \bigl[f_t(\widetilde{x}_{t+1}, \widehat{y}_t),\ f_t(\widetilde{x}_{t+1},\overline{y}_t)\bigr]\boldsymbol{\omega}_t \\
&+ \bigl(B_{\phi}\big(u_t, \widetilde{x}_{t}^\phi\big) - B_{\phi}\big(u_t, \widetilde{x}_{t+1}^\phi\big) \bigr) / \eta_t \\
=\ &\bigl(B_{\phi}\big(u_t, \widetilde{x}_{t}^\phi\big) - B_{\phi}\big(u_t, \widetilde{x}_{t+1}^\phi\big)\bigr) / \eta_t + \delta_t^x,
\end{aligned}
\end{equation}
where $\delta_t^x\geq 0$. This can be obtained by adding \cref{eq:1st-opt-h} and the following inequality: 
\begin{equation*}
\begin{aligned}
\bigl[f_t(\widehat{x}_t, \widehat{y}_t),\ f_t(\widehat{x}_t,\overline{y}_t)\bigr]\boldsymbol{\omega}_t + B_{\phi}\big(\widehat{x}_t, \widetilde{x}_{t}^{\phi}\big)/ \eta_t
\geq
\bigl[f_t(\widetilde{x}_{t+1}, \widehat{y}_t),\ f_t(\widetilde{x}_{t+1},\overline{y}_t)\bigr]\boldsymbol{\omega}_t + B_{\phi}\big(\widetilde{x}_{t+1}, \widetilde{x}_{t}^{\phi}\big)/ \eta_t,
\end{aligned}
\end{equation*}
which corresponds to the optimality of $\widetilde{x}_{t+1}$. 
Summing \cref{eq:x-metric} over time yields
\begin{equation*}
\begin{aligned}
\sum_{t=1}^{T} \Bigl(\boldsymbol{A}_t^{1,\colon\!} \boldsymbol{\omega}_t - f_t(u_t, y_t)\Bigr)
\leq \sum_{t=1}^{T} \frac{1}{\eta_t}\Bigl(B_{\phi}\big(u_t, \widetilde{x}_{t}^\phi\big) - B_{\phi}\big(u_t, \widetilde{x}_{t+1}^\phi\big)\Bigr) + \sum_{t=1}^{T} \delta_t^x,
\end{aligned}
\end{equation*}
Due to the non-increasing nature of the learning rate $\eta_t$, $B_\phi$ is upper bounded by $L_{B_\phi} D_X$ and is $L_{B_\phi}$-Lipschitz with respect to its first variable. Therefore, we have that
\begin{equation*}
\begin{aligned}
&\sum_{t=1}^{T} \frac{1}{\eta_t}\Bigl(B_{\phi}\big(u_t, \widetilde{x}_{t}^\phi\big) - B_{\phi}\big(u_t, \widetilde{x}_{t+1}^\phi\big)\Bigr) \\
&\leq\sum_{t=1}^{T}\frac{1}{\eta_t}\left(B_{\phi}\big(u_t, \widetilde{x}_{t}^\phi\big)-B_{\phi}\big(u_{t-1}, \widetilde{x}_{t}^\phi\big)\right) 
+\frac{B_{\phi}\big(u_{0}, \widetilde{x}_{1}^\phi\big)}{\eta_1} +\sum_{t=2}^{T}\left(\frac{1}{\eta_{t}}-\frac{1}{\eta_{t-1}}\right)B_{\phi}\big(u_{t-1}, \widetilde{x}_{t}^\phi\big) \\
&\leq\frac{L_{B_\phi} D_X}{\eta_T}+\sum_{t=1}^{T}\frac{L_{B_\phi}}{\eta_t}\left\lVert u_t-u_{t-1}\right\rVert.
\end{aligned}
\end{equation*}
Applying the prescribed learning rate yields
\begin{equation*}
\begin{aligned}
\sum_{t=1}^{T} \Bigl(\boldsymbol{A}_t^{1,\colon\!} \boldsymbol{\omega}_t - f_t(u_t, y_t)\Bigr)
\leq \frac{L_{B_\phi}}{\eta_T}\left(D_X+P_T^u\right) + \sum_{t=1}^{T} \delta_t^x
\leq \epsilon + 2\sum_{t=1}^{T} \delta_t^x,
\end{aligned}
\end{equation*}
where $P_T^u = \sum_{t=1}^{T} \left\lVert u_t-u_{t-1}\right\rVert\leq D_X T$. 
Note that 
\begin{equation}
\label{eqpf:relax1}
\begin{aligned}
\delta_t^x=\ &\bigl[f_t(\widehat{x}_t, \widehat{y}_t),\ f_t(\widehat{x}_t,\overline{y}_t)\bigr]\boldsymbol{\omega}_t-\bigl[h_t(\widehat{x}_t, \widehat{y}_t),\ h_t(\widehat{x}_t,\overline{y}_t)\bigr]\boldsymbol{\omega}_t \\
&+ \bigl[h_t(\widetilde{x}_{t+1}, \widehat{y}_t),\ h_t(\widetilde{x}_{t+1},\overline{y}_t)\bigr]\boldsymbol{\omega}_t - \bigl[f_t(\widetilde{x}_{t+1}, \widehat{y}_t),\ f_t(\widetilde{x}_{t+1},\overline{y}_t)\bigr]\boldsymbol{\omega}_t \\
\leq\ &2\max_{x\in\{\widehat{x}_t,\overline{x}_t,\widetilde{x}_{t+1}\},y\in\{\widehat{y}_t,\overline{y}_t\}} \left\lvert f_t(x,y)-h_t(x,y)\right\rvert \\
\leq\ & 2\rho(f_t,h_t),
\end{aligned}
\end{equation}
So we have that 
\begin{equation*}
\begin{aligned}
\sum_{t=1}^{T} \Bigl(\boldsymbol{A}_t^{1,\colon\!} \boldsymbol{\omega}_t - f_t(u_t, y_t)\Bigr)
\leq \epsilon + 4\sum_{t=1}^{T} \rho(f_t,h_t).
\end{aligned}
\end{equation*}
Likewise, the upper bound for the left-hand side of the metric is as follows: 
\begin{equation*}
\begin{aligned}
\sum_{t=1}^{T} \Bigl(f_t(x_t, v_t) - \boldsymbol{w}_t^\mathrm{T} \boldsymbol{A}_t^{\colon\!,1}\Bigr)
\leq \epsilon + 4\sum_{t=1}^{T} \rho(f_t,h_t).
\end{aligned}
\end{equation*}
Adding the above two inequalities yields the desired conclusion.
\end{proof}

\subsection{Proof of \cref{thm:meta-layer}}
\begin{proof}
The meta update can be reformulated as follows:
\begin{equation}
\label{eq:meta-layer}
\begin{aligned}
(\boldsymbol{w}_{t},\boldsymbol{\omega}_{t})&=\arg\min\nolimits_{\boldsymbol{w}\in \bigtriangleup_2^\alpha}\max\nolimits_{\boldsymbol{\omega}\in \bigtriangleup_2^\alpha} \boldsymbol{w}^\mathrm{T}\boldsymbol{\mathit{\Lambda}}_{t}\boldsymbol{\omega} +\KL (\boldsymbol{w},\widetilde{\boldsymbol{w}}_t)/\theta_t -\KL (\boldsymbol{\omega},\widetilde{\boldsymbol{\omega}}_t)/\vartheta_t, \\[1.5pt]
\widetilde{\boldsymbol{w}}_{t+1}&=\arg\min\nolimits_{\boldsymbol{w}\in \bigtriangleup_2^\alpha} \langle \theta_t \boldsymbol{A}_t\boldsymbol{\omega}_t, \boldsymbol{w}\rangle+\KL (\boldsymbol{w},\widetilde{\boldsymbol{w}}_t), \\
\widetilde{\boldsymbol{\omega}}_{t+1}&=\arg\max\nolimits_{\boldsymbol{\omega}\in \bigtriangleup_2^\alpha} \langle \vartheta_t \boldsymbol{A}_t^\mathrm{T} \boldsymbol{w}_t, \boldsymbol{\omega}\rangle-\KL (\boldsymbol{\omega},\widetilde{\boldsymbol{\omega}}_t),
\end{aligned}
\end{equation}
where $\alpha=2/T$. 
\cref{eq:meta-layer} corresponds to a bilinear instance of \cref{eq:optoppm}. To leverage Lemma~\ref{lem:OptOPPM}, it is necessary to decompose the static duality gap. Let $\boldsymbol{1}=[1,1]^\mathrm{T}$. By inserting auxiliary representations $\boldsymbol{w} = \alpha\boldsymbol{1}/2 + (1 - \alpha)\boldsymbol{u}\in\bigtriangleup_2^\alpha$ and $\boldsymbol{\omega} = \alpha\boldsymbol{1}/2 + (1 - \alpha)\boldsymbol{v}\in\bigtriangleup_2^\alpha$, we obtain
\begin{equation}
\label{eqpf:meta-decomp}
\begin{aligned}
\sum_{t=1}^T \left(\boldsymbol{w}_t^\mathrm{T} \boldsymbol{A}_t \boldsymbol{v} -  \boldsymbol{u}^\mathrm{T} \boldsymbol{A}_t \boldsymbol{\omega}_t \right)
=\ & \sum_{t=1}^T \left(\boldsymbol{w}_t^\mathrm{T} \boldsymbol{A}_t \boldsymbol{\omega} -  \boldsymbol{w}^\mathrm{T} \boldsymbol{A}_t \boldsymbol{\omega}_t \right) \\
&+ \sum_{t=1}^T \boldsymbol{w}_t^\mathrm{T} \boldsymbol{A}_t (\boldsymbol{v}-\boldsymbol{\omega}) + \sum_{t=1}^T (\boldsymbol{w} - \boldsymbol{u})^\mathrm{T} \boldsymbol{A}_t \boldsymbol{\omega}_t,
\end{aligned}
\end{equation}
where
\begin{equation}
\label{eqpf:meta-const}
\begin{aligned}
\sum_{t=1}^T \boldsymbol{w}_t^\mathrm{T} \boldsymbol{A}_t (\boldsymbol{v}-\boldsymbol{\omega})
&\leq T  \left\lVert \boldsymbol{A}_t\right\rVert_{\infty} \left\lVert\alpha \boldsymbol{v} - \frac{\alpha}{2}\boldsymbol{1}\right\rVert_1 \leq 2\alpha T M = 4 M,\\
\sum_{t=1}^T (\boldsymbol{w} - \boldsymbol{u})^\mathrm{T} \boldsymbol{A}_t \boldsymbol{\omega}_t 
&\leq T \left\lVert\frac{\alpha}{2}\boldsymbol{1} - \alpha \boldsymbol{u}\right\rVert_1 \left\lVert \boldsymbol{A}_t\right\rVert_{\infty} \leq 2\alpha T M = 4 M,
\end{aligned}
\end{equation}
and according to Lemma~\ref{lem:OptOPPM}, 
\begin{equation}
\label{eqpf:meta-using-lemma}
\begin{aligned}
\sum_{t=1}^T \left(\boldsymbol{w}_t^\mathrm{T} \boldsymbol{A}_t \boldsymbol{\omega} -  \boldsymbol{w}^\mathrm{T} \boldsymbol{A}_t \boldsymbol{\omega}_t \right)
\leq O\left(\min\left\{ \sum_{t=1}^{T} \left\lVert\boldsymbol{A}_{t}-\boldsymbol{\mathit{\Lambda}}_{t}\right\rVert_\infty,\, \sqrt{(1 + \ln T) T}\right\}\right).
\end{aligned}
\end{equation}
In applying Lemma~\ref{lem:OptOPPM}, we consider only the static duality gap, implying that the path lengths of comparator sequences are constrained to zero. Additionally, in Lemma~\ref{lem:OptOPPM}, Fenchel couplings are bounded by constants, specifically $B_\phi \leq L_{B_\phi} D_X$ and $B_\psi \leq L_{B_\psi} D_Y$, allowing us to omit the constant terms $L_{B_\phi} D_X$ and $L_{B_\psi} D_Y$ in the D-DGap upper bound. However, in this proof, the KL divergence is bounded by $\ln T$, as demonstrated by the following:
\begin{equation*}
\begin{aligned}
0\leq\KL(\boldsymbol{a},\boldsymbol{b})=\boldsymbol{a}^\mathrm{T}\ln\frac{\boldsymbol{a}}{\boldsymbol{b}}\leq\ln\boldsymbol{a}^\mathrm{T}\frac{\boldsymbol{a}}{\boldsymbol{b}}\leq\ln\left\lVert\frac{\boldsymbol{a}}{\boldsymbol{b}}\right\rVert_\infty\leq\ln T, \qquad\forall\boldsymbol{a},\boldsymbol{b}\in\bigtriangleup_2^\alpha.
\end{aligned}
\end{equation*}
Consequently, the term $\ln T$ cannot be omitted from the upper bound of the static duality gap. 

To obtain the conclusion of this proof, we can further relax the prediction error term in \cref{eqpf:meta-using-lemma}:
\begin{equation}
\label{eqpf:relax2}
\begin{aligned}
\left\lVert\boldsymbol{A}_{t}-\boldsymbol{\mathit{\Lambda}}_{t}\right\rVert_\infty 
=\max_{x\in\{\widehat{x}_t,\overline{x}_t\},y\in\{\widehat{y}_t,\overline{y}_t\}} \left\lvert f_t(x,y)-h_t(x,y)\right\rvert
\leq \rho(f_t,h_t).
\end{aligned}
\end{equation}
Now combining \cref{eqpf:meta-decomp,eqpf:meta-const,eqpf:meta-using-lemma,eqpf:relax2} completes the proof.
\end{proof}

\subsection{Proof of Proposition~\ref{prop:G-Lipschitz}}
\begin{proof}
According to \cref{eq:monotone-G}, we show that
\begin{equation}
\label{eq:monotone-G-detail}
\begin{aligned}
\boldsymbol{G}(\boldsymbol{x})=
\begin{bmatrix}
\eta_t\,\omega\,\nabla_x h_t(x,y)+\eta_t (1-\omega)\,\nabla_x h_t(x,\overline y_t)+\nabla\phi(x)-\nabla\phi(\widetilde x_t)
\\[4pt]
\gamma_t\,w\,\nabla_y (-h_t)(x,y) + \gamma_t (1-w)\,\nabla_y (-h_t)(\overline x_t,y)+\nabla\psi(y)-\nabla\psi(\widetilde y_t)
\\[2pt]
\theta_t\,\omega\,(h_t(x,y)-h_t(\overline x_t,y))+\theta_t (1-\omega)\,(h_t(x,\overline y_t)-h_t(\overline x_t,\overline y_t))+\ln \frac{w(1-\widetilde{w}_t)}{\widetilde{w}_t(1-w)}
\\[2pt]
\vartheta_t\, w\,(h_t(x,\overline y_t)-h_t(x,y))+\vartheta_t (1-w)\,(h_t(\overline x_t,\overline y_t)-h_t(\overline x_t,y))+\ln\frac{\omega(1-\widetilde{\omega}_t)}{\widetilde{\omega}_t (1-\omega)}
\end{bmatrix}.
\end{aligned}
\end{equation}
To establish the Lipschitz continuity of $\boldsymbol{G}$, we split the difference into two parts:
\begin{equation*}
\begin{aligned}
\left\lVert\boldsymbol{G}(\boldsymbol{x}) - \boldsymbol{G}(\boldsymbol{x}')\right\rVert
\leq \left\lVert\boldsymbol{G}(x,y,w,\omega) - \boldsymbol{G}(x',y',w,\omega)\right\rVert
+\left\lVert\boldsymbol{G}(x',y',w,\omega) - \boldsymbol{G}(x',y',w',\omega')\right\rVert.
\end{aligned}
\end{equation*}
We first bound the $(x,y)$-difference. 
Note that $\boldsymbol{G}(x,y,w,\omega) - \boldsymbol{G}(x',y',w,\omega)$ has four block-coordinates involve differences of gradients and function values. 
For example, the first block is
\begin{equation*}
\begin{aligned}
&\left\lVert \eta_t\,\omega(\nabla_x h_t(x,y)-\nabla_x h_t(x',y'))+\eta_t (1-\omega)(\nabla_x h_t(x,\overline y_t)-\nabla_x h_t(x',\overline y_t))+\nabla\phi(x)-\nabla\phi(x') \right\rVert \\
&\leq
\eta_t\,\omega\, (L_{xx}\left\lVert x-x'\right\rVert+L_{xy}\left\lVert y-y'\right\rVert) + \eta_t (1-\omega)\,L_{xx}\left\lVert x-x'\right\rVert+L_\phi\left\lVert x-x'\right\rVert. 
\end{aligned}
\end{equation*}
Squaring and summing the four analogous estimates for the other blocks gives
\begin{equation*}
\begin{aligned}
2\left\lVert\boldsymbol{G}(x,y,w,\omega) - \boldsymbol{G}(x',y',w,\omega)\right\rVert^2
\leq C_x\left\lVert x-x'\right\rVert^2 + C_y\left\lVert y-y'\right\rVert^2,
\end{aligned}
\end{equation*}
where $C_x = 4 \bigl((\eta_t L_{xx}+L_\phi)^2 + \gamma_t^2 L_{yx}^2 + (\theta_t^2 + 4\,\vartheta_t^2)\,G_X^2\bigr)$, and $C_y = 4 \bigl((\gamma_t L_{yy}+L_\psi)^2 + \theta_t^2 L_{xy}^2 + (\vartheta_t^2 + 4\,\theta_t^2)\,G_Y^2\bigr)$. 
Next, we bound the $(w,\omega)$-difference. 
With $(x',y')$ fixed, consider $\boldsymbol{G}(x',y',w,\omega)-\boldsymbol{G}(x',y',w',\omega')$. 
Again bounding each of the four components via the Lipschitz continuity of $h_t$ and the derivative bound $\bigl|\tfrac{d}{du}\ln\tfrac{u}{1-u}\bigr|\leq T$, we obtain
\begin{equation*}
\begin{aligned}
2\left\lVert\boldsymbol{G}(x',y',w,\omega) - \boldsymbol{G}(x',y',w',\omega')\right\rVert^2 
\leq C_w\left\lvert w-w'\right\rvert^2 + C_\omega\left\lvert\omega-\omega'\right\rvert^2,
\end{aligned}
\end{equation*}
where $C_w = 2\,\gamma_t^2 L_{yx}^2 D_X^2 + 4\,\vartheta_t^2 C$, $C_\omega = 2\,\eta_t^2 L_{xy}^2 D_Y^2 + 4 \,\theta_t^2 C$, and $C = \min\big\{D_X^2 (L_{xx}D_X+L_{xy}D_Y)^2,\,D_Y^2 (L_{yx}D_X+L_{yy}D_Y)^2\big\}+T^2$. 
Combining both parts gives
\begin{equation*}
\begin{aligned}
&\left\lVert\boldsymbol{G}(\boldsymbol{x}) - \boldsymbol{G}(\boldsymbol{x}')\right\rVert^2 \\
&\leq \bigl(\left\lVert\boldsymbol{G}(x,y,w,\omega) - \boldsymbol{G}(x',y',w,\omega)\right\rVert
+\left\lVert\boldsymbol{G}(x',y',w,\omega) - \boldsymbol{G}(x',y',w',\omega')\right\rVert\bigr)^2 \\
&\leq 2\left\lVert\boldsymbol{G}(x,y,w,\omega) - \boldsymbol{G}(x',y',w,\omega)\right\rVert^2
+2\left\lVert\boldsymbol{G}(x',y',w,\omega) - \boldsymbol{G}(x',y',w',\omega')\right\rVert^2 \\
&\leq C_x\left\lVert x-x'\right\rVert^2
+C_y\left\lVert y-y'\right\rVert^2
+C_w\left\lvert w-w'\right\rvert^2
+C_\omega\left\lvert\omega-\omega'\right\rvert^2 \\
&\leq L^2\bigl(\left\lVert x-x'\right\rVert^2
+\left\lVert y-y'\right\rVert^2
+\left\lvert w-w'\right\rvert^2
+\left\lvert\omega-\omega'\right\rvert^2\bigr),
\end{aligned}
\end{equation*}
which implies that $\left\lVert\boldsymbol{G}(\boldsymbol{x}) - \boldsymbol{G}(\boldsymbol{x}')\right\rVert\leq L\left\lVert \boldsymbol{x}-\boldsymbol{x}'\right\rVert$. 
\end{proof}

\subsection{Proof of \cref{thm:overall}}
\begin{proof}
In the proofs of \cref{thm:expert-layer,thm:meta-layer}, the relaxed inequalities $\delta_t^x, \delta_t^y\leq 2\rho(f_t,h_t)$ and $\left\lVert\boldsymbol{A}_{t}-\boldsymbol{\mathit{\Lambda}}_{t}\right\rVert_\infty \leq \rho(f_t,h_t)$ are utilized (refer to \cref{eqpf:relax1,eqpf:relax2}). 
However, by appropriately setting the loss vector $\boldsymbol{L}_{t}$, these upper bounds can be tightened further, as follows: 
\begin{equation*}
\begin{aligned}
\delta_t^x, \  \delta_t^y, \  2\left\lVert\boldsymbol{A}_{t}-\boldsymbol{\mathit{\Lambda}}_{t}\right\rVert_\infty
\leq 2\left\langle \boldsymbol{L}_{t},\boldsymbol{\xi}_t\right\rangle.
\end{aligned}
\end{equation*}
Applying Lemma~\ref{lem:Campolongo-Hedge}, we derive: 
\begin{equation*}
\begin{aligned}
\sum_{t=1}^T\left\langle \boldsymbol{L}_{t},\boldsymbol{\xi}_t\right\rangle
&\leq \sum_{t=1}^T\left\langle \boldsymbol{L}_{t},\boldsymbol{1}_k\right\rangle+2\sqrt{2M\left(1+\ln T\right)\sum\nolimits_{t=1}^T\left\langle \boldsymbol{L}_{t},\boldsymbol{1}_k\right\rangle}+O\left(\ln T\right) \\
&=\sum_{t=1}^T\left\langle \boldsymbol{L}_{t},\boldsymbol{1}_k\right\rangle+O\left(\sqrt{\ln T}\right)\sqrt{\sum\nolimits_{t=1}^T\left\langle \boldsymbol{L}_{t},\boldsymbol{1}_k\right\rangle}+O\left(\ln T\right) \\
&\leq 2\sum_{t=1}^T\left\langle \boldsymbol{L}_{t},\boldsymbol{1}_k\right\rangle+O\left(\ln T\right) 
\leq2\sum_{t=1}^{T}\rho\left(f_t, h_t^k\right)+O\left(\ln T\right),\qquad \forall k=1,2,\cdots,d,
\end{aligned}
\end{equation*}
where $\boldsymbol{1}_k$ is a $d$-dimensional one-hot vector with the $k$-th element being 1. 
Given the arbitrariness of $k$, it follows that: 
\begin{equation*}
\begin{aligned}
\sum_{t=1}^T\left\langle \boldsymbol{L}_{t},\boldsymbol{\xi}_t\right\rangle
\leq 2\min_{k\in \{1,2,\cdots,d\}}\sum_{t=1}^{T}\rho\left(f_t, h_t^k\right)+O\left(\ln T\right).
\end{aligned}
\end{equation*}
Therefore, the term $\sum_{t=1}^T\rho(f_t,h_t)$ in the performance bounds of both the meta layer and expert layer can be replaced with \[\widetilde{O}\left(\min_{k \in \{1,2,\cdots,d\}}\sum_{t=1}^T\rho(f_t,h_t^k)\right),\] resulting in the following D-DGap upper bound: 
\begin{equation*}
\begin{aligned}
\ddgap\,(u_{1:T},v_{1:T}) \leq \widetilde{O}\left(\min\left\{\min_{k \in \{1,2,\cdots,d\}}\sum_{t=1}^T \rho(f_t, h_t^k),\ \sqrt{\left(1 + \min\{P_T, C_T\}\right)T}\right\}\right),
\end{aligned}
\end{equation*}
which completes the proof.
\end{proof}

The following lemma can be referred to as the static version of Corollary~B.0.1 in \citet{Campolongo2021closer}.
\begin{lemma}[Static Regret for Clipped Hedge]
\label{lem:Campolongo-Hedge}
Let $\bigtriangleup_d^\alpha$ be a $d$-dimensional $\alpha$-clipped simplex, $T\geq d$ and $\alpha=d/T$. 
Assume that all bounded linear losses satisfy $\boldsymbol{L}_{t}\geq 0$ and $\max_{t\in 1:T}\lVert \boldsymbol{L}_t\rVert_\infty=L_\infty$. 
If $\boldsymbol{\xi}_t$ follows the clipped Hedge: 
\begin{equation*}
\begin{aligned}
\boldsymbol{\xi}_{t+1} = \arg\min_{\boldsymbol{\xi} \in \bigtriangleup_d^a} \zeta_t \left\langle \boldsymbol{L}_t, \boldsymbol{\xi} \right\rangle + \KL(\boldsymbol{\xi}, \boldsymbol{\xi}_t),
\end{aligned}
\end{equation*}
where the learning rate $\zeta_t$ is determined by the following equations: 
\begin{equation*}
\begin{aligned}
\textstyle \zeta_t = (\ln T)\big/\bigl(\epsilon + \sum_{\tau=1}^{t-1}\Delta_{\tau}\bigr), 
\qquad\epsilon >0,\qquad
\Delta_{t} = \left\langle \boldsymbol{L}_{t}, \boldsymbol{\xi}_t-\boldsymbol{\xi}_{t+1}\right\rangle -\KL (\boldsymbol{\xi}_{t+1},\boldsymbol{\xi}_{t})/\zeta_t>0. 
\end{aligned}
\end{equation*}
Then we have that
\begin{equation*}
\begin{aligned}
\sum_{t=1}^T\bigl\langle \boldsymbol{L}_{t}, \boldsymbol{\xi}_{t}-\boldsymbol{u}\bigr\rangle
\leq 2\sqrt{(1+\ln T)L_{\infty}\sum\nolimits_{t=1}^T\bigl\langle \boldsymbol{L}_{t}, \boldsymbol{u}\bigr\rangle}+O\left(\ln T\right),\qquad\forall\boldsymbol{u}\in\bigtriangleup_d. 
\end{aligned}
\end{equation*}
\end{lemma}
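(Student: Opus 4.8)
The statement is the static, fixed-comparator specialization of Corollary~B.0.1 in \citet{Campolongo2021closer} to the negative-entropy regularizer over the clipped simplex, so the plan is to adapt that proof: a standard online-mirror-descent regret bound for the $\KL$-regularized (Hedge) update run with the ``self-confident'' learning rate $\zeta_t$, together with a comparator-clipping reduction. Given $\boldsymbol{u}\in\bigtriangleup_d$, I would first pass to $\widetilde{\boldsymbol{u}}=(1-\alpha)\boldsymbol{u}+\tfrac{\alpha}{d}\boldsymbol{1}\in\bigtriangleup_d^\alpha$ with $\alpha=d/T$: since $\widetilde{\boldsymbol{u}}-\boldsymbol{u}=\alpha(\tfrac1d\boldsymbol{1}-\boldsymbol{u})$ and $\boldsymbol{L}_t\ge0$, both $\sum_t\langle\boldsymbol{L}_t,\widetilde{\boldsymbol{u}}-\boldsymbol{u}\rangle$ and $\sum_t\langle\boldsymbol{L}_t,\widetilde{\boldsymbol{u}}\rangle-\sum_t\langle\boldsymbol{L}_t,\boldsymbol{u}\rangle$ are at most $2\alpha TL_\infty=2dL_\infty$, a quantity to be absorbed into the final $O(\ln T)$. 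Thus it suffices to bound $R:=\sum_t\langle\boldsymbol{L}_t,\boldsymbol{\xi}_t-\widetilde{\boldsymbol{u}}\rangle$ in terms of $\widetilde{L}:=\sum_t\langle\boldsymbol{L}_t,\widetilde{\boldsymbol{u}}\rangle$ and then replace $\widetilde{L}$ by $\sum_t\langle\boldsymbol{L}_t,\boldsymbol{u}\rangle$ at the end.

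Next I would establish the per-round inequality $\langle\boldsymbol{L}_t,\boldsymbol{\xi}_t-\widetilde{\boldsymbol{u}}\rangle\le\Delta_t+\zeta_t^{-1}\bigl(\KL(\widetilde{\boldsymbol{u}},\boldsymbol{\xi}_t)-\KL(\widetilde{\boldsymbol{u}},\boldsymbol{\xi}_{t+1})\bigr)$: first-order optimality of the clipped update, together with the three-point identity for $\KL$ (the Fenchel-coupling identity in the preliminaries), yields $\langle\boldsymbol{L}_t,\boldsymbol{\xi}_{t+1}-\widetilde{\boldsymbol{u}}\rangle\le\zeta_t^{-1}\bigl(\KL(\widetilde{\boldsymbol{u}},\boldsymbol{\xi}_t)-\KL(\widetilde{\boldsymbol{u}},\boldsymbol{\xi}_{t+1})-\KL(\boldsymbol{\xi}_{t+1},\boldsymbol{\xi}_t)\bigr)$, and adding $\langle\boldsymbol{L}_t,\boldsymbol{\xi}_t-\boldsymbol{\xi}_{t+1}\rangle$ to both sides produces precisely $\Delta_t$; moreover $\Delta_t\ge0$ is automatic because $\boldsymbol{\xi}_{t+1}$ is the minimizer and $\boldsymbol{\xi}_t$ is feasible. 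Summing, and using that $\zeta_t^{-1}=(\epsilon+\sum_{\tau<t}\Delta_\tau)/\ln T$ is non-decreasing while $\KL(\,\cdot\,,\boldsymbol{\xi}_t)\le\ln(d/\alpha)=\ln T$ on $\bigtriangleup_d^\alpha$ (its coordinates are $\ge\alpha/d=1/T$, exactly the estimate used in the proof of \cref{thm:meta-layer}), an Abel-summation step bounds the telescoping term by $(\ln T)/\zeta_T=\epsilon+\sum_{\tau<T}\Delta_\tau$, so that $R\le\epsilon+2\sum_t\Delta_t$.

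To control $\sum_t\Delta_t$, I would combine a self-bounding estimate with the self-confident tuning. Writing the clipped Hedge step as the $\KL$-projection onto $\bigtriangleup_d^\alpha$ of the ordinary Hedge iterate $\widehat{\boldsymbol{\xi}}_{t+1}$ (coordinatewise proportional to $\boldsymbol{\xi}_t\exp(-\zeta_t\boldsymbol{L}_t)$; the equivalence recorded after the lemma), expanding $\KL(\boldsymbol{\xi}_{t+1},\boldsymbol{\xi}_t)$ through $\widehat{\boldsymbol{\xi}}_{t+1}$, discarding the favorable $-\KL(\boldsymbol{\xi}_{t+1},\widehat{\boldsymbol{\xi}}_{t+1})\le0$, and then applying $e^{-x}\le1-x+\tfrac12x^2$ and $\ln(1+x)\le x$ to the log-partition function, I get $\Delta_t\le\tfrac12\zeta_t L_\infty\langle\boldsymbol{L}_t,\boldsymbol{\xi}_t\rangle$ --- so clipping only removes a nonnegative term in this step. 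With $S_t:=\epsilon+\sum_{\tau\le t}\Delta_\tau$ and $\zeta_t=(\ln T)/S_{t-1}$ this gives $S_t^2-S_{t-1}^2=2S_{t-1}\Delta_t+\Delta_t^2\le\ln T\cdot L_\infty\langle\boldsymbol{L}_t,\boldsymbol{\xi}_t\rangle+\Delta_t^2$; summing and using $\Delta_t\le\langle\boldsymbol{L}_t,\boldsymbol{\xi}_t\rangle\le L_\infty$ (whence $\sum_t\Delta_t^2\le L_\infty S_T$) leaves $S_T^2-L_\infty S_T\le\epsilon^2+\ln T\cdot L_\infty\sum_t\langle\boldsymbol{L}_t,\boldsymbol{\xi}_t\rangle$, i.e. $\sum_t\Delta_t=S_T-\epsilon\le L_\infty+\sqrt{\ln T\cdot L_\infty\sum_t\langle\boldsymbol{L}_t,\boldsymbol{\xi}_t\rangle}$.

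Finally, inserting $\sum_t\langle\boldsymbol{L}_t,\boldsymbol{\xi}_t\rangle=R+\widetilde{L}$ into $R\le\epsilon+2\sum_t\Delta_t$ produces the self-referential bound $R\le2\sqrt{\ln T\cdot L_\infty(R+\widetilde{L})}+O(\ln T)$, which I would solve as a quadratic in $R$ to obtain $R\le2\sqrt{\ln T\cdot L_\infty\,\widetilde{L}}+O(\ln T)$; then $\widetilde{L}\le\sum_t\langle\boldsymbol{L}_t,\boldsymbol{u}\rangle+dL_\infty$, the $O(\ln T)$ clipping correction from the first step, and the relaxation $\ln T\le1+\ln T$ inside the root give the claimed inequality. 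I expect the main obstacle to be keeping the constants sharp through these last steps: the self-bounding must emerge with coefficient exactly $\tfrac12$ (which is why one works with the genuine Hedge iterate and the quadratic bound on $e^{-x}$ rather than the coarser $\KL\ge\tfrac12\|\cdot\|_1^2$ estimate), and the self-confident telescoping together with the final quadratic must be organized so that the leading term is precisely $2\sqrt{(1+\ln T)L_\infty\sum_t\langle\boldsymbol{L}_t,\boldsymbol{u}\rangle}$ while the clipping error, the $L_\infty$ and $\epsilon$ terms, and the cross terms from solving the quadratic all collapse into the $O(\ln T)$ remainder; everything else is routine mirror-descent bookkeeping.
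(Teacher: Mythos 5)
Your proposal is correct, and it matches the intended route: the paper does not prove this lemma itself but simply invokes it as the static version of Corollary~B.0.1 in \citet{Campolongo2021closer}, and your argument is essentially a faithful reconstruction of that proof --- comparator clipping into $\bigtriangleup_d^\alpha$, the standard KL mirror-descent per-round inequality with the self-confident rate giving $\sum_t\langle\boldsymbol{L}_t,\boldsymbol{\xi}_t-\widetilde{\boldsymbol{u}}\rangle\leq\epsilon+2\sum_t\Delta_t$, the exponential-weights self-bounding estimate $\Delta_t\leq\tfrac12\zeta_t L_\infty\langle\boldsymbol{L}_t,\boldsymbol{\xi}_t\rangle$, and the final self-referential quadratic. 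The only caveats are routine: the derivation needs $\boldsymbol{\xi}_1\in\bigtriangleup_d^\alpha$ and the paper's estimate $\KL\leq\ln T$ on the clipped simplex (both of which you use correctly), and the clipping error $dL_\infty$, $\epsilon$, and the cross terms from solving the quadratic are all legitimately absorbed into the $O(\ln T)$ remainder.
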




\vskip 0.2in
\bibliography{reference}

\begin{thebibliography}{37}
\providecommand{\natexlab}[1]{#1}
\providecommand{\url}[1]{\texttt{#1}}
\expandafter\ifx\csname urlstyle\endcsname\relax
  \providecommand{\doi}[1]{doi: #1}\else
  \providecommand{\doi}{doi: \begingroup \urlstyle{rm}\Url}\fi

\bibitem[Anagnostides et~al.(2022)Anagnostides, Daskalakis, Farina, Fishelson,
  Golowich, and Sandholm]{Anagnostides2022near}
Ioannis Anagnostides, Constantinos Daskalakis, Gabriele Farina, Maxwell
  Fishelson, Noah Golowich, and Tuomas Sandholm.
\newblock Near-optimal no-regret learning for correlated equilibria in
  multi-player general-sum games.
\newblock In \emph{Proceedings of the 54th Annual ACM SIGACT Symposium on
  Theory of Computing}, STOC 2022, pages 736--749, New York, NY, USA, 2022.
  Association for Computing Machinery.
\newblock ISBN 9781450392648.
\newblock \doi{10.1145/3519935.3520031}.

\bibitem[Anagnostides et~al.(2023)Anagnostides, Panageas, Farina, and
  Sandholm]{anagnostides2023on}
Ioannis Anagnostides, Ioannis Panageas, Gabriele Farina, and Tuomas Sandholm.
\newblock On the convergence of no-regret learning dynamics in time-varying
  games.
\newblock In \emph{Thirty-seventh Conference on Neural Information Processing
  Systems}, 2023.

\bibitem[Brezis(2011)]{brezis2010functional}
Haïm Brezis.
\newblock \emph{Functional Analysis, Sobolev Spaces and Partial Differential
  Equations}.
\newblock Universitext. Springer-Verlag New York, 2011.
\newblock ISBN 9780387709130.
\newblock \doi{10.1007/978-0-387-70914-7}.

\bibitem[{Campolongo} and {Orabona}(2021)]{Campolongo2021closer}
Nicol{\`o} {Campolongo} and Francesco {Orabona}.
\newblock {A Closer Look at Temporal Variability in Dynamic Online Learning}.
\newblock \emph{arXiv e-prints}, art. arXiv:2102.07666, February 2021.
\newblock \doi{10.48550/arXiv.2102.07666}.

\bibitem[Cardoso et~al.(2018)Cardoso, Wang, and Xu]{cardoso2018online}
Adrian~Rivera Cardoso, He~Wang, and Huan Xu.
\newblock {The Online Saddle Point Problem and Online Convex Optimization with
  Knapsacks}.
\newblock \emph{arXiv e-prints}, June 2018.
\newblock \doi{10.48550/arXiv.1806.08301}.

\bibitem[Cardoso et~al.(2019)Cardoso, Abernethy, Wang, and
  Xu]{cardoso2019competing}
Adrian~Rivera Cardoso, Jacob Abernethy, He~Wang, and Huan Xu.
\newblock Competing against nash equilibria in adversarially changing zero-sum
  games.
\newblock In Kamalika Chaudhuri and Ruslan Salakhutdinov, editors,
  \emph{Proceedings of the 36th International Conference on Machine Learning},
  volume~97 of \emph{Proceedings of Machine Learning Research}, pages 921--930.
  PMLR, 09--15 Jun 2019.
\newblock URL \url{https://proceedings.mlr.press/v97/cardoso19a.html}.

\bibitem[Chaudhuri and Tewari(2017)]{Chaudhuri2017online}
Sougata Chaudhuri and Ambuj Tewari.
\newblock Online learning to rank with top-k feedback.
\newblock \emph{Journal of Machine Learning Research}, 18\penalty0
  (103):\penalty0 1--50, 2017.
\newblock URL \url{http://jmlr.org/papers/v18/16-285.html}.

\bibitem[Chen et~al.(2017)Chen, Ling, and Giannakis]{chen2017online}
Tianyi Chen, Qing Ling, and Georgios~B. Giannakis.
\newblock An online convex optimization approach to proactive network resource
  allocation.
\newblock \emph{IEEE Transactions on Signal Processing}, 65\penalty0
  (24):\penalty0 6350--6364, 2017.
\newblock \doi{10.1109/TSP.2017.2750109}.

\bibitem[Daskalakis et~al.(2015)Daskalakis, Deckelbaum, and
  Kim]{DASKALAKIS2015327}
Constantinos Daskalakis, Alan Deckelbaum, and Anthony Kim.
\newblock Near-optimal no-regret algorithms for zero-sum games.
\newblock \emph{Games and Economic Behavior}, 92:\penalty0 327--348, 2015.
\newblock ISSN 0899-8256.
\newblock \doi{10.1016/j.geb.2014.01.003}.

\bibitem[Daskalakis et~al.(2021)Daskalakis, Fishelson, and
  Golowich]{daskalakis2021nearoptimal}
Constantinos~Costis Daskalakis, Maxwell Fishelson, and Noah Golowich.
\newblock Near-optimal no-regret learning in general games.
\newblock In A.~Beygelzimer, Y.~Dauphin, P.~Liang, and J.~Wortman Vaughan,
  editors, \emph{Advances in Neural Information Processing Systems}, 2021.

\bibitem[Fiez et~al.(2021)Fiez, Sim, SKOULAKIS, Piliouras, and
  Ratliff]{fiez2021online}
Tanner Fiez, Ryann Sim, EFSTRATIOS~PANTELEIMON SKOULAKIS, Georgios Piliouras,
  and Lillian~J Ratliff.
\newblock Online learning in periodic zero-sum games.
\newblock In A.~Beygelzimer, Y.~Dauphin, P.~Liang, and J.~Wortman Vaughan,
  editors, \emph{Advances in Neural Information Processing Systems}, 2021.

\bibitem[Freund and Schapire(1999)]{FREUND199979}
Yoav Freund and Robert~E. Schapire.
\newblock Adaptive game playing using multiplicative weights.
\newblock \emph{Games and Economic Behavior}, 29\penalty0 (1):\penalty0
  79--103, 1999.
\newblock ISSN 0899-8256.
\newblock \doi{10.1006/game.1999.0738}.

\bibitem[Guo et~al.(2021)Guo, Gu, and Ching]{GUO2021Adaptive}
Sini Guo, Jia-Wen Gu, and Wai-Ki Ching.
\newblock Adaptive online portfolio selection with transaction costs.
\newblock \emph{European Journal of Operational Research}, 295\penalty0
  (3):\penalty0 1074--1086, 2021.
\newblock ISSN 0377-2217.
\newblock \doi{10.1016/j.ejor.2021.03.023}.
\newblock URL
  \url{https://www.sciencedirect.com/science/article/pii/S0377221721002496}.

\bibitem[Herbster and Warmuth(2001)]{herbster2001tracking}
Mark Herbster and Manfred~K Warmuth.
\newblock Tracking the best linear predictor.
\newblock \emph{Journal of Machine Learning Research}, 1:\penalty0 281--309,
  2001.

\bibitem[{Ho-Nguyen} and {Kılınç-Karzan}(2019)]{nguyen2019exploiting}
Nam {Ho-Nguyen} and Fatma {Kılınç-Karzan}.
\newblock Exploiting problem structure in optimization under uncertainty via
  online convex optimization.
\newblock \emph{Mathematical Programming}, 177\penalty0 (1):\penalty0 113--147,
  Sep 2019.
\newblock ISSN 1436-4646.
\newblock \doi{10.1007/s10107-018-1262-8}.

\bibitem[{Lu} and {Zhang}(2019)]{lu2019adaptive}
Shiyin {Lu} and Lijun {Zhang}.
\newblock {Adaptive and Efficient Algorithms for Tracking the Best Expert}.
\newblock \emph{arXiv e-prints}, September 2019.
\newblock \doi{10.48550/arXiv.1909.02187}.

\bibitem[Lu et~al.(2023)Lu, Miao, Yang, Hu, and Zhang]{lu2023nonstationary}
Shiyin Lu, Yuan Miao, Ping Yang, Yao Hu, and Lijun Zhang.
\newblock Non-stationary dueling bandits for online learning to rank.
\newblock In Bohan Li, Lin Yue, Chuanqi Tao, Xuming Han, Diego Calvanese, and
  Toshiyuki Amagasa, editors, \emph{Web and Big Data}, pages 166--174, Cham,
  2023. Springer Nature Switzerland.
\newblock ISBN 978-3-031-25198-6.
\newblock \doi{10.1007/978-3-031-25198-6_13}.

\bibitem[Meng and Liu(2025)]{meng2025proximal}
Qing-xin Meng and Jian-wei Liu.
\newblock Proximal point method for online saddle point problem.
\newblock In Mufti Mahmud, Maryam Doborjeh, Kevin Wong, Andrew Chi~Sing Leung,
  Zohreh Doborjeh, and M.~Tanveer, editors, \emph{Neural Information
  Processing}, pages 399--414, Singapore, 2025. Springer Nature Singapore.
\newblock ISBN 978-981-96-6579-2.
\newblock \doi{10.1007/978-981-96-6579-2_27}.

\bibitem[Mertikopoulos and Sandholm(2016)]{Mertikopoulos2016Learning}
Panayotis Mertikopoulos and William~H. Sandholm.
\newblock Learning in games via reinforcement and regularization.
\newblock \emph{Mathematics of Operations Research}, 41\penalty0 (4):\penalty0
  1297--1324, 2016.
\newblock \doi{10.1287/moor.2016.0778}.

\bibitem[{Mertikopoulos} and {Zhou}(2016)]{2016arXiv160807310M}
Panayotis {Mertikopoulos} and Zhengyuan {Zhou}.
\newblock {Learning in games with continuous action sets and unknown payoff
  functions}.
\newblock \emph{arXiv e-prints}, art. arXiv:1608.07310, August 2016.
\newblock \doi{10.48550/arXiv.1608.07310}.

\bibitem[Nesterov and Scrimali(2006)]{Nesterov2006solving}
Yurii Nesterov and Laura Scrimali.
\newblock Solving strongly monotone variational and quasi-variational
  inequalities.
\newblock \emph{CORE Discussion Paper}, 2006/107, 2006.
\newblock \doi{10.2139/ssrn.970903}.

\bibitem[{Roy} et~al.(2019){Roy}, {Chen}, {Balasubramanian}, and
  {Mohapatra}]{Roy2019online}
Abhishek {Roy}, Yifang {Chen}, Krishnakumar {Balasubramanian}, and Prasant
  {Mohapatra}.
\newblock {Online and Bandit Algorithms for Nonstationary Stochastic
  Saddle-Point Optimization}.
\newblock \emph{arXiv e-prints}, December 2019.
\newblock \doi{10.48550/arXiv.1912.01698}.

\bibitem[Schapire et~al.(1995)Schapire, Cesa-Bianchi, Auer, and
  Freund]{schapire1995gambling}
R.~Schapire, N.~Cesa-Bianchi, P.~Auer, and Y.~Freund.
\newblock Gambling in a rigged casino: The adversarial multi-armed bandit
  problem.
\newblock In \emph{2013 IEEE 54th Annual Symposium on Foundations of Computer
  Science}, page 322, Los Alamitos, CA, USA, October 1995. IEEE Computer
  Society.
\newblock \doi{10.1109/SFCS.1995.492488}.

\bibitem[Scroccaro et~al.(2023)Scroccaro, Kolarijani, and
  Esfahani]{Scroccaro2023Adaptive}
Pedro~Zattoni Scroccaro, Arman~Sharifi Kolarijani, and Peyman~Mohajerin
  Esfahani.
\newblock Adaptive composite online optimization: Predictions in static and
  dynamic environments.
\newblock \emph{IEEE Transactions on Automatic Control}, 68\penalty0
  (5):\penalty0 2906--2921, 2023.
\newblock \doi{10.1109/TAC.2023.3237486}.

\bibitem[{Shalev-Shwartz}(2012)]{shwartz2012online}
Shai {Shalev-Shwartz}.
\newblock Online learning and online convex optimization.
\newblock \emph{Foundations and Trends® in Machine Learning}, 4\penalty0
  (2):\penalty0 107--194, 2012.
\newblock ISSN 1935-8237.
\newblock \doi{10.1561/2200000018}.

\bibitem[Syrgkanis et~al.(2015)Syrgkanis, Agarwal, Luo, and
  Schapire]{Syrgkanis2015fast}
Vasilis Syrgkanis, Alekh Agarwal, Haipeng Luo, and Robert~E. Schapire.
\newblock Fast convergence of regularized learning in games.
\newblock In \emph{Proceedings of the 28th International Conference on Neural
  Information Processing Systems - Volume 2}, NIPS'15, pages 2989--2997,
  Cambridge, MA, USA, 2015. MIT Press.
\newblock \doi{10.5555/2969442.2969573}.

\bibitem[{van Erven} and Koolen(2016)]{erven2016metagrad}
Tim {van Erven} and Wouter~M Koolen.
\newblock Metagrad: Multiple learning rates in online learning.
\newblock In D.~Lee, M.~Sugiyama, U.~Luxburg, I.~Guyon, and R.~Garnett,
  editors, \emph{Advances in Neural Information Processing Systems}, volume~29.
  Curran Associates, Inc., 2016.
\newblock URL
  \url{https://proceedings.neurips.cc/paper/2016/file/14cfdb59b5bda1fc245aadae15b1984a-Paper.pdf}.

\bibitem[{von}~Neumann(1928)]{vNeumann1928Zur}
John {von}~Neumann.
\newblock Zur theorie der gesellschaftsspiele.
\newblock \emph{Mathematische Annalen}, 100\penalty0 (1):\penalty0 295--320,
  Dec 1928.
\newblock ISSN 1432-1807.
\newblock \doi{10.1007/BF01448847}.

\bibitem[Zhang(2020)]{zhang2020online}
Lijun Zhang.
\newblock Online learning in changing environments.
\newblock In Christian Bessiere, editor, \emph{Proceedings of the Twenty-Ninth
  International Joint Conference on Artificial Intelligence, {IJCAI-20}}, pages
  5178--5182. International Joint Conferences on Artificial Intelligence
  Organization, 7 2020.
\newblock \doi{10.24963/ijcai.2020/731}.
\newblock Early Career.

\bibitem[Zhang et~al.(2018)Zhang, Lu, and Zhou]{zhang2018adaptive}
Lijun Zhang, Shiyin Lu, and Zhi-Hua Zhou.
\newblock Adaptive online learning in dynamic environments.
\newblock In S.~Bengio, H.~Wallach, H.~Larochelle, K.~Grauman, N.~Cesa-Bianchi,
  and R.~Garnett, editors, \emph{Advances in Neural Information Processing
  Systems}, volume~31, pages 1323--1333. Curran Associates, Inc., 2018.
\newblock URL
  \url{https://proceedings.neurips.cc/paper/2018/file/10a5ab2db37feedfdeaab192ead4ac0e-Paper.pdf}.

\bibitem[Zhang et~al.(2021)Zhang, Jiang, Lu, and Yang]{zhang2021revisiting}
Lijun Zhang, Wei Jiang, Shiyin Lu, and Tianbao Yang.
\newblock Revisiting smoothed online learning.
\newblock In M.~Ranzato, A.~Beygelzimer, Y.~Dauphin, P.S. Liang, and J.~Wortman
  Vaughan, editors, \emph{Advances in Neural Information Processing Systems},
  volume~34, pages 13599--13612. Curran Associates, Inc., 2021.
\newblock URL
  \url{https://proceedings.neurips.cc/paper/2021/file/70fc5f043205720a49d973d280eb83e7-Paper.pdf}.

\bibitem[Zhang et~al.(2022{\natexlab{a}})Zhang, Wang, Yi, and
  Yang]{zhang2022simple}
Lijun Zhang, Guanghui Wang, Jinfeng Yi, and Tianbao Yang.
\newblock A simple yet universal strategy for online convex optimization.
\newblock In Kamalika Chaudhuri, Stefanie Jegelka, Le~Song, Csaba Szepesvari,
  Gang Niu, and Sivan Sabato, editors, \emph{Proceedings of the 39th
  International Conference on Machine Learning}, volume 162 of
  \emph{Proceedings of Machine Learning Research}, pages 26605--26623. PMLR,
  17--23 Jul 2022{\natexlab{a}}.
\newblock URL \url{https://proceedings.mlr.press/v162/zhang22af.html}.

\bibitem[Zhang et~al.(2022{\natexlab{b}})Zhang, Zhao, Luo, and
  Zhou]{zhang2022noregret}
Mengxiao Zhang, Peng Zhao, Haipeng Luo, and Zhi-Hua Zhou.
\newblock No-regret learning in time-varying zero-sum games.
\newblock In Kamalika Chaudhuri, Stefanie Jegelka, Le~Song, Csaba Szepesvari,
  Gang Niu, and Sivan Sabato, editors, \emph{Proceedings of the 39th
  International Conference on Machine Learning}, volume 162 of
  \emph{Proceedings of Machine Learning Research}, pages 26772--26808. PMLR,
  17--23 Jul 2022{\natexlab{b}}.
\newblock URL \url{https://proceedings.mlr.press/v162/zhang22an.html}.

\bibitem[Zhao et~al.(2020)Zhao, Zhang, Zhang, and Zhou]{zhao2020dynamic}
Peng Zhao, Yu-Jie Zhang, Lijun Zhang, and Zhi-Hua Zhou.
\newblock Dynamic regret of convex and smooth functions.
\newblock In H.~Larochelle, M.~Ranzato, R.~Hadsell, M.~F. Balcan, and H.~Lin,
  editors, \emph{Advances in Neural Information Processing Systems}, volume~33,
  pages 12510--12520. Curran Associates, Inc., 2020.
\newblock URL
  \url{https://proceedings.neurips.cc/paper/2020/file/939314105ce8701e67489642ef4d49e8-Paper.pdf}.

\bibitem[{Zhao} et~al.(2021){Zhao}, {Zhang}, {Zhang}, and
  {Zhou}]{zhao2021adaptivity}
Peng {Zhao}, Yu-Jie {Zhang}, Lijun {Zhang}, and Zhi-Hua {Zhou}.
\newblock {Adaptivity and Non-stationarity: Problem-dependent Dynamic Regret
  for Online Convex Optimization}.
\newblock \emph{arXiv e-prints}, December 2021.
\newblock \doi{10.48550/arXiv.2112.14368}.

\bibitem[Zhao et~al.(2022)Zhao, Xie, Zhang, and Zhou]{zhao2022efficient}
Peng Zhao, Yan-Feng Xie, Lijun Zhang, and Zhi-Hua Zhou.
\newblock Efficient methods for non-stationary online learning.
\newblock In S.~Koyejo, S.~Mohamed, A.~Agarwal, D.~Belgrave, K.~Cho, and A.~Oh,
  editors, \emph{Advances in Neural Information Processing Systems}, volume~35,
  pages 11573--11585. Curran Associates, Inc., 2022.
\newblock URL
  \url{https://proceedings.neurips.cc/paper_files/paper/2022/hash/4b70484ebef62484e0c8cdd269e482fd-Abstract.html}.

\bibitem[Zinkevich(2003)]{zinkevich2003online}
Martin Zinkevich.
\newblock Online convex programming and generalized infinitesimal gradient
  ascent.
\newblock In Tom Fawcett and Nina Mishra, editors, \emph{Proceedings of the
  Twentieth International Conference on Machine Learning}, ICML'03, page
  928–935. AAAI Press, 2003.
\newblock ISBN 1577351894.

\end{thebibliography}

\end{document}